\newcommand{\neurips}[1]{\iftoggle{neurips}{#1}{}}
\newcommand{\arxiv}[1]{\iftoggle{neurips}{}{#1}}
\newcommand{\nstyle}{\neurips{\textstyle}}
\renewcommand{\eqref}[1]{\texorpdfstring{\hyperref[#1]{Eq. (\ref*{#1})}}{Eq. (\ref*{#1})}}
\Crefname{assumption}{Assumption}{Assumptions}
    \let\Cref\crtCref
    \let\cref\crtcref
\newtheorem{theorem}{Theorem}
\newtheorem{lemma}[theorem]{Lemma}
\newtheorem{corollary}[theorem]{Corollary}
\newtheorem{assumption}{Assumption}
\newtheorem{proposition}[theorem]{Proposition}
\theoremstyle{definition}
\newtheorem{example}{Example}
\newtheorem{remark}[theorem]{Remark}
\newcommand{\exend}{\hfill\ensuremath{\triangleleft}}
\newcolumntype{H}{>{\setbox0=\hbox\bgroup}c<{\egroup}@{}}
\newcolumntype{Z}{>{\setbox0=\hbox\bgroup}c<{\egroup}@{\hspace*{-\tabcolsep}}}
\newcommand{\Ncov}{\mathsf{N}}
\newcommandx{\DV}[2][1=\Delta]{\Ncov_{\mathrm{frac}}(#2,#1)}
\newcommand{\dct}{fractional covering number}
\newcommand{\Dct}{Fractional covering number}
\newcommand{\ddim}{\dct\xspace}
\newcommand{\Ddim}{\Dct\xspace}
\newcommandx\gm[4][1=M]{\ifthenelse{\equal{#2}{(}}{L(#1,#3)}{#2#3#4}}
\newcommandx{\LM}[1][1=M]{L}
\newcommandx{\Lm}[2][1=M]{L(#1,#2)}
\newcommandx{\hLm}[2][1=M]{\hat{L}\subs{\delta}(#1,#2)}
\renewcommand{\P}{\mathbb{P}}
\newcommand{\A}{\mathcal{A}}
\newcommand{\M}{\mathcal{M}}
\newcommand{\alg}{\texttt{\textup{ALG}}}
\newcommand{\E}{\mathbb{E}}
\newcommand{\R}{\mathbb{R}} %
\newcommand{\Hy}{\mathcal{H}}
\newcommand{\lsim}{{\;\raise0.3ex\hbox{$<$\kern-0.75em\raise-1.1ex\hbox{$\sim$}}\;}}
\newcommand{\gsim}{{\;\raise0.3ex\hbox{$>$\kern-0.75em\raise-1.1ex\hbox{$\sim$}}\;}}
\newcommand{\eps}{\varepsilon} 
\newcommand{\RNum}[1]{\uppercase\expandafter{\romannumeral #1\relax}}
\DeclareMathOperator*{\argmin}{arg\,min}
\DeclareMathOperator*{\argmax}{arg\,max}
\newcommand{\KL}{D_{\mathrm{KL}}}
\newcommand{\lp}{\left(}
\newcommand{\rp}{\right)}
\newcommand{\lb}{\left[}
\newcommand{\rb}{\right]}
\newcommand{\tO}{\tilde{O}}
\newcommand{\dec}{\texttt{\textup{DEC}}}
\newcommand{\cA}{\mathcal{A}}
\newcommand{\cC}{\mathcal{C}}
\newcommand{\cD}{\mathcal{D}}
\newcommand{\cE}{\mathcal{E}}
\newcommand{\cF}{\mathcal{F}}
\newcommand{\cH}{\mathcal{H}}
\newcommand{\cL}{\mathcal{L}}
\newcommand{\cM}{\mathcal{M}}
\newcommand{\cO}{\mathcal{O}}
\newcommand{\cP}{\mathcal{P}}
\newcommand{\cX}{\mathcal{X}}
\newcommand{\cY}{\mathcal{Y}}
\newcommand{\bP}{\mathbb{P}}
\newcommand{\bQ}{\mathbb{Q}}
\newcommand{\bR}{\mathbb{R}}
\newcommand{\En}{\mathbb{E}}
\DeclareFontFamily{U}{mathx}{\hyphenchar\font45}
\DeclareFontShape{U}{mathx}{m}{n}{<-> mathx10}{}
\DeclareSymbolFont{mathx}{U}{mathx}{m}{n}
\DeclareMathAccent{\widebar}{0}{mathx}{"73}
\newcommand{\wh}[1]{\widehat{#1}}
\newcommand{\wb}[1]{\widebar{#1}}
\newcommand{\veps}{\varepsilon}
\newcommand{\ldef}{\vcentcolon=}
\newcommand{\Unif}{\mathrm{Unif}}
\newcommand{\TV}{\mathrm{TV}}
\newcommand{\Dkl}[2]{D_{\mathrm{KL}}\prn*{#1\,\|\,#2}}
\newcommand{\Dhels}[2]{D^{2}_{\mathrm{H}}\prn*{#1,#2}}
\newcommand{\Tr}{\mathrm{Tr}}
\newcommand{\conv}{\mathrm{co}}
\newcommand{\supp}{\mathrm{supp}}
\DeclarePairedDelimiter{\abs}{\lvert}{\rvert} %
\DeclarePairedDelimiter{\brk}{[}{]}
\DeclarePairedDelimiter{\crl}{\{}{\}}
\DeclarePairedDelimiter{\prn}{(}{)}
\DeclarePairedDelimiter{\norm}{\|}{\|}
\DeclarePairedDelimiter{\set}{\{}{\}}
\DeclarePairedDelimiter{\floor}{\lfloor}{\rfloor}
\def\medskip{\vskip 10 pt}
\def\bigskip{\vskip 15 pt}
\newcommand{\sups}[1]{^{{\scriptscriptstyle#1}}}
\newcommand{\subs}[1]{_{{\scriptscriptstyle#1}}}
\newcommandx{\Enmpi}[2][1=M,2=\pi]{\En\sups{#1,#2}}
\newcommandx{\Emalg}[2][1=M,2=\alg]{\EE\sups{#1,#2}}
\newcommandx{\Pmalg}[2][1=M,2=\alg]{\PP\sups{#1,#2}}
\newcommandx{\fm}[1][1=M]{f\sups{#1}}
\newcommandx{\pim}[1][1=M]{\pi\subs{#1}}
\newcommand{\pihat}{\widehat{\pi}}
\newcommand{\risk}{\mathbf{Risk}_{\mathsf{DM}}}
\newcommand{\pdecq}{\normalfont{\textsf{p-dec}}^{\rm q}}
\newcommand{\pdecc}{\normalfont{\textsf{p-dec}}^{\rm c}}
\newcommand{\co}{\operatorname{co}}
\newcommand{\regdm}{\mathbf{Reg}_{\mathsf{DM}}}
\newcommand{\rdecq}{\normalfont{\textsf{r-dec}}^{\rm q}}
\newcommand{\rdecc}{\normalfont{\textsf{r-dec}}^{\rm c}}
\newcommand{\rdeco}{\normalfont{\textsf{r-dec}}^{\rm o}}
\newcommand{\pdeco}{\normalfont{\textsf{p-dec}}^{\rm o}}
\newcommand{\Lipr}{C_{\rm r}}
\newcommand{\DPi}{\Delta(\Pi)}
\newcommand{\DM}{\Delta(\cM)}
\newcommand{\coM}{\co(\cM)}
\newcommand{\Mstar}{M^\star}
\newcommand{\Mbar}{\wb{M}}
\newcommand{\riskalg}[1]{\Delta^\star_{#1,\delta}}
\newcommand{\Lbname}{Interactive Fano method\xspace}
\newcommand{\lbname}{interactive Fano method\xspace}
\newcommand{\alglower}{\lbname}
\newcommand{\framework}{Interactive Statistical Decision Making\xspace}
\newcommand{\frameworkshort}{ISDM\xspace}
\newcommand{\Cxt}{\cC}
\newcommand{\cxt}{c}
\newcommand{\Val}{\cH}
\newcommandx{\cMF}[1][1=\Val]{\cM_{#1,\mathbb{V}}}
\newcommand{\val}{h}
\newcommand{\vals}{\val_\star}
\newcommand{\bval}{\wb{\val}}
\newcommand{\DX}{\Delta(\Cxt)}
\newcommand{\DA}{\Delta(\cA)}
\newcommand{\fom}{\val\sups{\oM}}
\newcommand{\onu}{\Bar{\nu}}
\newcommand{\unif}{\operatorname{Unif}}
\newcommandx{\cMFG}[1][1=\Val]{\cM_{#1}}
\newcommandx{\ah}[1][1=\val]{\pi_{#1}}
\newcommandx{\pih}[1][1=\val]{\pi_{#1}}
\newcommandx{\regs}[2][1=\cM]{\mathbf{Reg}^\star(#1,#2)}
\newcommand{\Tdec}[2]{T^{\dec}(#1,#2)}
\newcommandx{\Ts}[2][1=\cM]{T^\star(#1,#2)}
\newcommand{\estf}[1]{\log\DV[#1]{\Val}}
\newcommandx{\Tsf}[2][1=\cMFG]{T^\star(#1,#2)}
\newcommand{\coH}{\co(\Val)}
\newcommand{\id}{I}
\newcommand{\tM}{\tilde{M}}
\newcommand{\NN}{\mathbb{N}}
\newcommand{\EE}{\mathbb{E}}
\newcommand{\PP}{\mathbb{P}}
\newcommand{\QQ}{\mathbb{Q}}
\newcounter{cnt}
\xdef \csname c\Alph{cnt}\endcsname {\noexpand\mathcal{\Alph{cnt}}}%
\xdef \csname b\Alph{cnt}\endcsname {\noexpand\mathbb{\Alph{cnt}}}%
\newcommand{\leqsim}{\lesssim}
\newcommand{\geqsim}{\gtrsim}
\newcommand{\iprod}[2]{\left\langle #1, #2 \right\rangle}
\newcommand{\nrm}[1]{\left\|#1\right\|}
\newcommand{\cond}[2]{\mathbb{E}\left[\left.#1\right|#2\right]}
\newcommand{\bigO}[1]{O\left(#1\right)}
\newcommand{\tbO}[1]{\widetilde{O}\left(#1\right)}
\newcommand{\Om}[1]{\Omega\left(#1\right)}
\DeclarePairedDelimiterX{\ddiv}[2]{(}{)}{%
  #1\;\delimsize\|\;#2%
}
\newcommand{\KLd}{\KL\ddiv}
\newcommand{\hpi}{\pihat}
\newcommand{\ltwo}[1]{\norm{#1}_2} %
\newcommand{\indic}[1]{\mathbf{1}\left\{#1\right\}} %
\newcommand{\<}{\left\langle}
\renewcommand{\>}{\right\rangle}
\newcommand{\dH}{D_{\rm H}}
\newcommand{\DHr}[1]{D_{\rm H}\paren{#1}}
\renewcommand{\DH}[1]{D_{\mathrm{H}}^2\left(#1\right)}
\newcommand{\DTV}[1]{D_{\mathrm{TV}}\left(#1\right)}
\newcommand{\hM}{\widehat{M}}
\newcommand{\oM}{\Mbar}
\newcommand{\Ms}{\Mstar}
\newcommand{\pis}{\pi^\star}
\newcommand{\pb}{\Bar{p}}
\newcommand{\qb}{\Bar{q}}
\newcommand{\oeps}{\bar{\eps}}
\newcommand{\ueps}{\uline{\eps}}
\newcommand{\deco}{\mathrm{dec}^{\rm o}}
\newcommand{\defeq}{\mathrel{\mathop:}=}
\newcommand{\dTV}{D_{\mathrm{TV}}}
\newcommand{\CKL}{C_{\rm KL}}
\newcommand{\paren}[1]{{\left( #1 \right)}}
\newcommand{\brac}[1]{{\left[ #1 \right]}}
\newcommand{\normal}[1]{\mathcal{N}\paren{#1}}
\newcommand{\sset}[1]{\left\{#1\right\}}
\newcommand{\sabs}[1]{\left|#1\right|}
\newcommand{\ExO}{Exploration-by-Optimization}
\newcommand{\ExOp}{$\mathsf{ExO}^+$}
\newcommand{\fMss}{f\sups{\Ms}(\pi\subs{\Ms})}
\newcommand{\fMs}{f\sups{\Ms}}
\newcommand{\exo}[1]{\normalfont{\mathsf{exo}}_{1/#1}}
\newcommand{\qs}{q^\star}
\newcommand{\Pis}{\Pi^\star}
\newcommand{\sumt}{\sum_{t=1}^T}
\newcommand{\Err}{\mathrm{Err}}
\newcommand{\cMp}{\cM\cup\set{\oM}}
\newcommand{\Bern}[1]{\mathrm{Bern}(#1)}
\newcommand{\cfedit}[1]{#1}
\newcommand{\loose}{\looseness=-1}
\renewcommand{\hat}[1]{\widehat{#1}}
\setlist[itemize]{leftmargin=*}
\title{A Unified Approach to Lower Bounds for Interactive Decision Making}
\title{Generalizing Assouad, Fano, Le Cam, and Friends: \\\mbox{Unifying Lower Bounds for Interactive Decision Making}}
\title{Unifying Lower Bounds for Statistical Estimation and Interactive Decision Making}
\title{Beyond Assouad, Fano, and Le Cam: \\Unifying Lower Bounds for Statistical Estimation and Interactive Decision Making}
\title{Beyond Assouad, Fano, and Le Cam: \\Toward Unified Lower Bounds for Statistical Estimation and Interactive Decision Making}
\title{\mbox{Unifying Lower Bounds for Estimation and Decision Making:}\\
Characterizing Bandit Learnability and Beyond}
\title{\mbox{Unifying Lower Bounds for Estimation and Decision Making:}\\
Characterizing Bandit Learnability and Beyond}
\title{Unifying Assouad, Fano, and Le Cam:\\
A Characterization for Bandit Learnability and Beyond}
\title{Beyond Assouad, Fano, and Le Cam:\\
\mbox{Lower Bounds for Estimation and Decision Making}\\ and a Characterization of Bandit Learnability}
\title{Assouad, Fano, and Le Cam with Interaction: \\A Unifying Lower Bound Framework \\and Characterization for Bandit Learnability}
\author{Fan Chen \\ MIT \\ {\small\texttt{fanchen@mit.edu}} \\
\And Dylan J. Foster \\ Microsoft Research \\
   {\small\texttt{dylanfoster@microsoft.com}} \\
\And Yanjun Han \\ New York University \\
   {\small\texttt{yanjunhan@nyu.edu}} \\
\AND Jian Qian \\ MIT \\
  {\small\texttt{jianqian@mit.edu}} \\
\And Alexander Rakhlin \\ MIT \\
  {\small\texttt{rakhlin@mit.edu}} \\
\And
  Yunbei Xu \\ National University of Singapore \\
  {\small\texttt{yunbei@nus.edu.sg}}
}
\author{Fan Chen\thanks{Massachusetts Institute of Technology.}\\
   {\small\texttt{fanchen@mit.edu}}
  \and
    Dylan J. Foster\thanks{Microsoft Research.}\\
   {\small\texttt{dylanfoster@microsoft.com}}
  \and
   Yanjun Han\thanks{New York University.}\\
   {\small\texttt{yanjunhan@nyu.edu}}
  \and
  Jian Qian\footnotemark[1]\\
  {\small\texttt{jianqian@mit.edu}}
  \and
  Alexander Rakhlin\footnotemark[1]\\
  {\small\texttt{rakhlin@mit.edu}}
  \and
  Yunbei Xu\thanks{National University of Singapore.}\\
  {\small\texttt{yunbei@nus.edu.sg}}
}
\date{}
\begin{document}

\maketitle

\begin{abstract}
  We develop a unifying framework for information-theoretic lower bound in statistical estimation and interactive decision making. Classical lower bound techniques---such as Fano's method, Le Cam's method, and Assouad's lemma---are central to the study of minimax risk in statistical estimation, yet are insufficient to provide tight lower bounds for \emph{interactive decision making} algorithms that collect data interactively (e.g., algorithms for bandits and reinforcement learning). Recent work of \citet{foster2021statistical,foster2023tight} provides minimax lower bounds for interactive decision making using seemingly different analysis techniques from the classical methods. These results---which are proven using a complexity measure known as the \emph{Decision-Estimation Coefficient} (DEC)---capture difficulties unique to interactive learning, yet do not recover the tightest known lower bounds for passive estimation. We propose a unified view of these distinct methodologies through a new lower bound approach called \emph{interactive Fano method}. As an application, we introduce a novel complexity measure, the \emph{Fractional Covering Number}, which facilitates the new lower bounds for interactive decision making that extend the DEC methodology by incorporating the complexity of estimation. Using the fractional covering number, we (i) provide a unified characterization of learnability for \emph{any} stochastic bandit problem, (ii) close the remaining gap between the upper and lower bounds in \citet{foster2021statistical,foster2023tight} (up to polynomial factors) for any interactive decision making problem in which the underlying model class is convex.

\end{abstract}

\section{Introduction}

The minimax criterion is a standard approach to studying the intrinsic difficulty of problems in statistics and machine learning. For an algorithm $\alg$ that collects data (either passively or interactively) from the model $M$, the minimax criterion   (stated somewhat informally here) is %
\begin{align}
	\label{eq:minimax_intro}
  \min_{\alg}~ \max_{M\in\cM}~ \mathsf{Cost}(\alg, M).
\end{align}
The expression reflects the best cost that can be achieved by an algorithm $\alg$ for a worst-case problem instance in a collection $\cM$, measured according to an appropriate cost function $\mathsf{Cost}$. In statistics, the minimax approach was pioneered by A. Wald \citep{wald1945statistical}, who made the connection to von Neumann's theory of games \citep{von1944theory} and unified statistical estimation and hypothesis testing under the umbrella of \textit{statistical decision theory}. Minimax optimality and minimax rates of convergence of estimators have since become a central object in the modern non-asymptotic statistics \citep{geer2000empirical, wainwright2019high}; here, for instance, $\alg$ is an estimator of an unknown parameter based on noisy observations.

Upper bounds on the minimax value  \cref{eq:minimax_intro} are typically achieved by choosing a particular algorithm, while lower bounds often require specialized techniques. In statistics, three such techniques are widely used: Le Cam's two-point method, Fano's method, and Assouad's method. These techniques entail constructing ``difficult'' subsets of the class $\cM$. Le Cam's method focuses on two hypotheses, while Assouad's method and Fano's method involve multiple hypotheses indexed by the vertices of a hypercube and a simplex, respectively. The relationships between these methods are explored in \citet{yu1997assouad}.

Classical statistical estimation is a purely passive task.
A parallel line of research \citep{lattimore2020bandit}
considers the task of \textit{interactive decision making}, where $\alg$ is a multi-round procedure that directly interacts with the data generating process and iteratively makes decisions with the (often contradictory) aims of minimizing cost and collecting information. Proving minimax lower bounds for interactive decision making problems presents unique challenges. The aforementioned lower bound techniques for estimation require quantifying the amount of information that can be gained from passively acquired data from a hard problem instance, but the amount information acquired by an \emph{interactive} algorithm is harder to quantify \citep{agarwal2012information, raginsky2011information, raginsky2011lower}, since it depends on the decisions made by the algorithm itself over multiple rounds. 

In spite of the challenges, recent work of \citet{foster2021statistical,foster2023tight} shows that a complexity measure known as the \emph{Decision-Estimation Coefficient} (DEC) leads to both lower and upper bounds on the minimax rates for a general class of interactive decision making problems. Interestingly, the lower bound techniques in \citet{foster2021statistical} proceed in a seemingly different fashion from classical lower bounds for statistical estimation; most notably, their techniques involve an \textit{algorithm-dependent} (as opposed to oblivious) choice of a hard-to-distinguish alternative problem instance. Yet, while the setting of interactive decision making encompasses statistical estimation, the DEC lower bound does \emph{not} recover Fano's method or Assouad's lemma. Intuitively, DEC captures the complexity of \emph{exploration} for the interactive problems, which is complementary to the complexity of \emph{estimation},  typically captured by Fano's method and Assouad's lemma. As a result, the DEC only characterizes the minimax rates for interactive decision making up to a gap related to the complexity of a certain induced estimation problem; see \cref{sec:related} for detailed discussion.

Given the differences between the classical Assouad, Fano, and Le Cam methods, and the even larger disparity between these methods and the interactive decision making techniques of \citet{foster2021statistical,foster2023tight}, it is natural to ask whether there is a hope of unifying these lower bounds techniques. Beyond the fundamental nature of this question, there is hope that a unified understanding might lead to tighter lower bounds, or even inspire new algorithms and upper bounds; of particular interest is to close the remaining (estimation-based) gaps between the upper and lower bounds on the minimax rates for interactive decision making left open by \citet{foster2023tight}.

\neurips{\paragraph{Contributions.}
We present a new framework for information-theoretic lower bounds which allows for a unifying presentation of classical lower bounds in statistical estimation (Assouad, Fano, and Le Cam) and recent DEC-based lower bounds for interactive decision making \citep{foster2021statistical,foster2023tight}.
\begin{itemize}
    \item \textbf{Interactive lower bound framework (\cref{sec:general-lower-bounds}).} Our main result is to introduce a new lower bound technique, the \emph{\lbname}. The \lbname generalizes the stringent separation condition in the classical Fano inequality to a novel algorithm-dependent condition by introducing the concept of ``ghost data'' generated from a reference distribution. This technique recovers the Le Cam two-point method (and convex hull method), Assouad method, and Fano method as special cases. By virtue of being algorithm-dependent in nature, the \lbname seamlessly recovers DEC-based lower bounds for interactive decision making as a special case, and leads to refined quantile-based variants.
    \item \textbf{\Dct~and bandit learnability (\cref{sec:logp}).} As an application of the \lbname, we derive lower bounds for interactive decision making based on a new complexity measure, the \textit{\dct}, which quantifies the difficulty of \emph{estimating} a near-optimal policy/decision, and complements the original DEC lower bounds (which reflect difficulty of exploration as opposed to difficulty of estimation). As an application, the \dct~provides both lower and upper bound for learning any structured bandit problem, up to an exponential gap. In particular, finiteness of the \dct~is the first necessary and sufficient condition for finite-time learnability of any structured bandit problem. As a secondary result, we use the \dct~ to  close the remaining gap between the upper and lower bounds in \citet{foster2021statistical,foster2023tight} (up to polynomial factors) for any interactive decision making problem in which the underlying model class is convex.
\end{itemize}}

\arxiv{
\subsection{Contributions}
We present a new framework for information-theoretic lower bounds which allows for a unifying presentation of classical lower bounds in statistical estimation (Assouad, Fano, and Le Cam) and recent Decision-Estimation Coefficient-based lower bounds for interactive decision making \citep{foster2021statistical,foster2023tight}.

\paragraph{Interactive lower bound framework (\cref{sec:general-lower-bounds}).}
Our main result is to introduce a new lower bound technique, the \emph{\lbname}. The \lbname generalizes the stringent separation condition in the classical Fano's method to a novel algorithm-dependent condition by introducing the concept of ``ghost data'' generated from a reference distribution. This technique recovers Le Cam's two-point method (and the convex hull method), Assouad's method, and Fano's method as special cases. Further, by virtue of being algorithm-dependent in nature, the \lbname also seamlessly recovers DEC-based lower bounds for interactive decision making as a special case, and leads to refined quantile-based variants.
\paragraph{\Dct~and bandit learnability (\cref{sec:logp}).}
As an application of the \lbname, we derive lower bounds for interactive decision making based on a new complexity measure, the \textit{\dct}, which quantifies the difficulty of \emph{estimating} a near-optimal policy/decision, and complements the original DEC lower bounds (which reflect difficulty of exploration as opposed to difficulty of estimation). As an application, the \dct~provides the first complete characterization for finite-time learnability of any structured bandit problem, albeit up to an exponential gap in quantitative rates. As a secondary result, we use the \dct~ to  close the remaining gap between the upper and lower bounds in \citet{foster2021statistical,foster2023tight}, up to polynomial factors, for any interactive decision making problem in which the underlying model class is convex.
}

\neurips{
\paragraph{Related work.}
Due to space limitations, we discuss the related work in \cref{sec:related}.
}

\subsection{Preliminaries}
\label{sec:prelim}
\newcommand{\obsSpace}{\cX}
\newcommand{\algSpace}{\cD}
\newcommand{\loss}{L}
\newcommand{\obs}{X}

\newcommandx{\Ptheta}[1][1=\theta]{\bP_{#1}}
\newcommand{\decrule}{\phi}
\newcommand{\ind}[1]{^{{#1}}}

\newcommand{\constr}[2]{\sset{#1:#2}}
\newcommand{\constrs}[2]{\sset{\left.#1\,\right|\,#2}}
\newcommand{\logM}{\log\abs{\cM}}

\arxiv{
Let $P$ and $Q$ be two distributions over a space $\Omega$ such that $P$ is absolutely continuous with respect to $Q$. Then, for a convex function $f: [0, +\infty)\to(-\infty, +\infty]$ such that $f(x)$ is finite for all $x > 0$, $f(1)=0$, and $f(0)=\lim_{x\to 0^+} f(x)$, the $f$-divergence of between $P$ and $Q$ is defined as 
\begin{align*}
\nstyle
    D_f(P,  Q) \defeq  \int_{\Omega} f\left(\frac{dP}{dQ}\right)\,dQ.
\end{align*}
Concretely, we make use of three well-known $f$-divergences: the KL-divergence $\KL$, the squared Hellinger distance $\dH^2$, and the total variation distance $\dTV$, for which the function $f(x)$ is chosen to be $x\log x$, $\frac{1}{2}(\sqrt{x}-1)^2$, and $\frac{1}{2}|x-1|$ respectively. 

For a pair of random variables $(X,Y)$ with joint distribution $P_{X,Y}$, the mutual information is defined as \loose
\begin{align*}
\nstyle
    I(X;Y) = \En_X \brk*{ \Dkl{P_{Y|X}}{P_Y} },
\end{align*}
where $P_{Y|X}$ is the conditional distribution of $Y|X$ and $P_Y$ is the marginal distribution of $Y$.
}
\neurips{
Let $P$ and $Q$ be two distributions over a space $\Omega$ such that $P$ is absolutely continuous with respect to $Q$. Then, for a convex function $f: [0, +\infty)\to(-\infty, +\infty]$ such that $f(x)$ is finite for all $x > 0$, $f(1)=0$, and $f(0)=\lim_{x\to 0^+} f(x)$, the $f$-divergence of between $P$ and $Q$ is defined as 
\begin{align*}
    D_f(P,  Q) \defeq  \int_{\Omega} f\left(\frac{dP}{dQ}\right)\,dQ.
\end{align*}
Concretely, we make use of three well-known $f$-divergences: the KL-divergence $\KL$, the squared Hellinger distance $\dH^2$, and the total variation distance $\dTV$, for which the function $f(x)$ is chosen to be $x\log x$, $\frac{1}{2}(\sqrt{x}-1)^2$, and $\frac{1}{2}|x-1|$ respectively. For a pair of random variables $(X,Y)$ with joint distribution $P_{X,Y}$, the mutual information is defined as \loose
\begin{align*}
\nstyle
    I(X;Y) = \En_X \brk*{ \Dkl{P_{Y|X}}{P_Y} },
\end{align*}
where $P_{Y|X}$ is the conditional distribution of $Y|X$ and $P_Y$ is the marginal distribution of $Y$.
}

\section{Statistical Estimation and Interactive Decision Making}
We work in a general framework we refer to as \textit{\framework} (\frameworkshort). We adopt this framework as a convenient formalism which encompasses statistical estimation and interactive decision making in a unified fashion. We first introduce the framework and show how it subsumes statistical estimation (\cref{sec:statistical-estimation}) and interactive decision making (\cref{sec:dmso}), then give brief background on existing lower bound techniques and gaps in understanding (\cref{sec:related}).

\paragraph{Interactive Statistical Decision Making.}
An \frameworkshort problem is specified by $(\obsSpace, \cM, \algSpace, \loss)$, where 
$\obsSpace$ is the space of outcomes, $\cM$ is a model class (parameter space), $\algSpace$ is the space of algorithms, and $\loss$ is a non-negative risk function. 
For an algorithm $\alg\in \algSpace$ chosen by the learner and a model $M\in\cM$ specified by the environment, an observation $\obs$ is generated from a distribution induced by $M$ and $\alg$: $\obs\sim \Pmalg[M]$. 
The performance of the algorithm $\alg$ on the model $M$ is then measured by the risk function  $\loss(M,\obs)$. The learner's goal is to minimize the risk by choosing the algorithm $\alg$. As described in the Introduction, the best possible expected risk the learner may achieve is the following \emph{minimax risk}:
\begin{align}\label{eqn:minimax}
\nstyle
    \inf_{\alg\in\algSpace} \sup_{M\in\cM} \Emalg\brac{ L(M,X) }.
\end{align}
While our main results concern the general problem formulation in \eqref{eqn:minimax}, we focus on applications to statistical estimation and interactive decision making throughout. Below, we give additional background on these settings and show how to view them as special cases.

\subsection{Statistical estimation}\label{sec:statistical-estimation}

\newcommand{\ths}{\theta^\star}
    In statistical decision theory \citep{wald1945statistical,bickel2001mathematical,berger1985statistical}, the learner is given the parameter space $\Theta$, observation space $\cY$, decision space $\cA$, and a loss function $L$.
    For an underlying parameter $\ths\in \Theta$, $n$ i.i.d. samples $Y_1,...,Y_n\sim P_{\ths}$ are drawn and observed by the learner. 
    The learner then chooses a decision $A=A(Y_1,\cdots,Y_n)\in\cA$ based on the observations, and incurs the loss $L(\ths,A)$.
This \arxiv{general} framework subsumes most statistical estimation problems.

\newcommand{\Mhat}{\wh{M}}
\newcommand{\fstar}{f^\star}
\newcommand{\fhat}{\wh{f}}

\begin{example}[Mean estimation]\label{example:loc-est}
For the mean estimation task, the parameter space $\Theta\subseteq \R^d$, and for each $\theta\in\Theta$, $P_\theta=\normal{\theta,\id_d}$. The goal is to estimate the ground truth parameter $\ths$, i.e., the decision space is $\cA=\R^d$, and the loss is given by $L(\theta,A)=\nrm{\theta-A}$, where $\nrm{\cdot}$ is a norm over $\R^d$.
\end{example}

\begin{example}[Functional estimation]\label{example:func-est}
In the functional estimation task, a function $T:\Theta\to\R$ is given, and the goal is to estimate the value of $T(\ths)$, i.e., the decision space is $\cA=\R$, and the loss is $L(\theta,A)=\abs{T(\theta)-A}$.
\end{example}

\begin{example}[Density estimation]\label{example:density-est}
In the density estimation task, the goal is to estimate $P_{\ths}$, i.e., the decision space $\cA\subseteq \Delta(\cY)$, and the loss is given by $L(\theta,A)=D(P_\theta,A)$, where $D$ is a certain divergence (e.g., TV distance or KL divergence).
\end{example}

\newcommand{\simiid}{\stackrel{\rm i.i.d}{\sim}}

\arxiv{\textbf{Statistical estimation as an instance of ISDM.}} Any general statistical estimation problem can be trivially viewed as a ISDM instance, by choosing the model class as $\cM=\set{P_\theta: \theta\in\Theta}$ and the algorithm space as $\cD=\set{ \alg: \cY^{\otimes n} \to \cA }$\arxiv{ (the set of all decision rules)}. For model $M=P_\theta$ and algorithm $\alg$, the distribution of the whole observation $X\sim \PP\sups{M,\alg}$ is given by
\begin{align*}
\nstyle
    X=(Y_1,\cdots,Y_n,A), \qquad
    Y_1,...,Y_n\simiid P_{\theta}, ~A=\alg(Y_1,\cdots,Y_n).
\end{align*}
The loss under model $M$ is then measured by the loss of the decision $A$, i.e., $L(M,X)\defeq L(\theta,A)$.

\subsection{Interactive decision making}
\label{sec:dmso}
For interactive decision making, we consider the following variant of the Decision Making with Structured Observations (DMSO) framework \citep{foster2021statistical}, which subsumes bandits and reinforcement learning. 
The learner interacts with the environment (described by an underlying model $\Mstar:\Pi\to\Delta(\cO)$, unknown to the learner) for $T$ rounds.
For each round $t=1,...,T$:
\begin{itemize}
  \setlength{\parskip}{2pt}
    \item The learner selects a decision $\pi\ind{t}\in \Pi$, where $\Pi$ is the decision space.
    \item The learner receives 
    an observation $o\ind{t}\in \cO$ \arxiv{sampled }via $o\ind{t}\sim \Mstar(\pi\ind{t})$, where $\cO$ is the observation space. %
\end{itemize}
The underlying model $\Mstar$ is formally a conditional distribution, and the learner is assumed to have access to a known model class $\cM\subseteq (\Pi\to\Delta(\cO))$ with the following property.

\begin{assumption}[Realizability]
The model class $\cM$ contains $\Mstar$.
\end{assumption}

The model class $\cM$ represents the learner's prior knowledge of the structure of the underlying environment. For example, for structured bandit problems, the models specify the reward distributions and hence encode the structural assumptions on the mean reward function (e.g. linearity, smoothness, or concavity). For a more detailed discussion, see \cref{sec:background}.

To each model $M\in\cM$, we associate a \emph{risk} function $\Lm{\cdot}:\Pi\to\R_{\geq 0}$, which measures the performance of a decision in $\Pi$ under $M$.
We consider two types of learning goals under the DMSO framework: 
\begin{itemize}
\item Generalized no-regret learning: The goal of the agent is to minimize the \emph{cumulative} sub-optimality during the course of the interaction, given by
\begin{align}\label{eqn:def-regdm}
    \nstyle \regdm(T) \ldef \sum\nolimits_{t=1}^T 
    \Lm[\Mstar]{\pi\ind{t}}, 
\end{align}
where $\pi\ind{t}$ can be randomly drawn from a distribution $p\ind{t}\in \Delta(\Pi)$ 
chosen by the learner at step $t$.\loose
\item Generalized PAC (Probably Approximately Correct) learning: the goal of the agent is to minimize the sub-optimality of a final output decision $\hpi$ (possibly randomized), which is selected by the learner once all $T$ rounds of interaction conclude. %
We measure performance via %
\begin{align}\label{eqn:def-risk}
    \nstyle \risk(T) \ldef \Lm[\Mstar]{\pihat}.
\end{align}
\end{itemize}
With an appropriate choice for $L$, the setting captures reward maximization (regret minimization)~\citep{foster2021statistical,foster2023tight}, model estimation and preference-based learning~\citep{chen2022unified}, multi-agent decision making and partial monitoring~\citep{foster2023complexity}, and various other tasks.
In the main text\arxiv{ of this paper}, we focus on reward maximization and defer the results for more general choices $L$ to the appendices (cf. \cref{sec:background}).

\begin{example}[Reward maximization]\label{example:rew-obs}
In the reward-maximization task, $R:\cO\to[0,1]$ is a known reward function.\footnote{We assume the reward function $R$ is known without loss of generality, since the observation $o$ may have a component containing the random reward.}
For a model $M\in \cM$, $\Enmpi\brk*{\cdot}$ denotes expectation under the process $o\sim M(\pi)$, and $\fm(\pi)\ldef\Enmpi{}[R(o)]$ denotes the expected value function. For any $M\in\cM$, we let $\pim \in \argmax_{\pi\in \Pi} \fm(\pi)$ be an optimal decision under $M$, and the risk function is defined by $\Lm{\pi} = \fm(\pim) - \fm(\pi)$, measuring the sub-optimality of the decision $\pi$ under model $M$.
\end{example}

\textbf{DMSO as an instance of \frameworkshort.} Any DMSO class $(\cM, \Pi)$ induces an \frameworkshort %
as follows. For any $t\in [T]$, denote the full history of decisions and observations up to time $t$ by $\Hy\ind{t-1} = (\pi\ind{s}, o\ind{s})_{s=1}^{t-1}$. The space of observations $\cX$ consists of all $X$ of the form $X=(\Hy\ind{T},\pihat)$, where $\pihat$ is a final decision. An algorithm $\alg = \set{q\ind{t}}_{t\in [T]}\cup\set{p}$ is specified by a sequence of mappings, where the $t$-th mapping $q\ind{t}(\cdot\mid{}\Hy\ind{t-1})$ specifies the distribution of $\pi\ind{t}$ based on $\Hy\ind{t-1}$, and the final map $p(\cdot \mid{} \Hy\ind{T})$ specifies the distribution of the \emph{output decision} $\pihat$ based on $\Hy\ind{T}$. The algorithm space $\cD$ consists of all such algorithms. The loss function is chosen to be $L(\Mstar,X) = \Lm[\Mstar]{\hpi}$ for PAC learning \cref{eqn:def-risk}, and $L(\Mstar, X)= \sum_{t=1}^T \Lm[\Mstar]{\pi\ind{t}}$ for no-regret learning \cref{eqn:def-regdm}. 
For any algorithm $\alg$ and model $M$, $\Pmalg(\cdot)$ is the distribution of $X=(\cH^T,\hpi)$ generated by the algorithm $\alg$ under the model $M$, and we let $\Emalg{}[\cdot]$ to be the corresponding expectation.

\arxiv{
  \subsection{Background on Lower Bound Techniques}
  \label{sec:related}

To motivate our results, we briefly survey the most relevant lower bound techniques for estimation and decision making.

\paragraph{Minimax bounds for statistical estimation.} There is a vast body of literature on minimax risk bounds for statistical estimation, including \citet{hasminskii1979nonparametric,bretagnolle1979estimation,birge1986estimating,donoho1991geometrizingii,cover1999elements,ibragimov1981statistical,tsybakov2008introduction} as well as references therein. For minimax lower bounds, the most widely applied techniques are Le Cam's two-point method and convex-hull method \citep{lecam1973convergence}, Assouad's lemma \citep{assouad1983deux}, and Fano's method \citep{cover1999elements}. Variants and applications of these three methods abound \citep{acharya2021differentially,chen2016bayes,polyanskiy2019dualizing,duchi2013distance}; Fano's method in particular has perhaps the largest number of variants, of which the most general version we are aware of is due to \citet{chen2016bayes}, which is recovered by our \alglower~(cf. \cref{coro generalized Fano}). Another celebrated thread, starting from the seminal work of \citet{donoho1987geometrizing}, provides upper and lower bounds for a large class of non-parametric estimation problems based on Le Cam's two-point method through the study of a complexity measure known as the modulus of continuity \citep{
donoho1991geometrizingii,donoho1991geometrizingiii,lecam2000asymptotics,polyanskiy2019dualizing}.

\paragraph{Lower bounds for interactive learning.}
There is a long line of work studying the fundamental limits of online learning and reinforcement learning (RL), including lower bounds for structured bandits~\citep[etc.]{dani2008stochastic,rusmevichientong2010linearly,simchowitz2017simulator,lattimore2020bandit,kleinberg2019bandits}, contextual bandits \citep[etc.]{rigollet2010nonparametric,foster2020instance}, Markov Decision Processes (MDPs) \citep[etc.]{osband2016lower,domingues2021episodic,zhou2021nearly,weisz2021exponential,wang2021exponential}, partially observable RL \citep[etc.]{krishnamurthy2016pac,liu2022partially,chen2023lower,chen2024near}, dynamical systems and control~\citep[etc.]{simchowitz2018learning,jedra2019sample,simchowitz2020naive,wagenmaker2020active,ziemann2024regret}, and offline RL~\citep[etc.]{rashidinejad2021bridging,xie2021policy,wagenmaker2021task,jin2021pessimism,chen2022near,li2024settling,wagenmaker2024optimal}. \cfedit{Most of these lower bounds are proven in a case-by-case basis, as the constructions of hard instances are specialized to the specific settings.}

\paragraph{Decision-Estimation Coefficient.}
\cfedit{Toward a unifying understanding of the} minimax complexity for interactive decision making problems,
\citet{foster2021statistical,foster2023tight} introduce Decision-Estimation Coefficient (DEC) as a complexity measure and show that it characterizes the minimax-optimal regret up to a $\log\abs{\cM}$ factor. The DEC can be viewed as an interactive counterpart of the modulus of continuity~\citep{donoho1987geometrizing}, and captures hardness of interactive decision making related to exploration, but not necessarily estimation.
An active line of research has built on the DEC to encompass a variety of more general decision making settings \citep{foster2021statistical,foster2022complexity,chen2022unified,foster2023tight,foster2023complexity,glasgow2023tight}, including adversarial decision making~\citep{foster2022complexity}, PAC decision making~\citep{chen2022unified,foster2023tight}, reward-free learning and preference-based learning~\citep{chen2022unified}, and multi-agent decision making and partial monitoring~\citep{foster2023complexity}.

\newcommand{\EST}{\mathsf{Est}(\cM)}
\newcommand{\TDEC}{\Tdec{\cM}{\eps}}

\cfedit{However, there is a remaining gap between the DEC lower and upper bounds~\citep{foster2021statistical,foster2023tight}, which closely relates to the complexity of \emph{estimation}. Specifically, through the DEC framework, the sample complexity (number of rounds required to achieve $\eps$-risk) is characterized as
\begin{align*}
    \TDEC~\leqsim ~\text{\# sample complexity}~\leqsim ~\TDEC\times \EST,
\end{align*}
where $\TDEC$ is a quantity measuring the complexity of \emph{exploration},\footnote{Formally, the quantity $\TDEC$ here is the sample complexity implied by DEC (cf. \cref{sec:logp}).} and $\EST$ is a measure of the complexity of \emph{online estimation} over $\cM$. The dependency on $\EST$ can be necessary: For example, in linear bandits, the optimal sample complexity scales as $d^2/\eps^2$, while $\TDEC\asymp d/\eps^2$, and $\EST\asymp d$. However, the complexity of estimation $\EST$ is missing from the DEC lower bound. This gap remains one of the main open questions in the DEC approach.%

One potential reason is that the DEC lower bound does \emph{not} recover Fano's method or Assouad's lemma, as it essentially generalizes Le Cam's two-point method. More specifically, while the statistical estimation task is subsumed by the DMSO framework, the DEC lower bound specialized to this setting at best recovers Le Cam's two-point method. 
On the other hand, %
the $\Omega(d^2/\eps^2)$ lower bound for linear bandits is typically proven through Assouad's lemma~\citep{bubeck2015bandit} or Fano's method~\citep{rajaraman2023statistical}, similar to its statistical estimation analog.
Therefore, to close the remaining gap in the DEC approach, it is necessary to have a deeper understanding of the latter two methods in the interactive setting.
}

\paragraph{Additional related work.}
A large portion of the aforementioned  lower bounds for interactive learning are proven using (variants of) the two-point method and can be recovered by the DEC lower bound approach \citep{foster2021statistical,foster2023tight}. Beyond the two-point method, comparatively fewer lower bounds for interactive learning have been established using Assouad's lemma or Fano's method~\citep[etc.]{castro2008minimax,rigollet2010nonparametric,agarwal2009information,raginsky2011lower,foster2020instance,simchowitz2020naive,rajaraman2023statistical}. The approaches in these papers are specialized to the specific settings under consideration, and there is not a general principle through which Fano's method or Assouad's lemma can be lifted to handle interactivity. Indeed, the challenge of applying Fano's method in interactive contexts has been highlighted in various prior works, e.g., \citet[Section 1.3]{arias2012fundamental} and \citet[Section 1.5.4]{rajaraman2023statistical}.

The DEC is also closely related to a Bayesian complexity measure known as the information ratio~\citep[etc.]{russo2016information,russo2018learning,lattimore2019information,lattimore2021mirror},%
which was originally introduced to analyze Bayesian algorithms such as posterior sampling. It is also related to a more recent generalization known as the \emph{algorithmic information ratio} (AIR) \citep{xu2023bayesian}, developed for frequentist algorithms. Additionally, the DEC is connected to asymptotic instance-dependent complexity, as explored by \cite{wagenmaker2023instance}.\loose

}

\section{A General Lower Bound}
\label{sec:general-lower-bounds}

In this section, we introduce our general lower bound technique, the \lbname, and use it to provide minimax lower bounds for the ISDM framework.

\paragraph{Background: Fano's method.}
To motivate our approach, we which can be viewed as a generalization of the classical Fano method \citep{cover1999elements}, let us briefly recall the classical approach and highlight some shortcomings. The classical Fano method applies to the statistical estimation setting \cref{sec:statistical-estimation} (a special case of ISDM), and takes the following form.
\newcommand{\pdel}[1]{\rho_{\Delta,#1}}
\newcommand{\kld}[2]{\mathrm{kl}\ddiv{#1}{#2}}
 \begin{proposition}[Classical Fano method]\label{prop:classical-Fano}
    Consider the statistical estimation setting (\cref{sec:statistical-estimation}) with parameter space $\Theta$. Suppose that there exist $\theta_1,\dots, \theta_m\in\Theta$ such that the following separation condition holds:
    \begin{align}\label{eq: separation condition}
        L(\theta_i,a)+L(\theta_j,a)\geq 2\Delta, \quad\forall i\neq j\in [m], \forall a\in\cA.
    \end{align}
    Let $\mu$ be the uniform distribution over $\set{\theta_1,\cdots,\theta_m}$, and let $I_{\mu}(\theta;Y)$ denote the mutual information of $(\theta,Y)\sim \PP_\mu$ generated by $\theta\sim\mu$ and $Y=(Y_1,\dots, Y_n)\simiid P_\theta$. %
    Then for any algorithm \alg, we have
    \begin{align}\label{eq: classical Fano Delta delta}
        \nstyle
\E_{\theta\sim \mu, Y\sim P_\theta}\lb L(\theta,\alg(Y))\rb &\geq \Delta\cdot\sup_{\delta>0}\{\delta:I_{\mu}(\theta;Y)<\kld{1-\delta}{1/m}\},
    \end{align}
\cfedit{where the binary KL divergence is defined as $\kld{p}{q}=\KLd{\Bern{p}}{\Bern{q}}$.}
    This implies the minimax lower bound
    \begin{align}\label{eq: classical Fano}
    \nstyle
\inf_{\alg}\sup_{\theta\in\Theta}\E_{Y\sim P_\theta}\lb L(\theta,\alg(Y))\rb \geq\Delta\left(1-\frac{I_{\mu}(\theta;Y)+\log 2}{\log m}\right).
    \end{align}
\end{proposition}

The $\log m$ factor in \eqref{eq: classical Fano} reflects the complexity of estimation in the parameter space, which is a key concept we aim to incorporate into interactive decision making. Looking deeper, the ``estimation complexity'' term $\log m$  in \eqref{eq: classical Fano} arises from the "quantile" parameter $1/m$ appearing in the \eqref{eq: classical Fano Delta delta}. This parameter reflects the fact that under the separation condition \cref{eq: separation condition}, the following \emph{quantile probability} is at most $1/m$ for any distribution $Y\sim \QQ$:
\begin{align}\label{eq: separation explained}
    \PP_{\theta\sim\mu, Y\sim \QQ}( L(\theta,\alg(Y))< \Delta )\leq \sup_{a}\PP_{\theta\sim\mu}( L(\theta,a)< \Delta )\leq \frac{1}{m}.
\end{align} 
Note that in this expression, $\theta$ is drawn from the uniform prior $\mu$ and $Y$ is drawn \emph{independently} of $\theta$. %
To deduce \eqref{eq: classical Fano Delta delta}, it suffices to choose $\QQ=\EE_{\theta\sim \mu} P_\theta$ and apply data-processing inequality:
\begin{align*}
I_{\mu}(\theta;Y)
\geq&~ \kld{\PP_{\mu} ( L(\theta,\alg(Y))< \Delta )}{\PP_{\theta\sim\mu, Y\sim \QQ}( L(\theta,\alg(Y))< \Delta )} \\
\geq&~ \kld{\PP_{\mu} ( L(\theta,\alg(Y))< \Delta )}{1/m}.
\end{align*} 
In particular, for any $\delta\in(0,1)$ such that $I_{\mu}(\theta;Y)\leq \kld{1-\delta}{1/m}$, we have $\PP_{\mu} ( L(\theta,\alg(Y))< \Delta )\leq 1-\delta$, using the monotonicity of the KL divergence. This argument gives \eqref{eq: classical Fano Delta delta} immediately, and by choosing $\delta_\star=1-\frac{I_{\mu}(\theta;Y)+\log 2}{\log m}$ in \eqref{eq: classical Fano Delta delta}, we arrive in the canonical statement in \eqref{eq: classical Fano}.

To summarize, the structure of the classical Fano lower bound involves (i) a \textbf{prior} $\mu$, (ii) a \textbf{reference distribution} $\QQ=\EE_{\theta\sim \mu} P_\theta$, and (iii) a \textbf{quantile parameter} $\delta$ determined by the (iv) \textbf{separation condition} \cref{eq: separation condition} in the argument above. Crucially, we understand that the complexity of estimation $\log{}m$ arises from the \emph{quantile probability} $\PP_{\theta\sim\mu, Y\sim \QQ}( L(\theta,\alg(Y))< \Delta )$, and the only use of the traditional separation condition \cref{eq: separation condition} is to further bound this quantile by $1/m$ as in \eqref{eq: separation explained}. 

Having gained these insights into classical Fano method, we would like to point out several limitations: \loose
\begin{itemize}
 \item First, in the form above, it is only applicable to statistical estimation rather than general interactive settings. 
 \item Second, it relies on mutual information due to the choice of the reference distribution, which depends on the evolution of the algorithm over all $T$ rounds in interactive settings, making it difficult to analyze in many interactive problems.
    \item Third, and perhaps most importantly, the separation condition \cref{eq: separation condition} must hold for an arbitrary decision $a$. This ``hard'' separation condition \cfedit{is unlikely to hold for general model classes}, %
 as noted throughout the DEC approach \citep[Remark 2.3]{foster2023tight} line of work.
\end{itemize}
To address these shortcomings, we make use of core concepts (prior, reference distribution, quantile parameter, separation condition) above, but adopt a new perspective that emphasizes and generalizes the role of the quantile probability $\PP_{\theta\sim\mu, Y\sim \QQ}( L(\theta,\alg(Y))< \Delta )$.

\newcommand{\Div}{D_f}
\newcommand{\Divv}[1]{\Div\paren{#1}}
\newcommand{\divc}[1]{\mathsf{d}_{f}(#1)}
\newcommand{\divcd}[1]{\mathsf{d}_{f,\delta}(#1)}
\newcommand{\divkl}[1]{\mathsf{d}_{\mathrm{KL},\delta}(#1)}
\newcommand{\divtv}[1]{\mathsf{d}_{\mathrm{TV},\delta}(#1)}

\paragraph{The interactive Fano method.}
\arxiv{We now present our new lower bound approach, the interactive Fano method. The core idea here is to relax the separation condition by introducing a general reference distribution $\mathbb{Q}$.
We also consider a a general prior $\mu$ and a general $f$-divergence $D_f$.
 }

\begin{theorem}[\Lbname]\label{thm:alg-lower}
Fix a $f$-divergence $D_f$.
Let $\alg$ be a given algorithm, $\mu\in\DM$ be a given prior distribution over models, and $\Delta>0$ be a given risk level. For any reference distribution $\QQ\in\Delta(\obsSpace)$, we define
\begin{align}
  \label{eq:quantile}
    \pdel{\QQ} = \PP_{M\sim \mu, X\sim \QQ}(L(M,X)< \Delta).
\end{align}
Then, the following quantile lower bound holds: %
\begin{align}\label{eqn:alg-lower-quantile}
    \PP_{M\sim \mu, X\sim \P\sups{M,\alg}}\paren{ L(M,X)\geq \Delta }\geq \sup_{\QQ\in\Delta(\cX),\delta\in[0,1]} \constr{ \delta }{ \E_{M\sim \mu} \brk*{\Divv{ \P\sups{M,\alg}, \QQ }}<\divcd{\pdel{\QQ}} },
\end{align}
where we denote $\divcd{p}=\Divv{\Bern{1-\delta},\Bern{p}}$ if $p\leq 1-\delta$, and $\divcd{p}=0$ otherwise. 

In particular, \eqref{eqn:alg-lower-quantile} implies the following in-expectation lower bound:
\begin{align*}
\sup_{M\in\M}\E_{X\sim \P\sups{M,\alg}}[L(M,X)]
\geq&~ \EE_{M\sim \mu}\E_{X\sim \P\sups{M,\alg}}[L(M,X)] \\
\geq&~ \Delta\cdot\sup_{\QQ\in\Delta(\cX),\delta\in[0,1]} \constr{ \delta }{ \E_{M\sim \mu} \brk*{\Divv{ \P\sups{M,\alg}, \QQ }}<\divcd{\pdel{\QQ}} }.
\end{align*}
\end{theorem}
This result generalizes the classical Fano method in the prequel (as well as more sophisticated variants \citep{zhang2006information, duchi2013distance, chen2016bayes}) in multiple ways:
\begin{itemize}
\item It encompasses general interactive learning/estimation problems in the \frameworkshort framework, as opposed to purely passive estimation. This is reflected in the fact that the distribution over the outcome $X$ is allowed to depend on $\alg$ itself.
  \arxiv{
  \item It makes use of an arbitrary user-specified $f$-divergence $D_f$ and a general prior $\mu$ rather than the discrete uniform prior, both of which are generalizations of the Fano method that have appeared in previous works \citep{zhang2006information, duchi2013distance, chen2016bayes}.
\item The most important and novel change is that \cref{thm:alg-lower} generalizes the ``hard'' separation condition required in the classical Fano method to a ``soft'' notion of  separation captured by the quantile $\pdel{\QQ}$ in \eqref{eq:quantile}. The quantile $\pdel{\QQ}$ reflects the average separation under ``ghost data'' $X$ generated from an arbitrary reference distribution $\QQ$, which is independent of the true model $M\sim\mu$. In addition, instead of relying on mutual information, which is difficult to quantify for interactive problems, we use divergence with respect to the reference distribution $\mathbb{Q}$, generalizing a central idea in \citet{foster2021statistical,foster2023tight}.
}

\neurips{
\item The most important and novel change is that \cref{thm:alg-lower} generalizes the ``hard'' separation condition required in the classical Fano method to a ``soft'' notion of  separation captured by the quantile $\pdel{\QQ}$ in \cref{eq:quantile}. The quantile $\pdel{\QQ}$ reflects the average separation under ``ghost data'' $X$ generated from an arbitrary reference distribution $\QQ$, which is independent of the true model $M\sim\mu$. 

\item In addition, instead of relying on mutual information, which is can difficult to quantify for interactive problems, we use divergence with respect to the reference distribution $\mathbb{Q}$, generalizing a central idea in \citet{foster2021statistical,foster2023tight}.
}

\end{itemize}

In what follows, we will show that these generalizations allow the \Lbname to achieve two important desiderata: (1) unifying the methods of Fano, Le Cam, and Assouad (\cref{sec:recovering-non-interactive}), and (2) integrating these traditional lower bound techniques with the DEC approach~\citep{foster2021statistical,foster2023tight} to derive new lower bounds (see \cref{sec:recover}).

\subsection{Recovering non-interactive lower bounds}
\label{sec:recovering-non-interactive}
We begin by applying \cref{thm:alg-lower} to recover classical non-interactive lower bounds for statistical estimation. Since a goal of our paper is to integrate the Fano and Assouad methods \arxiv{(which provide dimensional insights but are typically challenging to apply in interactive settings) }with the DEC framework, this serves as an important sanity check to demonstrate that our framework can recover the non-interactive versions of these methods.

\arxiv{\subsubsection{Generalized Fano method} \label{sec:Fano-recover}}
\neurips{\textbf{Fano method.}}

We begin by recovering a generalized version of the classical Fano method which subsumes \cref{prop:classical-Fano}, as well as other prior generalizations \citep{zhang2006information, duchi2013distance, chen2016bayes} developed in statistical estimation.

\begin{proposition}[Recovering the generalized Fano method]\label{coro generalized Fano}
    Fix an algorithm $\alg$ and prior distribution $\mu\in\DM$, and let $I_{\mu,\alg}(M;X)$ be the mutual information between $M$ and $X$ under $M\sim \mu$ and $X\sim\P\sups{M,\alg} $. The following Bayes risk lower bound holds for all $\Delta\geq{}0$:
    \begin{align}\label{eq: generalized Fano}
    \nstyle
        \E_{M\sim \mu}\E_{X\sim \P\sups{M,\alg}}\lb L(M, X)\rb \geq \Delta\lp 1+\frac{I_{\mu,\alg}(M;X)+\log 2}{\log \sup_x\mu(M:  L(M, x)< \Delta)}\rp.
    \end{align}
\end{proposition}
\arxiv{This result recovers the classical Fano method for statistical estimation (\cref{prop:classical-Fano}), which corresponds to the special case where $\Theta=\{1,2,\dots,m\}$, $\mu=\mathrm{Unif}(\Theta)$ is the uniform prior, and $\sup_a\mu(\theta:L(\theta,a)<\Delta)\leq 1/m$ under the separation condition \cref{eq: separation condition}. In its narrowest form, Fano's inequality—which is often used as a lemma in classical Fano's method—is a further specialization obtained by considering $\mathcal{A}=\Theta$ and the indicator function $\mathds{1}(\theta\neq a)$. \cfedit{We note that \cref{coro generalized Fano} is stated for \emph{interactive} setting, while the generalized Fano's inequalities in previous works \citep{zhang2006information, duchi2013distance, chen2016bayes} is only stated for statistical estimation.} 

}
\neurips{When applied to the statistical estimation setting (\cref{sec:statistical-estimation}), the classical Fano method corresponds to the special case of \cref{coro generalized Fano} where $\Theta=\mathcal{A}=\{1,2,\dots,m\}$, $L(\theta,a)=\mathds{1}(\theta\neq a)$ is the indicator loss, $\mu=\mathrm{Unif}(\Theta)$ is the uniform prior, and $\Delta=1$.}

\begin{proof}[Proof of \cref{coro generalized Fano}]
Fix the parameter $\Delta> 0$ and let $\mu\in\Delta(\cM)$ be given. To apply \cref{thm:alg-lower}, we consider KL divergence
(corresponding to $f(x)=x\log{}x$) and choose the reference distribution
\begin{align*}
    \QQ=\EE_{M\sim \mu} \PP\sups{M,\alg}.
\end{align*}
Then, by the choice of $\QQ$ and definition of KL-divergence, we have
\begin{align*}
   \E_{M\sim \mu} \KLd{ \P\sups{M,\alg} }{ \QQ }= I_{\mu,\alg}(M;X),
\end{align*}
and by definition, we have
\begin{align}
    \pdel{\QQ}=\PP_{M\sim\mu, X'\sim \QQ}(L(M,X')< \Delta)
    \leq \sup_{x}\mu\paren{ M: L(M,x)< \Delta}, 
\end{align}
By \cref{thm:alg-lower}, for any $\delta\in(0,1)$ such that $I_{\mu,\alg}(M;X)< \kld{1-\delta}{\pdel{\QQ}}$,
we have
\begin{align*}
    \EE_{M\sim \mu}\E_{X\sim \P\sups{M,\alg}}[L(M,X)] \geq \delta\Delta.
\end{align*}
In particular, we may choose
\begin{align*}
    \delta_\star\ldef{}1+\frac{I_{\mu,\alg}(M;X)+\log 2}{\log \sup_x\mu(M: L(M, x)< \Delta)}.
\end{align*}
As long as $\delta_\star>0$, we have $I_{\mu,\alg}(M;X)<
\kld{1-\delta_\star}{\pdel{\QQ}}$, and hence $\EE_{M\sim \mu}\E_{X\sim
  \P\sups{M,\alg}}[L(M,X)] \geq \delta_\star\Delta$. This gives the
desired lower bound (note that if $\delta_\star\leq 0$, there is nothing to
prove).

\end{proof}

Note that in \cref{coro generalized Fano}, the term $\log \sup_x\mu(M\in\M: L(M, x)< \Delta)$ in the denominator of \eqref{eq: generalized Fano} takes the supremum over the outcome $x$, resulting in a simplified expression that removes the role of the algorithm $\alg$. This simplification is often sufficient to derive tight guarantees for estimation, but is insufficient \arxiv{to derive tight guarantees }for interactive decision making in general.
The \neurips{DEC}\arxiv{Decision-Estimation Coefficient}, which we define in \cref{sec:recover}, more precisely accounts for the role of decisions selected by the algorithm\arxiv{ on the loss $L(M,X)$ (as well as the information acquired)}.

\arxiv{\subsubsection{Le Cam's two-point method and  Assouad's method}}
\neurips{\textbf{Le Cam's method and Assouad's method.}}
To recover Le Cam's two-point method and Assouad's method from \cref{thm:alg-lower}, we appeal to the following result, which recovers a lower bound known as the Le Cam convex hull method \citep{lecam1973convergence,yu1997assouad} which generalizes both approaches.

\newcommand{\Pnu}[1]{P_{\nu_{#1}}^{\otimes n}}
\begin{proposition}[Recovering Le Cam's convex hull method]
	\label{lem:mix-vs-mix}
    For a parameter space $\Theta$ and observation space $\cY$, consider a class of distributions $\cP = \set{P_\theta \mid \theta\in \Theta}\subseteq \Delta(\cY)$ indexed by $\Theta$. %
    Let $L:\Theta\times \cA\to \bR_+$ be a loss function. \arxiv{Let $\Theta_0\subseteq \Theta$ and $\Theta_1\subseteq \Theta$ be subsets that satisfy the \emph{separation condition} }\neurips{Suppose $\Theta_0\subseteq \Theta$ and $\Theta_1\subseteq \Theta$ satisfy the \emph{separation condition}} \loose
    \begin{align}\label{eqn:sep-cond-mix-vs-mix}
    \nstyle
    L(\theta_0,a) + L(\theta_1,a) \geq 2\Delta, \quad \forall a\in \cA,\theta_0\in \Theta_0, \theta_1\in \Theta_1\neurips{.}
    \end{align}
    \arxiv{for a parameter $\Delta>0$.}
    Then\arxiv{ it holds that}
    \begin{align*}
    \nstyle
    \inf_{\alg}\sup_{\theta\in \Theta} \En_{Y\sim P_\theta} L(\theta,\alg(Y)) \geq \frac{\Delta}{2}\max_{\nu_0\in\Delta(\Theta_0), \nu_1\in\Delta(\Theta_1)}\paren{1-\DTV{ \Pnu{0} , \Pnu{1} }},
    \end{align*}
    where the infimum is taken over all algorithms $\alg: \cY^{\otimes n}\to\cA$, and $\Pnu{i}$ is the distribution on $\cY^{\otimes n}$ induced by $\theta\sim \nu_i, Y=(Y_1,\dots,Y_n)\simiid P_\theta$ for $i\in\set{0,1}$.
\end{proposition}
Le Cam's convex hull method is the most general formulation of the Le Cam two-point method, which---in its most basic form---corresponds to the case in which $\nu_0$ and $\nu_1$ are singletons. The convex hull method is also capable of recovering Assouad's method \citep{yu1997assouad}. It is important to note that the classical Fano's method, e.g. in the form of \cref{coro generalized Fano}, cannot recover \cref{lem:mix-vs-mix}.
This is because of fundamental differences between the divergences (KL versus TV) used in the traditional Fano method and the convex hull method.

\arxiv{
\begin{proof}[Proof of \cref{lem:mix-vs-mix}]
We recover this result by applying \cref{thm:alg-lower} with TV distance. We first frame the problem in the \frameworkshort framework. Consider the ``enlarged'' model class $\cM=\set{M_\nu: \nu\in\Delta(\Theta)}$, where for any algorithm $\alg:\cY^{\otimes n}\to \cA$, the distribution $\PP^{M_\nu,\alg}$ %
is given by
\begin{align*}
    X=(Y,\alg(Y))\sim \PP^{M_\nu,\alg}:~~\theta\sim \nu, Y=(Y_1,\cdots,Y_n)\simiid P_\theta.
\end{align*}
In other words, $\PP^{M_\nu,\alg}$ is the distribution induced by the prior $\nu$ and the algorithm $\alg$.
We then extend the loss function to $L:\cM\times\cX\to \R_+$ as 
\begin{align*}
    L(M_\nu,X) \defeq \inf_{\theta\in \supp(\nu)} L(\theta, \alg(Y)), \qquad \forall X=(Y,\alg(Y)),\; \nu\in\Delta(\Theta).
\end{align*}
By the separation condition \cref{eqn:sep-cond-mix-vs-mix}, we have $L(M_{\nu_0},X)+L(M_{\nu_1},X)\geq 2\Delta$ for any $\nu_0\in\Delta(\Theta_0)$, $\nu_1\in\Delta(\Theta_1)$. 
Therefore, choosing the prior $\mu=\Unif(\set{M_{\nu_0},M_{\nu_1}})$ and the reference distribution $\QQ=\EE_{M\sim \mu} \PP\sups{M,\alg}$ gives
\begin{align*}
    \rho_{\Delta, \bQ} &=  \PP_{M\sim \mu, X\sim \QQ}(L(M,X)<\Delta) \leq 1/2,
\end{align*}
and by the data-processing inequality,
\begin{align*}
    \En_{M\sim \mu} [\DTV{ \bP\sups{M,\alg},\bQ}  ] &= \frac{1}{2}\paren{ \DTV{ \bP^{M_{\nu_0},\alg},\bQ} + \DTV{ \bP^{M_{\nu_1},\alg},\bQ} } \\
    &\leq \frac{1}{2} \DTV{ \bP^{M_{\nu_0},\alg},\bP^{M_{\nu_1},\alg}} 
    \leq \frac{1}{2} \DTV{ \Pnu{0} , \Pnu{1}}.
\end{align*}
Therefore, for any $0\leq \delta<\frac{1}{2}-\frac{1}{2}\DTV{ \Pnu{0} , \Pnu{1}}$, we have
\begin{align*}
    \En_{M\sim \mu} [D_{\TV}(\bP\sups{M,\alg},\bQ)  ] \leq \divtv{ \pdel{\bQ} },
\end{align*}
and hence applying \cref{thm:alg-lower} gives
\begin{align*}
    \EE_{\theta\sim \frac{\nu_0+\nu_1}{2}}\EE_{Y\sim P_\theta}\brk*{L(\theta,\alg(Y))} 
    \geq&~ \EE_{M\sim \mu} \En_{X\sim \bP^{M,\alg}}[L(M,X)] \geq \frac{\Delta}{2}\paren{1-\DTV{ \Pnu{0} , \Pnu{1}}}. 
\end{align*}
Taking the supremum over $\nu_0\in\Delta(\Theta_0)$ and
$\nu_1\in\Delta(\Theta_1)$ gives the desired result.

\end{proof}
}

We can use the Le Cam convex hull method to recover the classic two-point method and Assouad's method, as follows.

\begin{example}[Le Cam's two-point method]
In \cref{lem:mix-vs-mix}, we can take the distribution $\nu_0$ ($\nu_1$) to be supported on a single point in $\Theta_0$ ($\Theta_1$), to recover the classical two-point method. Concretely, under the setting and assumption of \cref{lem:mix-vs-mix}, we have the following two-point lower bound:
\begin{align*}
    \nstyle
    \inf_{\alg}\sup_{\theta\in \Theta} \En_{Y\sim P_\theta} L(\theta,\alg(Y)) \geq \frac{\Delta}{2}\max_{\theta_0\in\Theta_0, \theta_1\in\Theta_1}\paren{1-\DTV{ P_{\theta_0}^{\otimes n} , P_{\theta_1}^{\otimes n} }},
\end{align*}
where $P_\theta^{\otimes n}$ is the distribution of $Y=(Y_1,\cdots,Y_n)\simiid P_\theta$.
\exend
\end{example}

\begin{example}[Assouad's method]Suppose that $\Theta=\set{-1,1}^d$ for some $d\geq 1$, and that the loss function has the following coordinate-wise structure:
\begin{align*}
    L(\theta,a)=\sum_{j=1}^d L_j(\theta,a), \qquad \forall \theta\in\Theta, a\in\cA.
\end{align*}
We write $\theta\sim_j \theta'$ if $\theta$ and $\theta'$ differ only in the $j$-th coordinate. Assume that the following separation condition holds for all $j\in[d]$:
\begin{align*}
    L_j(\theta,a)+L_j(\theta',a)\geq 2\Delta, \quad\forall a\in\cA, \theta\sim_j \theta'.
\end{align*}
To apply \cref{lem:mix-vs-mix}, we consider $\Theta_i^j=\set{\theta: \theta_j=i}$ for $i\in\set{0,1}$ and $j\in[d]$. Then, for each $j\in[d]$, the separation condition \cref{eqn:sep-cond-mix-vs-mix} holds for the loss $L_j$ and the subsets $\Theta^j_0, \Theta^j_1$. Therefore, applying \cref{lem:mix-vs-mix} for each $j\in[d]$ with $\nu_0^j=\Unif(\Theta^j_0)$ and $\nu_1^j=\Unif(\Theta^j_1)$  gives the following Assouad-type lower bound:
\begin{align*}
    \inf_{\alg}\sup_{\theta\in \Theta} \En_{Y\sim P_\theta} L(\theta,\alg(Y)) \geq \frac{d\Delta}{2}\min_{\exists j: \theta\sim_j \theta'}\paren{1-\DTV{ P_{\theta}^{\otimes n}, P_{\theta'}^{\otimes n} }}. 
\end{align*}
\exend
\end{example}

\subsection{Recovering DEC-based lower bounds for interactive decision making}\label{sec:recover}

\newcommand{\std}{reward maximization setting}
\newcommand{\tstd}{the \std}
\newcommand{\rDEC}{regret-DEC}
\newcommand{\pDEC}{PAC-DEC}

Within the DMSO framework (\cref{sec:dmso}), \citet{foster2021statistical,foster2023tight} introduced the \textit{Decision-Estimation Coefficient} (DEC) as a complexity measure\arxiv{ governing the statistical complexity of interactive decision making}, providing both upper and lower bounds for any model class $\cM$. We now show how to recover the lower bounds of \citet{foster2021statistical,foster2023tight} through \cref{thm:alg-lower}. We focus on the lower bounds from \citet{foster2023tight}, which are based on a variant of the DEC called the \emph{constrained DEC}, and provide the tightest guarantees from prior work.

\arxiv{\subsubsection{Background on the Decision-Estimation Coefficient}}
\neurips{\textbf{Background on the Decision-Estimation Coefficient.}}
\neurips{Consider the reward maximization setting (\cref{example:rew-obs}) under DMSO.}
\arxiv{Consider the reward maximization variant of the DMSO setting (\cref{example:rew-obs}).}
For a model class $\cM$ and a reference model $\oM: \Pi\to \Delta(\cO)$ (not necessarily in $\cM$), we define the constrained \rDEC~via
\begin{align}
  \rdecc_{\eps}(\cM,\oM)\defeq&~ \inf_{p\in\Delta(\Pi)}\sup_{M\in\cM}\constrs{ \EE_{\pi\sim p}[\Lm{\pi}] }{ \EE_{\pi\sim p} \DH{ M(\pi), \oM(\pi) }\leq \eps^2 }, \label{eqn:def-r-dec-c}
\end{align}
and define the constrained \pDEC~via
\begin{align}
    \pdecc_{\eps}(\cM,\oM)\defeq&~ \inf_{p,q\in\Delta(\Pi)}\sup_{M\in\cM}\constrs{ \EE_{\pi\sim p}[\Lm{\pi}] }{ \EE_{\pi\sim q} \DH{ M(\pi), \oM(\pi) }\leq \eps^2 }. \label{eqn:def-p-dec-c}
\end{align}
Here, the superscript ``$\mathrm{c}$'' indicates ``constrained'', and the superscript ``$\mathsf{r}$'' (resp. ``$\mathsf{p}$'') indicates ``regret'' (resp. ``PAC''). We further define
\begin{align*}
    \pdecc_{\eps}(\cM)=\sup_{\oM\in\coM} \pdecc_{\eps}(\cM,\oM),
    \quad
    \rdecc_{\eps}(\cM)=\sup_{\oM\in\coM} \rdecc_{\eps}(\cMp,\oM),
\end{align*}
where $\coM$ denotes the convex hull of the model class $\cM$. 

Based on these complexity measures, \citet{foster2023tight} (see also \citet{glasgow2023tight}) provide the following lower and upper bounds on optimal risk and regret, under mild growth conditions on the DECs. \arxiv{
}
\begin{theorem}[Informal; \citet{foster2023tight}]\label{thm:decc-lower-demo}
  Consider the reward maximization variant of the DMSO setting (\cref{example:rew-obs}). For any model class $\cM$ and $T\in\mathbb{N}$, the following lower and upper bounds hold:\loose\\
(1) For PAC learning, 
\begin{align*}
    \pdecc_{\ueps(T)}(\cM)
    \leqsim \inf_{\alg}\sup_{M\in\cM} \Emalg{}[\risk(T)]
    \leqsim \pdecc_{\oeps(T)}(\cM),
\end{align*}
where $\ueps(T)\asymp \sqrt{1/T}$ and $\oeps(T)\asymp \sqrt{\logM/T}$ (up to logarithmic factors).

(2) For no-regret learning, 
\begin{align*}
  \rdecc_{\ueps(T)}(\cM)\cdot{}T
    \leqsim \inf_{\alg}\sup_{M\in\cM} \Emalg{}[\regdm(T)]
    \leqsim \rdecc_{\oeps(T)}(\cM)\cdot{}T + T\cdot \oeps(T).
\end{align*}
\end{theorem}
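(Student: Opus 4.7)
The upper bounds in both parts follow from the Estimation-to-Decision meta-algorithm of \citet{foster2021statistical, foster2023tight}, so I will concentrate on proving the two lower bounds by specializing Theorem~\ref{thm:alg-lower}. I sketch the PAC case in detail; the no-regret case is structurally identical, with $\rdecc$ replacing $\pdecc$ and a single distribution $p=q$ in place of the two-distribution PAC constraint.

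Fix an arbitrary algorithm $\alg$ and let $\oM\in\coM$ be a reference model achieving (up to an arbitrarily small slack) the supremum in $\pdecc_{\ueps(T)}(\cM)$. Choose $p\in\DPi$ to be the distribution of the output policy $\pihat$ under $\PP^{\oM,\alg}$, and $q\in\DPi$ to be the average per-round decision distribution $\frac{1}{T}\sum_{t=1}^T \EE^{\oM,\alg}\bigl[p\ind{t}\bigr]$. By the definition of the constrained PAC-DEC in \eqref{eqn:def-p-dec-c}, there exists an adversarial $M\in\cM$ with
\begin{align*}
\Delta \defeq \EE_{\pi\sim p}\bigl[g^M(\pi)\bigr] \gtrsim \pdecc_{\ueps(T)}(\cM)
\qquad\text{and}\qquad
\EE_{\pi\sim q}\DH{M(\pi),\oM(\pi)} \le \ueps(T)^2 \asymp 1/T.
\end{align*}

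Now apply Theorem~\ref{thm:alg-lower} with $\mu=\delta_M$ (a point mass on $M$), $\QQ=\PP^{\oM,\alg}$, and $f$ the KL divergence. The interactive chain rule for KL yields $\Divv{\PP^{M,\alg},\PP^{\oM,\alg}} = \sum_{t=1}^T \EE^{M,\alg}\KLd{M(\pi\ind{t})}{\oM(\pi\ind{t})}$; under the standard regularity condition that per-round KL and squared Hellinger are comparable (bounded likelihood ratios, as in \citet{foster2023tight}), this is at most $O(T)\cdot \EE_{\pi\sim q}\DH{M(\pi),\oM(\pi)} = O(T\,\ueps(T)^2)=O(1)$. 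For the quantile, note that under $\QQ$ the marginal of $\pihat$ is exactly $p$, so $\pdel{\QQ}=\PP_{\pihat\sim p}\bigl(g^M(\pihat)<\Delta'\bigr)$; choosing $\Delta'=\Delta/2$ and applying Markov's inequality to the bounded non-negative loss gives $\pdel{\QQ}\le 1-\Omega(1)$, whence $\divcd{\pdel{\QQ}}=\Omega(1)$ in the KL divergence. Theorem~\ref{thm:alg-lower} then yields $\sup_{M\in\cM}\EE^{M,\alg}[\risk(T)]\gtrsim \Delta\asymp \pdecc_{\ueps(T)}(\cM)$, as claimed. The regret analogue replaces $L(M,X)$ with $\regdm(T)$, sets $p=q$ equal to $\alg$'s average decision distribution, and produces a lower bound scaled by $T$ through the additive structure of cumulative regret.

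\textbf{Main obstacle.} The delicate step is converting the \emph{expected}-suboptimality guarantee that the constrained DEC produces into the quantile $\pdel{\QQ}$ required by Theorem~\ref{thm:alg-lower}. The bounded-loss assumption is what makes Markov tight up to constants, and this is the primary ingredient hidden inside ``regularity conditions.'' A second subtlety is avoiding a $\log|\cM|$ penalty in the divergence bound: this is precisely what the ghost-data / reference-model formulation of Theorem~\ref{thm:alg-lower} provides, since one can take $\oM\in\coM$ matched to $\alg$'s behavior rather than using a uniform prior over $\cM$ and paying the Fano-type cardinality cost that would arise from applying Corollary~\ref{coro generalized Fano} directly. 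This is the crucial gain of the interactive-aware quantile formulation over classical Fano.
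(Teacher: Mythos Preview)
Your Markov step is the main gap. From $\EE_{\pi\sim p}[g^M(\pi)]=\Delta$ with $g^M\in[0,1]$ you only get $\PP_{\pi\sim p}(g^M(\pi)\ge\Delta/2)\ge\Delta/2$, not $\Omega(1)$. Hence $\pdel{\QQ}\le 1-\Delta/2$, which forces $\delta\lesssim\Delta$ in \cref{thm:alg-lower}, and the conclusion degrades to $\EE[\risk(T)]\gtrsim\delta\cdot\Delta'\asymp\Delta^2$ rather than $\Delta$. Since $\Delta=\pdecc_{\ueps(T)}(\cM)$ is typically $o(1)$, this is a genuine loss. The paper does not use Markov here: it first proves a lower bound in terms of a \emph{quantile} DEC (\cref{prop:quantile-dec-lower}), and then shows via a structural argument specific to reward maximization (\cref{prop:quantile-to-expect-demo}) that $\pdecc_{\eps}(\cM)\le\pdecq_{\sqrt{2}\eps,\delta}(\cM)+O(\eps)$. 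That argument exploits $g^M(\pi)=f^M(\pi_M)-f^M(\pi)$ by conditioning $p$ on the event $\{g^{\tM}(\pi)\le\Delta_0\}$ for a carefully chosen comparator $\tM$; it is not a generic Markov bound.

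Two secondary issues. First, your KL chain rule yields $\sum_t\EE^{M,\alg}[\KLd{M(\pi\ind{t})}{\oM(\pi\ind{t})}]$, with expectation under $M$, whereas $q$ was defined under $\oM$; the paper sidesteps this change-of-measure problem by working in squared Hellinger throughout (\cref{lem:Hellinger-chain} gives sub-additivity with the expectation under either measure, up to a constant). Second, the no-regret case is not ``structurally identical'': because $\regdm(T)$ decomposes over rounds, one must prevent the algorithm from front-loading its regret on $\oM$ and then switching; the paper (following \citet{glasgow2023tight}) introduces a stopping-time modification of $\alg$ and a separate quantile regret-DEC (\cref{appdx:proof-r-dec-lower}) to handle this.
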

Therefore, up to the $\log\abs{\cM}$-gap between the parameters $\ueps(T)$ and $\oeps(T)$ appearing in the lower and upper bounds, the constrained \pDEC~tightly captures the minimax risk of PAC learning, and the constrained \rDEC~captures the minimax regret of no-regret learning. \arxiv{Informally, the $\log\abs{\cM}$ factor in the upper bound arises from the complexity of estimating the underlying model $\Mstar$, which is not captured by the DEC itself. There exist classes $\cM$ for which the upper and lower bounds are tight individually, but both can be loose in general.}

\arxiv{\subsubsection{A new complexity measure: The quantile Decision-Estimation Coefficient}}
\neurips{\textbf{A new complexity measure: The quantile Decision-Estimation Coefficient.}}
We recover the DEC-based lower bounds from \citet{foster2023tight} through \arxiv{lower bounds based on }a new variant we refer to as the \emph{quantile DEC}. To do so, we briefly recount the proof technique used by \citet{foster2023tight}\arxiv{ to prove the lower bounds in \cref{thm:decc-lower-demo}}.

\arxiv{Let us focus on PAC guarantees for the DMSO setting.} Given an algorithm $\alg$, the proof strategy is to first fix an arbitrary \emph{reference model} $\oM$, then adversarially choose a hard \emph{alternative model} $M\in\cM$ (in a way that is guided by the DEC and the algorithm $\alg$ itself) such that $\dTV( \PP\sups{M,\alg}, \PP\sups{\oM,\alg} )$ is small, yet $\alg$ cannot achieve low risk on model $M$. This lower bound technique does not explicitly require a separation condition between $M$ and $\Mbar$, which is a departure from the classical Fano and two-point methods. Thus to recover it, the lack of an explicit separation condition in \cref{thm:alg-lower} will be critical.~\neurips{More precisely, for any model $M$, we consider the following distributions over decisions:}\arxiv{We now apply the reasoning above within \cref{thm:alg-lower}. For any model $M$, we consider the following distributions over decisions:}
\begin{align}
\nstyle q\subs{M,\alg}=\Emalg\brac{ \frac{1}{T} \sum_{t=1}^{T} q\ind{t}(\cdot\mid\cH\ind{t-1}) }\in \DPi, \quad 
p\subs{M,\alg}=\Emalg\left[p(\cH^{T})\right] \in \DPi.
\end{align}
That is, $q\subs{M,\alg}$ is the expected empirical distribution over the decisions $(\pi^1,\cdots,\pi^T)$ played by the algorithm under $M$, and $p\subs{M,\alg}$ is the expected distribution of the final decision $\hpi$.

With these definitions, we instantiate \cref{thm:alg-lower} with the Hellinger distance. We will use the sub-additivity of Hellinger distance (\cref{lem:Hellinger-chain}), which allows us to bound 
\arxiv{
}
\begin{align}
  \label{eq:hellinger_additivity}
\nstyle 
    \DH{ \PP\sups{M,\alg}, \PP\sups{\oM,\alg} }
    \leq 
    7T\cdot\EE_{\pi\sim p\subs{\oM,\alg}}\brac{ \DH{ M(\pi), \oM(\pi) } }.
\end{align}
\cref{thm:alg-lower} then yields the following intermediate result.
\begin{lemma}[Recovering interactive two-point method]\label{thm:alg-lower-Hellinger}
  Let $\delta\in\brk{0,1}$ be given, and consider an algorithm $\alg$. Define
\begin{align*}
     \riskalg{\alg}\defeq \sup_{\oM\in\coM}\sup_{M\in\cM}\sup_{\Delta\geq 0} \constr{ \Delta }{ \sqrt{p\subs{\oM,\alg}(\pi: \Lm{\pi}\geq \Delta )}> \sqrt{\delta}+ \sqrt{14T\,\EE_{\pi\sim q\subs{\oM,\alg}} \DH{ M(\pi), \oM(\pi) } } } .
\end{align*}
Then there exists $M\in\cM$ such that $\PP\sups{M,\alg}\paren{ \gm(\hpi)\geq \riskalg{\alg} }\geq \delta$. 
\end{lemma}

\begin{proof}[Proof of \cref{thm:alg-lower-Hellinger}]
To apply \cref{thm:alg-lower}, we consider the squared Hellinger
distance (which we recall is a $f$-divergence corresponding to
$f(x)=\frac{1}{2}(\sqrt{x}-1)^2$). Consider a fixed tuple
$(\oM,M,\Delta)$ with $M\in\cM$, $\oM\in\conv(\cM)$, and
$\Delta\geq{}0$ that satisfies
\begin{align}\label{eqn:proof-Hell-DMSO-cond}
    \sqrt{p\subs{\oM,\alg}(\pi: \gm(\pi)\geq \Delta )}> \sqrt{\delta}+\sqrt{14 T\,\EE_{\pi\sim q\subs{\oM,\alg}} \DH{ M(\pi), \oM(\pi) } }.
\end{align}
We choose the reference distribution to be
$\QQ=\PP\sups{\oM,\alg}$, and take $\mu$ to be the singleton
distribution supported on $M$. Recall that for the DMSO framework, the observation is $X=(\cH^T,\hpi)$, and the loss function is $L(M,X)=\gm(\hpi)$ (\cref{sec:dmso}). Then, by definition, we have
\begin{align*}
    \rho_{\Delta,\QQ}=\PP_{X\sim \QQ}\paren{ L(M,X)<\Delta  }
    =p\subs{\oM,\alg}(\pi: \gm(\pi)<\Delta).
\end{align*}
Further, using the sub-additivity of Hellinger distance \cref{eq:hellinger_additivity}, we have
\begin{align*}
    \DH{ \PP\sups{M,\alg}, \PP\sups{\oM,\alg} }\leq 7 T \cdot \EE_{\pi\sim q\subs{\oM,\alg}} \DH{ M(\pi), \oM(\pi) }.
\end{align*}
Therefore, using the condition \cref{eqn:proof-Hell-DMSO-cond},
we have 
\begin{align*}
    \frac{1}{2}\paren{\sqrt{\delta}-\sqrt{1-\pdel{\QQ}}}^2>\DH{ \PP\sups{M,\alg}, \PP\sups{\oM,\alg} }.
\end{align*}
Hence, it holds that $\DH{ \P\sups{M,\alg},\QQ } < \DH{1-\delta,\rho_{\Delta,\QQ}}$, and applying \cref{thm:alg-lower} gives
\begin{align*}
    \PP\sups{M,\alg}\paren{ \gm(\hpi)\geq \Delta }\geq \delta.
\end{align*}
Taking supremum over all pairs $(\oM,M,\Delta)$ satisfying \eqref{eqn:proof-Hell-DMSO-cond} gives the desired lower bound. %

\end{proof}

\arxiv{\textbf{The quantile Decision-Estimation Coefficient.}} Using \cref{thm:alg-lower-Hellinger}, as a starting point, we derive a new quantile-based variant of the DEC, which we will show can be viewed as a slight generalization of the original PAC DEC of \citet{foster2023tight}.

For any model $M\in\cM$ and any parameter $\delta\in[0,1]$, we define the $\delta$-quantile risk as follows:
\begin{align*}
\nstyle
    \hLm{p}=\sup_{\Delta\geq 0}\set{\Delta: \PP_{\pi\sim p}(\Lm{\pi}\geq \Delta)\geq \delta };
\end{align*}
this serves as a measure of the sub-optimality of the distribution $p\in\DPi$ in terms of $\delta$-quantile. We now define the quantile PAC DEC as follows:
\begin{align}\label{def:quantile-pdec}
\nstyle
    \pdecq_{\eps,\delta}(\cM,\oM)\ldef \inf_{p,q\in\Delta(\Pi)}\sup_{M\in\cM}\constrs{ \hLm{p} }{ \EE_{\pi\sim q}\DH{ M(\pi),\oM(\pi) }\leq \eps^2 },
\end{align}
and define $\pdecq_{\eps,\delta}(\cM)\defeq \sup_{\oM\in\coM}\pdecq_{\eps,\delta}(\cM,\oM)$.
Applying \cref{thm:alg-lower-Hellinger}, it is immediate to see that the quantile \pDEC~provides a lower bound on the PAC risk. 
\begin{theorem}[Quantile DEC lower bound]\label{prop:quantile-dec-lower}
Let any $T\geq 1$ and $\delta\in[0,1)$ be given, and define $\ueps_\delta(T)\defeq \frac{1}{14}\sqrt{\frac{\delta}{T}}$. Then, for any algorithm $\alg$, there exists $\Ms\in\cM$ such that %
\begin{align*}
\nstyle
  \Pmalg[\Ms]\paren{\risk(T)\geq \pdecq_{\ueps_\delta(T),\delta}(\cM)} \geq \frac{\delta}{2}.
\end{align*}\loose
\arxiv{
}
\end{theorem}

\begin{proof}[Proof of \cref{prop:quantile-dec-lower}]
Fix any algorithm $\alg$ and abbreviate $\ueps=\ueps_\delta(T)$. Take an arbitrary parameter $\Delta_0<\pdecq_{\ueps,\delta}(\cM)$. Then there exists $\oM$ such that $\Delta_0<\pdecq_{\ueps,\delta}(\cM,\oM)$. Hence, by the definition \cref{def:quantile-pdec}, we know that
\begin{align*}
    \Delta_0<\sup_{M\in\cM}\constrs{ \hLm{p\subs{\oM,\alg}} }{ \EE_{\pi\sim q\subs{\oM,\alg}}\DH{ M(\pi),\oM(\pi) }\leq \ueps^2 }.
\end{align*}
Therefore, there exists $M\in\cM$ such that
\begin{align*}
    \EE_{\pi\sim q\subs{\oM,\alg}}\DH{ M(\pi),\oM(\pi) }\leq \ueps^2, \qquad
    \PP_{\pi\sim p\subs{\oM,\alg}}(\gm(\pi)\geq \Delta_0)\geq \delta.
\end{align*}
This immediately implies
\begin{align*}
    \sqrt{p\subs{\oM,\alg}(\pi: \gm(\pi)\geq \Delta )}> \sqrt{\delta_1}+\sqrt{14 T \EE_{\pi\sim q\subs{\oM,\alg}} \DH{ M(\pi), \oM(\pi) } },
\end{align*}
where $\sqrt{\delta_1}=\sqrt{\delta}-\sqrt{14T\ueps^2}$. Notice that $\delta_1>\frac{\delta}{2}$ by the choice of $\ueps=\frac{1}{14}\sqrt{\frac{\delta}{T}}$, and hence applying \cref{thm:alg-lower-Hellinger} shows that there exists $M\in\cM$ such that $\PP\sups{M,\alg}\paren{ \Lm{\hpi}\geq \Delta_0 }\geq \frac\delta2$.
Letting $\Delta_0\to \pdecq_{\eps,\delta}(\cM)$ completes the proof.

\end{proof}

Unlike the original constrained DEC lower bounds (\cref{thm:decc-lower-demo}), which are restricted to the reward maximization variant of the DMSO setting (\cref{example:rew-obs}), the quantile DEC lower bound in \cref{prop:quantile-dec-lower} holds for \emph{any risk function $\LM$}. As a result, the lower bound applies to a broader range of generalized PAC learning tasks, including model estimation~\citep{chen2022unified} and multi-agent decision making~\citep{foster2023complexity}, where DEC-based lower bounds from prior work are loose in general; as a concrete application, we derive a new lower bound for \textit{interactive estimation} (\cref{example:interactive-est}) in \cref{sec:interactive-est}. \arxiv{Nonetheless, \cref{prop:quantile-dec-lower} is powerful enough to recover the lower bounds for reward maximization in \cref{thm:decc-lower-demo}, as we will now show.}

\arxiv{\subsubsection{Recovering DEC-based lower bounds using the quantile DEC}}
\neurips{\textbf{Recovering DEC-based lower bounds using the quantile DEC.}} 
At first glance, \cref{prop:quantile-dec-lower} might appear to be weaker than the constrained PAC-DEC lower bound in \cref{thm:decc-lower-demo}\arxiv{, since a conversion from quantile risk to expected risk will yield a loose lower bound}\neurips{ due to the loose conversion from quantile risk to expected risk}.
However, by specializing to reward maximization (\cref{example:interactive-est}) and leveraging the structure of the risk function $L$, we show that quantile PAC-DEC is equivalent to its constrained counterpart for this setting, leading to a tight lower bound.
\begin{proposition}[Recovering the PAC DEC lower bound]\label{prop:quantile-to-expect-demo}
Under \tstd~(\cref{example:rew-obs}), for any $\eps>0$ and $\delta\in[0,1)$ it holds that
\begin{align*}
\nstyle
    \pdecc_{\eps}(\cM)\leq \pdecq_{\sqrt{2}\eps,\delta}(\cM)+\frac{4\eps}{1-\delta}.
\end{align*}
\textup{As a corollary, we may choose $\delta=\frac12$ and $\ueps(T)=\frac{1}{20\sqrt{T}}$ in \cref{prop:quantile-dec-lower}, so that
\begin{align*}
\nstyle \sup_{M\in\cM}\Emalg{}[\risk (T)]  \geq&~  \frac14\pdecq_{\sqrt{2}\ueps(T),1/2}(\cM)
\geq \frac{1}{4}\paren{ \pdecc_{\ueps(T)}(\cM)-8\ueps(T) }.
\end{align*} 
Thus, the quantile PAC-DEC lower bound indeed recovers the constrained \pDEC~lower bound in \cref{thm:decc-lower-demo}.}\loose
\end{proposition}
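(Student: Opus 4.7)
The plan is to exhibit witnesses $(p',q')$ for the constrained DEC $\pdecc_\eps(\cM,\oM)$ whose value is within $\frac{4\eps}{1-\delta}$ of $\alpha:=\pdecq_{\sqrt{2}\eps,\delta}(\cM,\oM)$, for every reference $\oM\in\coM$. Let $(p^*,q^*)\in\DPi\times\DPi$ be a pair that (nearly) attains the infimum defining $\alpha$. I set $p':=p^*$ and $q':=\tfrac{1}{2}(p^*+q^*)$. The purpose of the averaging in $q'$ is to extract two Hellinger guarantees from a single constrained-DEC constraint: whenever $\EE_{\pi\sim q'}\DH{M(\pi),\oM(\pi)}\leq\eps^2$, convexity gives both $\EE_{\pi\sim q^*}\DH{M(\pi),\oM(\pi)}\leq 2\eps^2$---which, by the defining property of the quantile DEC, yields the tail bound $\PP_{\pi\sim p^*}(\gm(\pi)>\alpha)\leq\delta$---and $\EE_{\pi\sim p^*}\DH{M(\pi),\oM(\pi)}\leq 2\eps^2$, which, via the Hellinger-to-TV inequality and Jensen's inequality (using $R\in[0,1]$), yields $\EE_{\pi\sim p^*}|\fm(\pi)-f^{\oM}(\pi)|\leq 2\eps$.

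With these two controls in hand, the task reduces to bounding $\EE_{\pi\sim p^*}[\gm(\pi)]$ by $\alpha+\frac{4\eps}{1-\delta}$. Using the reward-maximization decomposition $\gm(\pi)=\fm(\pim)-\fm(\pi)$ and writing $B:=\{\pi:\gm(\pi)>\alpha\}$, the tail bound together with the pointwise inequality $\fm(\pi)\geq \fm(\pim)-\alpha$ on $B^c$ yields a quantile-form upper bound $\EE_{\pi\sim p^*}[\gm(\pi)]\leq \delta\cdot \fm(\pim)+(1-\delta)\alpha$. The Hellinger bound yields the complementary expectation-form upper bound $\EE_{\pi\sim p^*}[\gm(\pi)]\leq \fm(\pim)-\EE_{\pi\sim p^*}[f^{\oM}(\pi)]+2\eps$. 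The core step is to combine these two upper bounds so as to eliminate the $M$-dependent quantity $\fm(\pim)$: intuitively, the $\delta\cdot \fm(\pim)$ tail cost of the quantile bound is paid only when $\fm(\pim)$ is large, in which case the Hellinger bound forces $\EE_{\pi\sim p^*}[f^{\oM}(\pi)]$ to be correspondingly large and compensate. Formally, taking the maximum over $\fm(\pim)\in[0,1]$ of the minimum of the two bounds (both linear in $\fm(\pim)$ with slopes $\delta$ and $1$), one obtains a bound of the form $\alpha+\frac{O(\eps)}{1-\delta}$, and a careful tracking of the constants then yields exactly $\alpha+\frac{4\eps}{1-\delta}$.

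\textbf{Main obstacle.} The crux is the trade-off between the quantile tail bound and the Hellinger expectation bound: the naive combination using $\fm(\pim)\leq 1$ gives only $\alpha+\delta$, which is vacuous for constant $\delta$, and the Hellinger bound in isolation fails to control $\fm(\pim)$. The $(1-\delta)^{-1}$ dependence arises specifically from the ratio of the slopes of the two bounds in $\fm(\pim)$, and closing all the constants to obtain exactly $4$ in the numerator may additionally require augmenting $q'$ with a small Dirac component at the $\oM$-optimal policy so that the Hellinger-based control extends directly to $\fm(\pim)$ itself. Given the main proposition, the corollary follows by routine substitution: \cref{prop:quantile-dec-lower} with $\delta=\tfrac{1}{2}$ and $\ueps(T)=1/(20\sqrt{T})$ yields $\sup_{M\in\cM}\EE^{M,\alg}[\risk(T)]\geq \tfrac{1}{4}\pdecq_{\sqrt{2}\ueps(T),1/2}(\cM)$, and applying the main inequality (with $\tfrac{4\eps}{1-\delta}=8\eps$ at $\delta=\tfrac{1}{2}$) yields $\geq \tfrac{1}{4}\bigl(\pdecc_{\ueps(T)}(\cM)-8\ueps(T)\bigr)$.
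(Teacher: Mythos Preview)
Your setup with $q'=\tfrac12(p^*+q^*)$ is exactly what the paper uses, and your two controls (the tail bound $\PP_{p^*}(\gm>\alpha)\le\delta$ and the Hellinger-to-mean bound under $p^*$) are correct. The gap is in the ``combine two bounds'' step. Both of your upper bounds on $\EE_{p^*}[\gm]$ are \emph{increasing} affine functions of $x:=\fm(\pim)$ (slopes $\delta$ and $1$), so $\max_{x\in[0,1]}\min(\cdot)$ is attained at $x=1$ and yields only $\alpha+\Omega(\delta)$, not $\alpha+O(\eps/(1-\delta))$. Your fallback of adding a Dirac at $\pi_{\oM}$ to $q'$ does not help either: it controls $\fm(\pi_{\oM})$, hence gives a \emph{lower} bound $\fm(\pim)\ge f^{\oM}(\pi_{\oM})-O(\eps)$, which is the wrong direction.

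What the paper does to eliminate $\fm(\pim)$ is qualitatively different from a min-of-bounds trade-off. It introduces the anchor model
\[
\tM\ :=\ \argmax_{M\in\cM_{q',\eps}(\oM)}\fm(\pim),
\]
and then \emph{replaces} $p^*$ by the conditional distribution $p:=p^*(\,\cdot\mid g^{\tM}(\pi)\le\alpha)$, which is well-defined since this event has $p^*$-probability $\ge 1-\delta$. On the support of $p$ one has $f^{\tM}(\pi)\ge f^{\tM}(\pi_{\tM})-\alpha$ pointwise, so for every $M$ in the constraint set,
\[
\EE_{p}[\gm(\pi)]\ \le\ \fm(\pim)-f^{\tM}(\pi_{\tM})+\alpha+\Lipr\,\EE_{p}[\dH(M(\pi),\tM(\pi))]\ \le\ \alpha+\frac{2\Lipr\eps}{1-\delta},
\]
using $\fm(\pim)\le f^{\tM}(\pi_{\tM})$ (by maximality of $\tM$) and the $\tfrac{1}{1-\delta}$ blow-up from conditioning. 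With $\Lipr=\sqrt2$ this gives the claimed constant $4$. The two ideas you are missing are (i) the anchor $\tM$, which caps $\fm(\pim)$ uniformly over the constraint set, and (ii) the conditioning of $p^*$, which converts the $(1-\delta)$-probability event into an almost-sure pointwise bound at the price of the $(1-\delta)^{-1}$ factor. Your derivation of the corollary from the main inequality is fine.
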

\neurips{
Our quantile DEC lower bound extends to regret with minor modifications, allowing us to recover the regret lower bounds in \cref{thm:decc-lower-demo}. We defer the details to the \cref{appdx:proof-r-dec-lower} (\cref{thm:r-dec-lower}).
}

\arxiv{\textbf{Recovering DEC lower bounds for regret.} 
Our quantile DEC lower bound extends to regret with minor modifications, allowing us to recover the regret lower bounds of \citet{foster2023tight} (\cref{thm:decc-lower-demo}). We defer the details to the \cref{appdx:proof-r-dec-lower} (\cref{thm:r-dec-lower}).
}

\subsection{Recovering mutual information-based lower bounds for interactive decision making}

\arxiv{
Beyond the DEC methodology, a number of prior works have proven lower bounds for interactive decision making using extensions of the classical Fano method, typically in a somewhat case-by-case fashion~\citep[etc.]{castro2008minimax,agarwal2009information,raginsky2011lower}. Notably, recent work of \citet{rajaraman2023statistical} provides tight lower bounds for linear and ridge bandit problem through a variant of Fano method which involves directly analyzing the cumulative mutual information acquired by the decision making algorithm of the course of $T$ rounds of interaction. This approach achieves tight dependence on the problem dimension, which is not recovered by the standard DEC lower bound in \citet{foster2021statistical,foster2023tight}. 

In the following, we apply \cref{thm:alg-lower} to provide a mutual information-based approach for interactive decision making, recovering \citet{rajaraman2023statistical}. %
}

\neurips{The following result uses \cref{thm:alg-lower} to extend classical Fano method to interactive decision making and achieves tight dependence on the problem dimension that is not recovered by the standard DEC lower bound in \citet{foster2021statistical,foster2023tight}.}
\begin{proposition}[Mutual information-based lower bound\arxiv{ for interactive decision making}]\label{prop:Fano-DMSO}
  Consider the DMSO setting. For any $T\geq 1$ and prior $\mu\in\DM$, we define the maximum $T$-round mutual information as 
\begin{align*}
\nstyle
    I_{\mu}(T)\defeq \sup_{\alg} I_{\mu,\alg}(M;\cH^T),
\end{align*}
where we recall that $I_{\mu,\alg}(M;\cH^T)$ is the mutual information between $M$ and $\cH^T$ under $M\sim \mu$ and $\cH^T\sim\P\sups{M,\alg}$, and the supremum is taken over all possible DMSO algorithms $\alg$. Then for any algorithm $\alg$,
\begin{align*}
    \sup_{M\in\cM}\Emalg{}[ \gm(\hpi) ]\geq \frac{1}{2}\sup_{\mu\in\DM}\sup_{\Delta>0}\constrs{ \Delta }{ \sup_{\pi} \mu\paren{ M: \Lm{\pi}\leq\Delta }\leq \frac14\exp(-2I_{\mu}(T)) }.
\end{align*}
\end{proposition}

\begin{proof}[Proof of \cref{prop:Fano-DMSO}]
Recall that we frame the DMSO setting as an instance of ISDM in \cref{sec:dmso}, where the observation is given by $X=(\Hy\ind{T},\pihat)$, and the loss is $L(M,X)=\gm(\hpi)$. In particular, for any prior $\mu\in\DM$, we have
\begin{align*}
    \sup_{X}\mu(M: L(M,X)<\Delta )=\sup_{\pi\in\Pi} \mu( M: \gm(\pi)<\Delta ),
\end{align*}
and by definition, $I_{\mu,\alg}(M; X)\leq I_\mu(T)$. In particular, for any pair $(\Delta,\mu)$ such that 
\begin{align}\label{eqn:Fano-DMSO-cond}
    \sup_{\pi} \mu\paren{ M: \gm(\pi)\leq\Delta }\leq \frac14\exp(-2I_{\mu}(T)),
\end{align}
we have $\frac{I_{\mu,\alg}(M;X)+\log 2}{\log \sup_x\mu(M:  L(M, x)< \Delta) } \geq -\frac12,$
and hence applying \cref{coro generalized Fano} gives $\sup_{M\in\cM}
\Emalg\brk{\gm(\hpi)}\geq \frac{\Delta}{2}$. Taking supremum over all
pairs $(\Delta,\mu)$ satisfying \cref{eqn:Fano-DMSO-cond} gives the
desired lower bound.

\end{proof}

\cfedit{
\cref{prop:Fano-DMSO} is an important departure from the classical (non-interactive) Fano method (\cref{prop:classical-Fano}), replacing the non-interactive mutual information by the maximum (interactive) mutual information that can be achieved by a $T$-round algorithm. It allows us to analyze the $T$-round evolution of the mutual information under interactions, which can lead to tighter bounds than any analysis that proceeds on a \emph{per-round}. %
}

Using \cref{prop:Fano-DMSO}, along with mutual information bounds from \citet{rajaraman2023statistical}, we recover a $\Omega(d/\sqrt{T})$ PAC lower bound for $d$-dimensional linear bandits, which in turn recovers the $\Omega(d\sqrt{T})$ regret lower bound~\citep[etc.] {dani2008stochastic,rusmevichientong2010linearly,lattimore2020bandit}. %
\begin{corollary}\label{cor:linear-bandit-lower}
For $d\geq 2$, consider the $d$-dimensional linear bandit problem with decision space $\Pi=\set{\pi\in\R^d:\ltwo{\pi}\leq 1}$, parameter space $\Theta=\set{\theta\in\R^d: \ltwo{\theta}\leq 1}$, and Gaussian rewards. The model class is $\cM=\set{M_{\theta}}_{\theta\in\Theta}$, where for each $\theta\in\Theta$, the model $M_\theta$ is given by $M_{\theta}(\pi)=\normal{\iprod{\pi}{\theta},1}$. %
Then \cref{prop:Fano-DMSO} implies a minimax risk lower bound: %
\begin{align}
  \label{eq:linear_bandit}
\nstyle
\inf_{\alg}\sup_{M\in\cM}\Emalg\brac{ \risk(T) }\geq \Om{\min\set{d/\sqrt{T},1}}.
\end{align}
\end{corollary}
\neurips{In \cref{sec:logp}, we also instantiate \cref{prop:Fano-DMSO} to derive a new complexity measure for DMSO.}
\arxiv{Notably, the DEC-based lower bounds in \citet{foster2021statistical,foster2023tight} only scale as $\sqrt{d/T}$ for this problem setting; informally, this is because one $\sqrt{d}$ factor in \eqref{eq:linear_bandit} arises from hardness of \emph{exploration} (which is captured by the DEC), but the other $\sqrt{d}$ factor arises from hardness of \emph{estimation} (which is not captured by the DEC).
}

\section{Application to Interactive Decision Making: Bandit Learnability and Beyond}
\label{sec:logp}

In this section, we focus on the DMSO setting and apply our general results (\cref{thm:alg-lower}) to derive new lower and upper bounds for interactive decision making that go beyond the previous results based on the Decision-Estimation Coefficient \citep{foster2021statistical,foster2023tight} by incorporating hardness of estimation. 
\arxiv{First, in \cref{sec:ddim}, we introduce a new complexity
  measure, the \emph{\ddim}, which serves as a lower and upper bound for 
  interactive decision making with any model class $\cM$ satisfying a
  certain regularity condition; informally, the \ddim reflects the
  complexity of estimating a near-optimal decision. In
  \cref{sec:bandit-logp}, we specialize this result to the problem of
  structured bandits and complement our lower bounds based on the
  \ddim with an upper bound, thereby establishing that the \ddim gives
  the first complete characterization for structured bandit \emph{learnability} (albeit, with an exponential gap between the upper and lower bounds). Finally, in \cref{sec:DEC-convex}, we show how to combine the \ddim with the DEC to derive tighter lower bounds for general decision making. In the process, we close the remaining gap between the upper and lower bounds in \citep{foster2021statistical,foster2023tight} up to a quadratic factor for any convex model class $\cM$.
}

\textbf{Background: Gaps between DEC-based and upper and lower bounds.} A fundamental open question of the DEC framework is whether the $\log|\cM|$-gap between DEC lower and upper bounds in \cref{thm:decc-lower-demo} can be closed.
To highlight this gap in a more interpretable fashion, we re-state \cref{thm:decc-lower-demo} in terms of a quantity we refer to as the \emph{minimax sample complexity}. Let us focus on regret. Recall that for a fixed model class $\cM$, the following notion of minimax regret \cref{eqn:minimax} is the central objective of interest: 
\begin{align*}
\nstyle
    \regs{T}\defeq \inf_{\alg}\sup_{M\in\cM}\Emalg\brac{\regdm(T)}.
\end{align*}
Given a parameter $\Delta>0$, we define the \emph{minimax sample complexity}
\begin{align}\label{def:minimax-SC}
\nstyle
    \Ts{\Delta}\defeq \inf_{T\geq 1}\set{T: \regs{T}\leq T\Delta}
\end{align}
as the least value $T$ for which there exists an algorithm that achieves $\Delta{}T$ regret. Clearly, characterizing $\Ts{\Delta}$ is equivalent to characterizing the minimax regret $\regs{T}$.

Consider the following quantity induced by DEC for a class $\cM$ and parameter $\Delta>0$:
\begin{align}\label{eqn:def-TDEC}
\nstyle
    \Tdec{\cM}{\Delta}=\inf_{\eps\in(0,1)}\set{ \eps^{-2}: \rdecc_{\eps}(\cM)\leq \Delta }.
\end{align}
With this definition, \cref{thm:decc-lower-demo} is equivalent to the following characterization of the minimax sample complexity $\Ts{\Delta}$:
\begin{align}\label{eqn:lower-upper-logM}
\nstyle
    \Tdec{\cM}{\Delta} \leqsim \Ts{\Delta} \leqsim \Tdec{\cM}{\Delta} \cdot \log|\cM|.
\end{align}
That is, \cref{thm:decc-lower-demo} characterizes the minimax sample complexity up to a multiplicative $\log\abs{\cM}$ factor. Our main result in this section will be to use the \ddim and \lbname (\cref{thm:alg-lower}), to (i) tighten the above characterization \cref{eqn:lower-upper-logM} for various special cases of interest, and (ii) give a new characterization for $\Ts{\Delta}$ in structured bandit problems which avoids spurious parameters such as $\log\abs{\cM}$ altogether.

\subsection{New upper and lower bounds through the \dct}
\label{sec:ddim}

\newcommand{\estp}[1]{\log\DV[#1]{\cM}}
\newcommand{\estpd}{\log\DV{\cM}}
\newcommandx{\pds}[1][1=\Delta]{p^\star_{#1}}

For the a model class $\cM$ and parameter $\Delta>0$, we define the \emph{\dct} as follows:
\begin{align}\label{eqn:def-logp}
    \DV{\cM}\defeq \inf_{p\in\Delta(\Pi)}\sup_{M\in\cM} ~~\frac{1}{ p(\pi: \gm(\pi)\leq \Delta)}.
\end{align}
Informally, the \dct{} $\DV{\cM}$ represents the best possible coverage over $\Delta$-optimal decisions that can be achieved through a single exploratory distribution in the face of an unknown model $M\in\cM$. 

As will now show, \dct~naturally arises as a lower bound on optimal risk\arxiv{/regret} through the \lbname. We begin with the following regularity assumption.
\begin{assumption}[Regular model class]\label{asmp:CKL}
There exists a constant $\CKL>0$ and a reference model $\oM$ such that $\KLd{M(\pi)}{\oM(\pi)}\leq \CKL$ for all $M\in\cM$ and $\pi\in\Pi$. %
\end{assumption}
\cref{asmp:CKL} is a mild assumption on the boundedness of KL divergence. As an example, for structured bandits with means in $\brk{0,1}$ and Gaussian rewards, \cref{asmp:CKL} holds with $\CKL=\frac12$. Details and more examples are provided in \cref{appdx:CKL-examples}.
\arxiv{

}
Our main lower bound based on the \ddim follows by specializing \cref{thm:alg-lower} to KL divergence.\loose
\begin{theorem}[\Ddim~lower bound]
  \label{prop:lower-log-p}
  Suppose that $\cM$ satisfies \cref{asmp:CKL} with parameter $\CKL>0$. Then for any algorithm $\alg$ and $\Delta>0$, unless \neurips{$T<\frac{\estp{\Delta}-2}{2\CKL}$,} 
\arxiv{\begin{align*}
\nstyle
    T\geq \frac{\estp{2\Delta}-2}{2\CKL},
\end{align*}}
there exists $\Ms\in\cM$ such that \neurips{$\Pmalg[\Ms]\brk*{\gm[\Ms](\hpi)\geq \Delta} \geq \frac{1}{2}$.}
\arxiv{
\begin{align*}
  \Pmalg[\Ms]\brk*{\gm[\Ms](\hpi)\geq \Delta} \geq \frac{1}{2}.
\end{align*}
}
\end{theorem}

In particular, for (generalized) no-regret learning, \dct~also implies a regret lower bound through \cref{prop:lower-log-p}.\arxiv{\footnote{For any $T$-round no-regret algorithm $\alg$, we can take the output decision $\hpi\sim \Unif(\pi^1,\cdots,\pi^T)$ (\emph{online-to-batch} conversion), and then $\Emalg{}[\risk(T)]=\frac1T\Emalg{}[\regdm(T)]$ for any model $M$.} %
}
\arxiv{
\begin{corollary}
Suppose that \cref{asmp:CKL} holds. Then for any $\Delta>0$, it holds that
\begin{align*}
\nstyle
    \Ts{\Delta}\geq \frac{\estp{2\Delta}-2}{2\CKL}.
      \end{align*}
\end{corollary}
}
That is, for any algorithm to achieve $\Delta T$-regret, it is necessary to have $T=\Omega(\estp{2\Delta})$. Combining this with \cref{thm:decc-lower-demo}, we conclude that boundedness of both the DEC and the \ddim is necessary for learning with any model class $\cM$.\loose

\paragraph{Upper bounds based on the \ddim.}
We now complement \cref{prop:lower-log-p} by showing that for any reward maximization instance of the DMSO setting (\cref{example:rew-obs}), boundedness of the \ddim alone is also sufficient to derive \emph{upper bounds} on the sample complexity of learning. The caveat is that while the lower bound is logarithmic in $\DV{\cM}$, the upper bound will be polynomial.
\begin{theorem}[\Ddim upper bound]\label{thm:logp-upper}
  Consider the reward maximization task  (\cref{example:rew-obs}). There exists an algorithm that for any class $\cM$ and $\Delta>0$, ensures that with probability at least $1-\delta$,
\begin{align*}
\nstyle
    \regdm(T) \leq T\cdot \Delta + O(\log(T/\delta))\cdot \sqrt{T\cdot \DV{\cM}}.
\end{align*}
\end{theorem}
Combining \cref{prop:lower-log-p} and \cref{thm:logp-upper} yields the following bounds on $\Ts{\Delta}$ (omitting poly-logarithmic factors):
\begin{align}\label{eqn:logp-lower-upper}
\nstyle
    \frac{\estp{2\Delta}}{\CKL} \leqsim 
    \Ts{\Delta} 
    \leqsim \frac{\DV[\Delta/2]{\cM}}{\Delta^2}.
\end{align}
\arxiv{The gap between the lower and upper bounds of \eqref{eqn:logp-lower-upper} is exponential; However, for model classes with $\CKL=O(1)$, \eqref{eqn:logp-lower-upper} is enough to provide a characterization of \emph{finite-time learnability}. As a special case, we now show that \dct~characterizes the learnability of any structured bandit problem.\loose}
\neurips{The gap between the lower and upper bounds of \eqref{eqn:logp-lower-upper} is exponential; however, for model classes with $\CKL=O(1)$, \eqref{eqn:logp-lower-upper} suffices to characterize \emph{finite-time learnability}. As a special case, in \cref{sec:bandit-logp}, we show that \dct~characterizes the learnability of any structured bandit problem.\loose}

\subsubsection{Properties of the fractional covering number}

Before proceeding to applications, let us briefly discuss some
connections between the fractional covering number and classical notions of covering number considered in the
context of statistical estimation.

To start, we recall that for many standard statistical estimation
tasks such as regression and nonparametric estimation, the risk
function $\LM$ is given by a (pseudo) metric (e.g., $\ell_2$
distance between parameters or mean-squared error in predictions). The
following lemma shows that in this case, the \dct{} coincides with the
classical covering number induced by the metric\cfedit{, e.g., in location estimation (\cref{example:loc-est}), density estimation (\cref{example:density-est}), etc}. 
\newcommand{\PiM}{\Pi_{\cM}}
\begin{lemma}[Connection to classical covering numbers]\label{lem:frac-cov-to-cov}
Suppose the decision space $\Pi$ is equipped with a pseudo-metric
$\rho:\Pi\times\Pi\to\mathbb{R}_{+}$ and~\cfedit{there is a map $M\mapsto \pim\in\Pi$ such that }the risk function is given by
\begin{align}\label{eqn:metric-risk}
    \gm(\pi)=\rho(\pim,\pi), \qquad \forall M\in\cM, \pi\in\Pi.
\end{align}
Let $\PiM\defeq \set{ \pim: M\in\cM }\subseteq \Pi$, and define $\Ncov(\PiM,\Delta)$ to be the $\Delta$-covering number of $\PiM$ under $\rho$. Then\loose
\begin{align*}
    \Ncov(\PiM,2\Delta)\leq \DV{\cM}\leq \Ncov(\PiM,\Delta).
\end{align*}
\end{lemma}

\paragraph{Duality between fractional covering and fractional packing}
For classical covering numbers with respect to a pseudo-metric (as in
\cref{lem:frac-cov-to-cov}), it is known that there is a duality
between covering and packing. In spite of a lack of metric structure
for the general setting we study, we can show that \dct~naturally
admits a dual representation in terms of a \emph{fractional packing number}. Specifically, it holds that
\begin{align*}
    \inf_{\mu\in\Delta(\cM)} \sup_{\pi\in\Pi} \mu\paren{ M: \gm(\pi)\leq\Delta }
    =\sup_{p\in\Delta(\Pi)}\inf_{M\in\cM} p\paren{ \pi: \gm(\pi)\leq\Delta },
\end{align*}
as long as the minimax theorem can be applied (e.g. when $\Pi$ or
$\cM$ are finite or satisfy appropriate compactness conditions). Therefore, in this case, we have
\begin{align}\label{eqn:def-logp-dual}
    \DV{\cM}=\sup_{\mu\in\Delta(\cM)}\inf_{\pi\in\Pi} ~~\frac{1}{ \mu(M: \gm(\pi)\leq \Delta)}
\end{align}
which can be interpreted as a fractional packing number.
We mention in passing that using this interpretation, it is possible
to derive \cref{prop:lower-log-p} directly from \cref{prop:Fano-DMSO}.

\newcommand{\NKL}[1]{\Ncov_{\rm KL}(\cM,#1)}
\newcommand{\indn}{{k}}
\paragraph{Recovering the Yang-Barron method.}

As a simple example of the \dct, we recover and further generalize
the well-known Yang-Barron method \citep{yang1999information} for statistical estimation problems (see also \citep[Section 15.3.5]{wainwright2019high}). 
\begin{example}[Yang-Barron method]\label{example:Yang-Barron}
For a statistical estimation problem with model class $\cM$, we define the KL covering number of $\cM$ as
\begin{align*}
    \NKL{\eps}\defeq \min\sset{ \indn: \exists M^1,\cdots,M^\indn\in\cM, \text{such that } \forall M\in\cM, \min_{i\in[\indn]}\KLd{M}{M^i}\leq \eps^2 }.
\end{align*}
For a fixed parameter $\eps>0$, we can pick $\indn=\NKL{\eps}$ and $M^1,\cdots,M^\indn\in\cM$ such that $\min_{i\in[\indn]}\KLd{M}{M^i}\leq \eps^2$ for all $M\in\cM$. Then, let us consider the localized sub-class $\cM_i\defeq \set{M\in\cM: \KLd{M}{M^i}\leq \eps^2}$ for each $i\in[\indn]$. It is clear that \cref{asmp:CKL} holds for each $\cM_i$ with $\CKL\leq \eps^2$. Further, using $\cM=\bigcup_{i=1}^n \cM_i$, we have
\begin{align}\label{eqn:DV-add}
    \log \DV{\cM}\leq \max_{i\in[\indn]}\log \DV{\cM_i}+\log \indn.
\end{align}
For details, see \cref{appdx:Yang-Barron}.
Therefore, applying \cref{prop:lower-log-p} to $\cM_i$ and take supremum over $i\in[\indn]$ and $\eps>0$ gives the following result:
\emph{For any algorithm $\alg$ to achieve $\sup_{\theta\in\Theta}\EE\sups{\theta,\alg}{\gm(\pi)}\leq \Delta$, it is necessary that}
\begin{align*}
    \log \DV{\cM} \leq \inf_{\eps>0}\paren{ 2T\eps^2+\log\NKL{\eps} }+2.
\end{align*}
When the risk function $L$ is given by a metric (cf. \eqref{eqn:metric-risk}), this inequality coincides with the Yang-Barron method formulated in \citet[Section 15.3.5]{wainwright2019high}, as the \dct~$\DV{\cM}$ can be lower bounded by the covering number (\cref{lem:frac-cov-to-cov}). 
\end{example}

\paragraph{Recovering the local packing lower bound for statistical
  estimation.}

As a simple example of the fractional covering number, we recover
the well-known local packing-based lower bound~\citep{birge1986estimating} for the classical problem of location estimation \citep{wainwright2019high}.

\newcommand{\Mloc}[1]{\Ncov_{\rm loc}(\Theta,#1)}
\begin{example}[Local packing lower bound for location estimation]
In the location estimation task (\cref{example:loc-est}), recall that the
model class is given by $\cM=\set{ M_\theta:\theta\in\Theta },$ where
$M_{\theta}=\normal{\theta,\id_d}$. Consider the local packing number
of $\Theta$ around $\ths\in\Theta$, which is given by
\begin{align*}
    \Mloc{\Delta;\ths}\defeq \max\sset{\indn: \exists \theta^1,\cdots,\theta^\indn\in\Theta, \nrm{\theta^i-\ths}\leq \Delta, \nrm{\theta^i-\theta^j}>\frac{\Delta}{2}, \forall i\neq j}.
\end{align*}
Then, for the localized sub-class $\cM_{\eps,\ths}\defeq \set{
  M_\theta: \nrm{\theta-\ths}\leq \eps }\subseteq \cM$, \cref{asmp:CKL} holds with
$\CKL\leq \frac12\eps^2$, and we also have $\DV[\eps/4]{\cM_{\eps,\ths}}\geq
\Mloc{\eps;\ths}$. Therefore, we can apply \cref{prop:lower-log-p} to
$\cM_{\eps,\ths}$ and take supremum over all $\ths\in\Theta$ and
$\eps\geq 8\Delta$ to show the following result:
\emph{For any algorithm $\alg$ to achieve $\sup_{\theta\in\Theta}\EE\sups{\theta,\alg}{\nrm{\hpi-\theta}}\leq \Delta$, it is necessary that}
\begin{align*}
    T\geq \sup_{\eps\geq 8\Delta}\frac{\log \Mloc{\eps}-2}{\eps^2},
\end{align*}
where $\Mloc{\eps}=\sup_{\ths\in\Theta} \Mloc{\eps;\ths}$ is the local packing number of $\Theta$.
This lower bound is known to be tight in general
\citep[etc.]{birge1983approximation,birge1986estimating,lecam1986asymptotic}.
\exend
\end{example}

\subsection{Application: Bandit learnability}
\label{sec:bandit-logp}

As our main application of the \dct{}, we provide a new
characterization of \emph{learnability} (i.e., asymptotic
achievability of non-trivial sample complexity) for structured bandits
with general function approximation. In the literature on statistical learning, there is a long line of work which characterizes learnability of hypothesis classes in terms of abstract complexity measures. Examples include the Vapnik-Chervonenkis dimension for binary classification~\citep{vapnik1971uniform,blumer1989learnability}, the Littlestone dimension~\citep{littlestone1988learning} for online classification~\citep{bendavid2009agnostic} and differentially private classification~\citep{bun2020equivalence,alon2022private}, and their real-valued counterparts (e.g. scale-sensitive dimensions) for regression~\citep{bartlett1994fat,alon1997scale}.

Beyond the settings above---particularly for interactive
settings---learnability is less well understood. \cfedit{The question
  of what complexity measure characterizes bandit learnability has
  been explored in \citet[etc.]{russo2013eluder,abernethy2013large,simchowitz2017simulator,hashimoto2018derivative}, but a complete resolution has yet to be reached.}
Remarkably, \citet{ben2019learnability} demonstrate that there exists
a simple and natural learning task for which it is impossible to
characterize learnability through any \emph{combinatorial}
dimension. More recently, \citet{hanneke2023bandit} provide similar
impossibility results for characterizing the learnability of noiseless
structured noiseless bandits with real-valued rewards. 

Our characterization 
bypasses the impossibility results of \citet{hanneke2023bandit}. Specifically, \citet{hanneke2023bandit} show that for \emph{noiseless} structured bandit problems, there exist classes $\cH$ for which bandit learnability is independent of the axioms of ZFC. Therefore, their results rule out the possibility of a characterization of noiseless bandit learnability through any \emph{combinatorial dimension}~\citep{ben2019learnability} for the problem class. Our characterization is compatible with this result because the argument of \citet{hanneke2023bandit} relies on the noiseless nature of the bandit problem, and hence does not preclude a characterization for the noisy setting. 

\paragraph{Structured bandit setting and learnability characterization.}
We consider a structured bandit setting given by a reward function class $\Val \subseteq (\Pi\to[0,1])$. The protocol is as follows: For each round $t\in[T]$, the learner chooses a decision $\pi^t\in \Pi$, then receives a reward $r^t \sim \normal{\vals(\pi^t), 1}$ in response, where the mean reward function $\vals\in\Val$. This corresponds to an instance of the DMSO framework with induced model class $\cMFG = \crl*{\pi\mapsto{}\normal{\val(\pi),1}\mid{}\val\in\cH}.$
\arxiv{
\begin{align*}
    \cMFG = \crl*{\pi\mapsto{}\normal{\val(\pi),1}\mid{}\val\in\cH}.
\end{align*}
}
We define the \dct~for $\Val$ via
\begin{align}\label{eqn:def-logp-bandit}
    \DV{\Val}\defeq \DV{\cMFG}= \inf_{p\in\Delta(\Pi)}\sup_{\val\in\Val} ~~\frac{1}{ p(\pi: \val(\pih)-\val(\pi) \leq \Delta)},
\end{align}
where we denote $\pih\defeq \argmax_{\pi\in\Pi} \val(\pi)$. This exactly coincides with the notion of \emph{maximin volume} of \citet{hanneke2023bandit}, which was shown to give a tight characterization of learnability for the special case of \emph{noiseless binary-valued} structured bandits. We discuss the connection to \citet{hanneke2023bandit} in more detail~\arxiv{at the end of this section}\neurips{in \cref{appdx:bandit-logp-more}}. 

It is straightforward to show that for any structured bandit problem, the induced class $\cMFG$ satisfies \cref{asmp:CKL} with $\CKL=\frac12$~\neurips{(\cref{example:bandits})}\arxiv{(see \cref{appdx:CKL-examples} for details)}. This leads to the following lower bound.\loose

\begin{corollary}[Lower bound for stochastic bandits]\label{cor:B-DC}
For the bandit model class $\cMFG$ defined as above, it holds that $\Ts[\cMFG]{\Delta}\geq 2\log\DV{\Val} - 2$.
\end{corollary}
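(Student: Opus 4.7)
The plan is to obtain the corollary as an immediate specialization of \cref{prop:lower-log-p} to the Gaussian bandit class $\cMF$. The only non-trivial step is to verify \cref{asmp:CKL} and to pin down the correct constant $\CKL$; once this is done, the stated inequality $\Ts[\cMF]{\Delta} \geq \estp{2\Delta}-2$ follows by substitution, since the definition \cref{eqn:def-logp-bandit} of $\DV{\cMF}$ matches the general \cref{eqn:def-logp} applied to the induced model class.

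The first step is to fix a reference model $\oM \in (\cA\to\Delta(\R))$ of the same Gaussian form, e.g. $\oM(a) = \normal{v_0(a),1}$ for some fixed function $v_0\in(\cA\to[0,1])$ (a constant $v_0 \equiv c$ for any $c\in[0,1]$ will do). For every $\val \in \Val$, the KL divergence between the induced conditional distributions is given by the standard formula
\begin{align*}
\KLd{M_{\val}(a)}{\oM(a)} \;=\; \tfrac{1}{2}\bigl(\val(a)-v_0(a)\bigr)^2.
\end{align*}
Since both $\val(a)$ and $v_0(a)$ lie in $[0,1]$, the squared difference is at most $1$, giving $\KLd{M_{\val}(a)}{\oM(a)} \leq \tfrac{1}{2}$ uniformly in $a\in\cA$ and $\val\in\Val$. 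Thus \cref{asmp:CKL} is satisfied with $\CKL = \tfrac{1}{2}$, in agreement with the remark preceding \cref{prop:lower-log-p}.

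The second step is to invoke \cref{prop:lower-log-p} with $\cM = \cMF$ and $\CKL = \tfrac{1}{2}$. Because \cref{eqn:def-logp-bandit} is precisely \cref{eqn:def-logp} restricted to $\cMF$ (using the suboptimality gap $g^{M_\val}(a) = \val(a^\star_\val) - \val(a)$ from \cref{example:rew-obs}), the value $\DV{\Val}$ appearing in the corollary coincides with $\DV{\cMF}$ in \cref{prop:lower-log-p}. Substituting gives
\begin{align*}
\Ts[\cMF]{\Delta} \;\geq\; \frac{\estp{2\Delta}-2}{2\CKL} \;=\; \estp{2\Delta}-2,
\end{align*}
which is the claimed bound. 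There is no real obstacle here: the entire content of the corollary is that the Gaussian reward structure and the boundedness of $\val\in[0,1]$ give a universal $\CKL = \tfrac{1}{2}$, so that the generic bound of \cref{prop:lower-log-p} specializes without any loss of constants. If rewards were allowed to be heavier-tailed or unbounded, verifying \cref{asmp:CKL} would be the delicate step; for $[0,1]$-valued Gaussian means this is immediate.
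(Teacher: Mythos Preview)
Your proposal is correct and follows essentially the same approach as the paper: verify \cref{asmp:CKL} for the Gaussian bandit class with $\CKL=\tfrac12$ (the paper uses the specific reference $\oM(a)=\normal{0,1}$, while you allow any fixed $v_0\in[0,1]$, but both give the same bound), and then plug into \cref{prop:lower-log-p}. The paper's own justification is just the sentence preceding the corollary together with \cref{example:bandits} in \cref{appdx:CKL-examples}.
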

Combining this result with the upper bound in \cref{thm:logp-upper}, we obtain the following bounds on the minimax-optimal sample complexity for the structure bandit problem with class $\cH$:
\begin{align}\label{eqn:logp-lower-upper-bandits}
\nstyle
    \log \DV[2\Delta]{\Val} \leqsim 
    \Ts[\cMFG]{\Delta} 
    \leqsim \frac{\DV[\Delta/2]{\Val}}{\Delta^2}.
\end{align}
This implies that $\DV{\cH}$ characterizes learnability for structured bandits.
\begin{theorem}[Structured bandit learnability]
  For a given reward function class $\Val$, the bandit problem class $\cMFG$ is learnable for finite $T$ if and only if $\DV{\Val}<+\infty$ for all $\Delta>0$.
\end{theorem}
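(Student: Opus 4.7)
The plan is to derive the theorem directly from the sandwich bound displayed in \cref{eqn:logp-lower-upper-bandits}, which reads
\begin{align*}
\log \DV[2\Delta]{\Val} \leqsim \Ts[\cMF]{\Delta} \leqsim \frac{\DV[\Delta/2]{\Val}}{\Delta^2},
\end{align*}
so that finite-time learnability is equivalent to $\Ts[\cMF]{\Delta} < +\infty$ for every $\Delta > 0$. The argument splits into the two implications, and both are essentially short.

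For the ``if'' direction, suppose $\DV{\Val} < +\infty$ for all $\Delta > 0$. Fix an arbitrary target suboptimality $\Delta > 0$. The upper bound in \cref{eqn:logp-lower-upper-bandits} (which is just an instantiation of \cref{thm:logp-upper} to the bandit class $\cMF$, combined with \cref{asmp:CKL} holding automatically with $\CKL=\tfrac12$ in the Gaussian reward case) gives $\Ts[\cMF]{\Delta} \lesssim \DV[\Delta/2]{\Val}/\Delta^2 < +\infty$. Hence there exists a finite $T = T(\Delta)$ and an algorithm achieving expected regret at most $\Delta T$, which in particular shows that $\cMF$ is learnable in finitely many rounds.

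For the ``only if'' direction, we prove the contrapositive: if there exists some $\Delta_0 > 0$ with $\DV[\Delta_0]{\Val} = +\infty$, then $\cMF$ cannot be learned in finite rounds. Setting $\Delta = \Delta_0/2 > 0$, the lower bound of \cref{eqn:logp-lower-upper-bandits} (which comes from applying \cref{prop:lower-log-p} to $\cMF$, again using $\CKL = 1/2$ for Gaussian rewards) gives $\Ts[\cMF]{\Delta_0/2} \gtrsim \log \DV[\Delta_0]{\Val} = +\infty$. This means for every finite $T$, every algorithm suffers expected regret at least $(\Delta_0/2)\cdot T \cdot (1 - o(1))$ on some $\val\in\Val$, and so the per-round suboptimality stays bounded away from zero for all finite $T$. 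In particular, no finite-round algorithm achieves vanishing suboptimality, proving non-learnability.

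The only delicate point is to make sure the formal definition of ``learnable in finite rounds'' used in the theorem matches ``$\Ts[\cMF]{\Delta} < \infty$ for all $\Delta > 0$''. Once this equivalence (which is purely by definition, since $\Ts[\cMF]{\Delta}$ is precisely the least $T$ needed to drive the minimax $T$-round regret below $\Delta T$) is noted, the two-sided bound in \cref{eqn:logp-lower-upper-bandits} is exactly the statement of the theorem. The main technical obstacle is not in this proof itself but was already discharged upstream: verifying \cref{asmp:CKL} with a universal constant for Gaussian rewards (handled in \cref{appdx:CKL-examples}), and establishing the upper bound of \cref{thm:logp-upper}; given these, the learnability characterization is immediate.
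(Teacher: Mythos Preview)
Your proposal is correct and matches the paper's approach exactly: the paper presents the theorem as an immediate implication of the sandwich bound \cref{eqn:logp-lower-upper-bandits}, and you have spelled out both directions of that implication carefully. The only minor remark is that in the ``only if'' direction the bound $\Ts[\cMF]{\Delta_0/2}=+\infty$ already gives $\regs{T}>(\Delta_0/2)T$ for every $T$ directly from the definition of $\Ts[\cMF]{\cdot}$, so the $(1-o(1))$ factor you include is not needed.
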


We remark that the lower and upper bound in \eqref{eqn:logp-lower-upper-bandits} cannot be improved in terms of the \dct{} alone:\arxiv{\loose
\begin{itemize}
\item For $K$-armed bandits, we have $\DV[\Delta]{\Val}\leq K$, meaning the upper bound is tight.
\item For 
  $d$-dimensional linear bandits, we have
  $\log\DV[\Delta]{\Val}=\Omega(d)$, meaning the lower bound is nearly tight. 
\end{itemize}}
\neurips{ (1) For $K$-armed bandits, we have $\DV[\Delta]{\Val}\leq K$, meaning the upper bound is tight.
(2) For $d$-dimensional linear bandits, we have
  $\log\DV[\Delta]{\Val}=\Omega(d)$, meaning the lower bound is nearly tight.
}
Nevertheless, the exponential gap in \eqref{eqn:logp-lower-upper-bandits} can be partly mitigated by combining the \ddim with the DEC, as we will show in \cref{sec:DEC-convex}.

\begin{remark}[Noise distribution]
  We note that the upper bound in \eqref{eqn:logp-lower-upper-bandits} applies to any reward distribution with sub-Gaussian noise (cf. \cref{appdx:proof-logp-upper}). Meanwhile, since the lower bound in \cref{cor:B-DC} is specialized to Gaussian noise, it acts as a lower bound for the broader class of sub-Gaussian noise distributions as well. We expect the lower bound to extend to other ``reasonable'' noise distributions.
\end{remark}

\paragraph{Connection to the maximin volume.}
Complementary to the hardness results, \citet{hanneke2023bandit} propose a complexity measure called the \emph{maximin volume} which tightly characterizes the complexity of learning \emph{noiseless binary-valued} structured bandit problems. For such problem classes, the \dct~is exactly the inverse of the maximin volume. While the \dct~can be viewed as a generalization of the maximin volume in this sense, we emphasize that the \dct~directly arises from our general lower bound framework, and is applicable to general decision making problems in the DMSO framework.

\neurips{Additional discussion is deferred to \cref{appdx:bandit-logp-more}.}

\subsection{Improved upper bounds for general decision making}
\label{sec:DEC-convex}
\newcommand{\logp}{\log(1/p_\Delta)}
\newcommand{\logph}{\log(1/p_{(\Delta/2)})}
\newcommand{\epsT}{\Bar{\eps}(T)}
\newcommand{\creg}{c_{\rm reg}}
\newcommand{\rmd}{\mathsf{d}}

To close this section, we derive tighter upper bounds that scale with $\log\DV{\cM}$ by combining the \ddim with the Decision-Estimation Coefficient.\neurips{
For simplicity of presentation, we focus on regret minimization under the setting of \cref{example:rew-obs}, and we assume the following condition to simplify our bounds (the fully general upper bound is detailed in \cref{appdx:ExO}).}\arxiv{We focus on regret minimization, but upper bounds for PAC can be derived similarly.

To state our upper bound in the simplest form possible, we focus on the \tstd{} variant of the DMSO setting, and make the following regularity assumption on the DEC for $\cM$.\footnote{Both restrictions can be removed; the fully general upper bound is detailed in \cref{appdx:ExO} and \cref{appdx:proof-ExO-demo}. Note that a similar growth assumption is also required in the regret upper bound in \cref{thm:decc-lower-demo}.}\loose
}
\begin{assumption}[Regularity of constrained DEC]\label{asmp:reg-rdec}
A function $\rmd: [0,1]\to\R$ is said to have \emph{moderate decay} if $\rmd(\eps)\geq 10\eps~\forall \eps\in[0,1]$, and there exists a constant $c\geq 1$ such that $c\frac{\rmd(\eps)}{\eps}\geq \frac{\rmd(\eps')}{\eps'}$ for all $\eps'\geq \eps$.
We assume the function $\veps\mapsto{}\rdecc_{\eps}(\coM)$, as a function of $\eps$, satisfies moderate decay for a constant $\creg\geq 1$.
\end{assumption}
This condition essentially requires that the DEC for $\coM$ exhibits moderate growth, which means that learning with $\coM$ is not ``too easy''.\arxiv{For a broad range of classes $\cM$ (see, e.g., \citet{foster2023tight}), 
we have $\rdecc_\eps(\coM)\asymp L\eps^\rho$ for some problem-dependent parameter $L>0$ and $\rho\in(0,1]$, so that \cref{asmp:reg-rdec} is automatically satisfied.

}We now state our upper bound, which tightens \cref{thm:decc-lower-demo} by replacing the $\log\abs{\cM}$ dependence in the upper bound with $\estp{\Delta}$ (with the caveat that the upper bound scales with the DEC for the \emph{convexified} model class $\coM$).
\begin{theorem}[Upper bound with DEC and \ddim]\label{thm:ExO-demo}
  Consider the reward maximization variant of the DMSO setting. Let $\cM$ be any class for which \cref{asmp:reg-rdec} holds, and assume that $\Pi$ is finite\arxiv{\footnote{The finiteness assumption on the decision space $\Pi$ can be relaxed, e.g. to require that $\Pi$ admits finite covering number.}}. Let $\epsT\asymp \sqrt{\estp{\Delta}/T}$. Then for any $\Delta>0$, \cref{alg:ExO} (see \cref{appdx:ExO}) ensures that with high probability,
\begin{align*}
\nstyle
    \regdm \leq T\cdot\Delta+O\prn[\big]{\creg T\sqrt{\log T}}\cdot \rdecc_{\epsT}(\co(\cM)).
\end{align*}
\end{theorem}
Restating this upper bound in terms of minimax sample complexity and combining it with the preceding lower bounds yields the following result.
\begin{theorem}\label{thm:logp-upper-lower}
  For any class $\cM$ that satisfies \cref{asmp:CKL} and \ref{asmp:reg-rdec}, we have 
\begin{align}\label{eqn:lower-upper-logp}
\neurips{\textstyle
    \max\sset{ \Tdec{\cM}{\Delta} , \frac{\estp{2\Delta}}{\CKL}} \leqsim \Ts{\Delta} \leqsim \Tdec{\coM}{\Delta} \cdot \estp{\Delta/2},
}
\arxiv{
\max\sset{ \Tdec{\cM}{\Delta} , \frac{\estp{2\Delta}}{\CKL}} \leqsim \Ts{\Delta} \leqsim \Tdec{\coM}{\Delta} \cdot \estp{\Delta/2},
}
\end{align}
up to dependence on $\creg$ and logarithmic factors.
\end{theorem}

In particular, when the model class $\cM$ is convex (i.e. $\coM=\cM$) and $\CKL=O(1)$, \cref{thm:logp-upper-lower} provides lower and upper bounds for learning with $\cM$ that match up to a quadratic factor.\arxiv{
Indeed, for convex model classes, the upper bound of \eqref{eqn:lower-upper-logp} is always tighter than \eqref{eqn:lower-upper-logM} (and also tighter than the result in \citet{foster2022complexity}), as by definition we have
\begin{align*}
    \estp{\Delta} \leq \estp{0} \leq \min\sset{ \log|\cM|, \log|\Pi| }, \qquad \forall \Delta>0.
\end{align*}
Furthermore, $\estp{\Delta}$ can be significantly smaller than $\min\sset{ \log|\cM|, \log|\Pi| }$: 
\begin{itemize}
    \item For $d$-dimensional concave bandits (see e.g. \citet[Section 6.1.2]{foster2021statistical}), the log covering number of $\cM$ is $e^{\Omega(d)}$, while we have $\estp{\Delta}\leq \tO(d)$.
    \item For any class of structured contextual bandits (\cref{sec:CB}), we have $\log|\Pi|=|\cC|\log|\cA|$ and the log covering number of $\cM$ is $\Omega(|\cC|)$, while $\log\DV{\cM}$ can be bounded even when the context space $\cC$ is infinite.
\end{itemize}
This reflects the fact that the \ddim adapts to the intrinsic complexity of estimation in interactive decision making.

\begin{remark}[Necessity of convex hull]
  We note that in general, we cannot replace the $\Tdec{\coM}{\eps}$ by $\Tdec{\cM}{\eps}$ in the upper bound of \eqref{eqn:lower-upper-logp}. Specifically, \citet{chen2023lower} construct a class $\cM$ of partially observable MDPs, such that $\Ts{\eps}$ can be arbitrarily is not polynomial in $\Tdec{\cM}{\eps}$ and $\log|\Pi|$~(see also the discussion in \citet[Appendix I]{chen2023lower}). Therefore, the upper bound of \cref{thm:ExO-demo} cannot be improved to scale with the DEC for $\cM$.
\end{remark}

\begin{remark}[Regularity condition]\label{remark:reg-cond}
In \cref{thm:ExO-demo} and \cref{thm:logp-upper-lower}, we assume the regularity of DEC (\cref{asmp:reg-rdec}) for a clean presentation of the upper bound. Without the regularity condition, the upper bound of \eqref{eqn:lower-upper-logp} becomes slightly worse (with an extra $\Delta^{-1}$ factor). A detailed discussion is deferred to \cref{appdx:proof-ExO-demo} (\cref{remark:reg-cond-full}). 
\end{remark}
}
\neurips{Indeed, for convex model classes, the upper bound of \cref{eqn:lower-upper-logp} is always tighter than \cref{eqn:lower-upper-logM} (and also tighter than the result in \citet{foster2021statistical,foster2022complexity}), as by definition we have
\begin{align*}
    \estp{\Delta} \leq \estp{0} \leq \min\sset{ \log|\cM|, \log|\Pi| }, \qquad \forall \Delta>0.
\end{align*}
As applications, we apply \cref{thm:logp-upper-lower} to structured bandits and contextual bandits (\cref{appdx:logp-more}).
}

\arxiv{\neurips{
\subsection{Additional discussion from \cref{sec:bandit-logp}}\label{appdx:bandit-logp-more}

\textbf{Connection to the maximin volume.}
\citet{hanneke2023bandit} propose \emph{maximin volume}, a complexity measure that tightly characterizes the complexity of learning \emph{noiseless binary-valued} structured bandit problems. For such problem classes, the \dct~is exactly the inverse of the maximin volume. While the \dct~can be viewed as a generalization of the maximin volume in this sense, we emphasize that the \dct~directly arises from our general lower bound framework, and is applicable to general decision making problems in the DMSO framework.

\textbf{Noise distribution.}
  We note that the upper bound in \cref{eqn:logp-lower-upper-bandits} applies to any reward distribution with sub-Gaussian noise (cf. \cref{appdx:proof-logp-upper}). Meanwhile, since the lower bound in \cref{cor:B-DC} is specialized to Gaussian noise, it acts as a lower bound for the broader class of sub-Gaussian noise distributions as well. We expect the lower bound to extend to other ``reasonable'' noise distributions.

}

\neurips{\subsection{Application: Structured bandits}\label{secc:bandits-co}}
\arxiv{\subsubsection{Application: Structured bandits}\label{secc:bandits-co}}

We now instantiate our general results to give tighter guarantees for structured bandits, improving the upper bounds in \cref{sec:bandit-logp}.

\textbf{DEC for structured bandits.}
We consider the same structured bandit protocol as in \cref{sec:bandit-logp}; recall that $\cH$ denotes the reward function class and $\cMFG$ denotes the induced model class. In what follows, we simplify the results in \cref{thm:logp-upper-lower} to be stated purely in terms of $\cH$. For a reference value function $\bval: \Cxt\times\cA\to[0,1]$, we define
\begin{align*}
    \rdecc_\eps(\Val, \bval)\defeq \inf_{p\in\DPi}\sup_{\val\in\Val}\constrs{ \EE_{\pi\sim p}\big[ \val(\pih)-\val(\pi) \big] }{ \EE_{\pi\sim p}(\val(\pi)-\bval(\pi))^2\leq \eps^2 },
\end{align*}
where we recall that $\ah\ldef{}\max_{\pi\in\Pi} \val(\pi)$. We then define the DEC for $\Val$ as
\begin{align*}
    \rdecc_\eps(\Val)=\sup_{\bval\in\co(\Val)} \rdecc_\eps(\Val\cup\set{\bval}, \bval).
\end{align*}
As a corollary of \cref{thm:ExO-upper}, the $\rdecc_\eps(\Val)$ and $\estf{\Delta}$ together provide an upper bound for structured bandits with $\cH$.
\begin{theorem}\label{thm:bandit-co-upper-lower}
Let $\cH$ be given. Suppose that $\Pi$ is finite, and that $\veps\mapsto{}\rdecc_\eps(\coH)$ satisfies moderate decay as a function of $\eps$ (\cref{asmp:reg-rdec}) with constant $\creg$. Let $\epsT\asymp \sqrt{\estf{\Delta}/T}$. The \cref{alg:ExO} ensures that high probability,
\begin{align*}
\nstyle
    \regdm \leq T\cdot\Delta+O(\creg T\sqrt{\log T})\cdot \rdecc_{\epsT}(\coH).
\end{align*}
As a corollary, the minimax sample complexity of structured bandit learning with $\cH$ is bounded as
\begin{align}\label{eqn:lower-upper-bandit-co}
    \max\sset{ \Tdec{\Val}{\Delta} , \estf{2\Delta} } \leqsim \Tsf{\Delta} \leqsim \Tdec{\co(\Val)}{\Delta} \cdot \estf{\Delta/2},
\end{align}
where we denote $\Tdec{\Val}{\Delta}=\inf_{\eps\in(0,1)}\set{ \eps^{-2}: \rdecc_{\eps}(\Val)\leq \Delta }$ (following \eqref{eqn:def-TDEC}) and omit logarithmic factors and dependence on the constant $\creg$.
\end{theorem}

There are many standard structured bandit problems where the value function class $\Val$ is convex, including multi-armed bandits, linear bandits, and non-parametric bandits (with smoothness~\citep{rigollet2010nonparametric}, or concavity~\citep{lattimore2020improved}, or sub-modularity~\citep{nie2022explore}, or etc.). For these problem classes, the complexity of no-regret learning is completely characterized by the DEC of $\Val$ and the \ddim  $\DV{\Val}$ (up to a quadratic factor).

We also note that the lower bound of \eqref{eqn:lower-upper-bandit-co} is proven for Gaussian noise, while our upper bound applies to a much more general class of reward distributions (with bounded variance).

\arxiv{
\paragraph{Additional result: Structured contextual bandits}
We also apply the results of \cref{thm:ExO-demo} and \cref{thm:logp-upper-lower} to structured contextual bandits, providing new characterization in terms of the DEC and the \dct.
The detailed discussion is presented in \cref{sec:CB}. 
}

}

\section*{Acknowledgements}
We acknowledge support from 
ARO through award W911NF-21-1-0328, the Simons Foundation and the NSF through award DMS-2031883, as well as NSF PHY-2019786.

\bibliography{references}

\newpage
\renewcommand{\contentsname}{Contents of Appendix}
\addtocontents{toc}{\protect\setcounter{tocdepth}{2}}
{\hypersetup{hidelinks}
\tableofcontents
}

\appendix

\counterwithin{theorem}{section}

\neurips{
\section{Related work}\label{sec:related}

}

\section{Additional Background on DMSO Framework}\label{sec:background}

The DMSO framework (\cref{sec:prelim}) encompasses a wide range of learning goals beyond the reward maximization setting~\citep{foster2021statistical,foster2023tight}, including reward-free learning, model estimation, and preference-based learning~\citep{chen2022unified}, and also multi-agent decision making and partial monitoring~\citep{foster2023complexity}. We provide two examples below for illustration.

\newcommand{\Cmp}{\mathbb{C}}
\begin{example}[Preference-based learning]
In preference-based learning, each model $M\in\cM$ is assigned with a comparison function $\Cmp\sups{M}:\Pi\times\Pi\to\R$ (where $\Cmp\sups{M}(\pi_1,\pi_2)$ typically the probability of $\tau_1\succ \tau_2$ for $\tau_1\sim (M,\pi_1)$, $\tau_2\sim (M,\pi_2)$), and the risk function is specified by $\gm(\pi)=\max_{\pis}\Cmp\sups{M}(\pis,\pi)$. \citet{chen2022unified} provide lower and upper bounds for this setting in terms of Preference-based DEC (PBDEC). 
\end{example}

\newcommand{\Dist}{\mathrm{Dist}}
\begin{example}[Interactive estimation]\label{example:interactive-est}
In the setting of interactive estimation (a generalized PAC learning goal), each model $M\in\cM$ is assigned with a parameter $\theta_M\in\Theta$, which is the parameter that the agent aims to estimate. The decision space $\Pi=\Pi_0\times\Theta$, where each decision $\pi\in\Pi$ consists of $\pi=(\pi_0,\theta)$, where $\pi_0$ is the \emph{explorative} policy to interact with the model\footnote{In other words, $M(\pi)$ only depends on $\pi$ through $\pi_0$.}, and $\theta$ is the estimator of the model parameter. In this setting, we define $\gm(\pi)=\Dist(\theta_M,\theta)$ for certain distance $\Dist(\cdot,\cdot)$. 

This setting is an interactive version of the statistical estimation task, and it is also a generalization of the model estimation task studied in \citet{chen2022unified}. Natural examples include estimating some coordinates of the parameter $\theta$ in linear bandits. We provide nearly tight guarantee for this setting in \cref{sec:interactive-est}. 
\end{example}

\paragraph{Applicability of our results} Our general \alglower~\cref{thm:alg-lower-Hellinger} applies to any generalized no-regret / PAC learning goal (\cref{sec:prelim}). Therefore, our risk lower bound in terms of quantile \pDEC~\cref{prop:quantile-dec-lower} and \dct~lower bound~\cref{prop:lower-log-p} both apply to any generalized learning goal. For a concrete example, see \cref{sec:interactive-est} for the application to interactive estimation.

\subsection{Examples for \cref{asmp:CKL}}\label{appdx:CKL-examples}

In this section, we provide three general types of model classes where \cref{asmp:CKL} holds with mild $\CKL$. It is worth noting that in \cref{asmp:CKL}, the reference model $\oM$ does \emph{not} necessarily belong to $\coM$.

\begin{example}[Gaussian bandits]\label{example:bandits}
Suppose that $\cH\subseteq (\cA\to[0,1])$ is a class of mean value function, and $\cMF$ is the class of the model $M$ associated with a $h\sups{M}\in\cH$:
\begin{align*}
    M(\pi)=\normal{ h\sups{M}(\pi),1 }, \qquad \pi\in\cA.
\end{align*}
Then, consider the reference model $\oM$ given by $\oM(\pi)=\normal{0,1} \forall \pi\in\cA$. It is clear that for any $\pi$, and model $M\in\cMF$,
\begin{align*}
    \KLd{M(\pi)}{\oM(\pi)}=\frac12 h\sups{M}(\pi)^2 \leq \frac12,
\end{align*}
and hence \cref{asmp:CKL} holds with $\CKL=\frac12$.
\end{example}

\begin{example}[Problems with finite observations]\label{example:finite-O}
Suppose that the observation space $\cO$ is finite. Then, consider the reference model $\oM$ given by $\oM(\pi)=\unif(\cO) \forall \pi\in\Pi$. It holds that
\begin{align*}
    \KLd{M(\pi)}{\oM(\pi)}\leq \log|\cO|, \qquad\forall \pi\in\Pi,
\end{align*}
and hence \cref{asmp:CKL} holds with $\CKL=\log|\cO|$.
\end{example}

\cref{example:finite-O} can further be generalized to infinite observation space, as long as every model in $\cM$ admits a bounded density function with respect to the same base measure.

\begin{example}[Contextual bandits]\label{example:CBs}
Suppose that $\cH\subseteq (\cC\times\cA\to[0,1])$ is a class of mean value function, and $\cMF$ is the class of the model $M$ specified by a value function $h\sups{M}\in\cH$ and a context distribution $\nu_M\in\Delta(\cC)$. More specifically, for any $\pi\in\Pi=(\cC\to\cA)$, $M(\pi)$ is the distribution of $(\cxt,a,r)$, generated by $\cxt\sim \nu_M$, $a=\pi(\cxt)$, and $r\sim \normal{h\sups{M}(\cxt,a),1}$. 

Then, consider the reference model $\oM$ specified by $\nu_{\oM}=\unif(\cC)$ and $h^{\oM}\equiv 0$. It is clear that for any $\pi$, and model $M\in\cMF$,
\begin{align*}
    \KLd{M(\pi)}{\oM(\pi)}\leq \log|\cC|+1
\end{align*}
and hence \cref{asmp:CKL} holds with $\CKL=\log|\cC|+1$.
\end{example}

The factor of $\log|\cC|$ in \cref{example:CBs} is due to the definition \cref{eqn:def-logp-CB} of $\estf{\Delta}$, where we take supremum over all context distribution $\mu$. This factor can be removed if we instead restrict the model class to have a common context distribution (i.e., the setting where context distribution is known or can be estimated from samples).

\section{Technical Tools}

\newcommand{\filt}{\cF}

The following lemma is the ``chain rule'' of Hellinger distance~\citep{jayram2009hellinger} (see also \citet[Lemma 11.5.3]{duchi-notes} and \citet[Lemma D.2]{foster2024online}).
\begin{lemma}[Sub-additivity for squared Hellinger distance]
\label{lem:Hellinger-chain}
  Let $(\cX_1,\filt_1),\ldots,(\cX_T,\filt_T)$ be a sequence of measurable spaces, and let $\cX\ind{t}=\prod_{i=1}^{t}\cX_i$ and $\filt\ind{t}=\bigotimes_{i=1}^{t}\filt_i$. For each $t$, let $\PP\ind{t}(\cdot\mid\cdot)$ and $\QQ\ind{t}(\cdot\mid\cdot)$ be probability kernels from $(\cX\ind{t-1},\filt\ind{t-1})$ to $(\cX_t,\filt_t)$. 
  
  Let $\PP$ and $\QQ$ be the laws of $X_1,\ldots,X_T$ under $X_t\sim\PP\ind{t}(\cdot\mid X_{1:t-1})$ and $X_t\sim\QQ\ind{t}(\cdot\mid X_{1:t-1})$ respectively. Then it holds that
\begin{align}
  \label{eq:hellinger_chain_rule}
  \Dhels{\PP}{\QQ}\leq 
7~\En_{\PP}\brk*{\sum_{t=1}^{T}\Dhels{\PP\ind{t}(\cdot\mid X_{1:t-1})}{\QQ\ind{t}(\cdot\mid X_{1:t-1})}}.
\end{align}
\end{lemma}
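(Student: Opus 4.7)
The statement to prove is a chain-rule (sub-additivity) inequality for squared Hellinger distance between sequentially-defined probability measures, with a universal multiplicative constant $7$ in place of the sharp constant $2$ that holds for product measures.

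\textbf{Overall strategy.} I would pass through the Hellinger affinity and work with the likelihood ratio process. Write $\Dhels{\PP}{\QQ} = 2(1 - \rho(\PP,\QQ))$ where $\rho(\PP,\QQ) = \int \sqrt{d\PP\, d\QQ}$. Because the laws factor as $\PP(dx_{1:T}) = \prod_t \PP\ind{t}(dx_t|x_{1:t-1})$ and similarly for $\QQ$, the joint likelihood ratio factorizes: $d\QQ/d\PP = \prod_t W_t^2$ with $W_t := \sqrt{d\QQ\ind{t}/d\PP\ind{t}(X_t|X_{1:t-1})}$. Hence $\rho(\PP,\QQ) = \E_\PP\!\left[\prod_{t=1}^T W_t\right]$, and the per-step conditional affinity is $\rho_t(X_{1:t-1}) := \E_\PP[W_t \mid X_{1:t-1}] = 1 - \tfrac12 h_t^2$, where $h_t^2 := \Dhels{\PP\ind{t}(\cdot|X_{1:t-1})}{\QQ\ind{t}(\cdot|X_{1:t-1})}$.

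\textbf{Key reduction.} The goal is then to prove
\[
1 - \E_\PP\!\left[\prod_{t=1}^T W_t\right] \;\leq\; \tfrac{7}{2}\,\E_\PP\!\left[\sum_{t=1}^T h_t^2\right].
\]
I would proceed by induction on $T$, peeling off one factor at a time via tower: $\E_\PP[\prod_{t\leq T} W_t] = \E_\PP[\prod_{t<T} W_t \cdot \rho_T(X_{1:T-1})]$. Writing $\rho_T = 1 - h_T^2/2$ and telescoping, the difference $1 - \rho(\PP,\QQ)$ decomposes as $\sum_t \E_\PP[\,(\prod_{s<t} W_s)\cdot \tfrac12 h_t^2\,]$, so the problem reduces to controlling the ``past likelihood ratio'' $L_{<t} := \prod_{s<t} W_s$ multiplied by the current conditional squared Hellinger distance.

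\textbf{Main obstacle and truncation.} The challenge is that $L_{<t}$ can be arbitrarily large with small $\PP$-probability; a naive Cauchy--Schwarz on $\E_\PP[L_{<t} \cdot h_t^2]$ yields an unfavorable $\sqrt{T}$-type blowup, not a constant. The remedy is a truncation/change-of-measure argument: split the sum by whether $L_{<t}$ exceeds a threshold (say $L_{<t} \leq \tau$ or not), bounding
\[
\E_\PP[L_{<t}\cdot h_t^2] \;\leq\; \tau\, \E_\PP[h_t^2] + \E_\PP[L_{<t}\cdot \mathbf{1}\{L_{<t}>\tau\}\cdot 2],
\]
using $h_t^2 \leq 2$ in the tail. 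The second term is handled by observing that $L_{<t}^2$ is the Radon--Nikodym derivative to a mixed measure, and that $\E_\PP[L_{<t}^2]$ stays controlled whenever the cumulative Hellinger divergence so far is controlled; an inductive hypothesis on the partial products lets one close the bookkeeping. Optimizing $\tau$ (or iterating the argument in dyadic layers) produces a universal constant, and a slightly loose choice of constants gives the value $7$ in the statement.

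\textbf{Cleanup.} Finally, I would verify the base case $T=1$ (trivial, with constant $1$) and check that the inductive constant $c$ in $1 - \rho \leq c\, \E_\PP[\sum_t h_t^2]$ satisfies the recursion produced by the truncation step; choosing thresholds so that the recursion stabilizes at $c = 7/2$ gives the desired bound, which translates back to $\Dhels{\PP}{\QQ} \leq 7\,\E_\PP[\sum_t h_t^2]$ after multiplying by $2$. The constant is clearly not tight; the role of $7$ is simply to absorb the truncation slack into a clean universal number.
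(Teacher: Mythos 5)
Your setup is correct: passing to the Hellinger affinity $\rho(\PP,\QQ)=\E_\PP[\prod_t W_t]$, factorizing $\sqrt{d\QQ/d\PP}=\prod_t W_t$, and telescoping $1-\prod_t W_t=\sum_t L_{<t}(1-W_t)$ with $L_{<t}=\prod_{s<t}W_s$, so that $1-\rho(\PP,\QQ)=\tfrac12\E_\PP[\sum_t L_{<t}h_t^2]$. You also correctly identify the uncontrolled weight $L_{<t}$ as the crux. However, the truncation-plus-induction step as described does not close, and this is a genuine gap rather than an omitted computation.

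After your split, the tail contribution is $\sum_{t=1}^T \E_\PP[L_{<t}\mathbf{1}\{L_{<t}>\tau\}]$. The only control you invoke is $\E_\PP[L_{<t}^2]\leq 1$, which gives $\E_\PP[L_{<t}\mathbf{1}\{L_{<t}>\tau\}]\leq 1/\tau$ per summand and hence a residual of order $T/\tau$. Optimizing $\tau$ against the main term $\tau\,\E_\PP[\sum_t h_t^2]$ then yields a bound of order $\sqrt{T\cdot\E_\PP[\sum_t h_t^2]}$, not the linear bound the lemma asserts; dyadic layering does not remove the $T$-dependence, since each of the $T$ summands still contributes an $O(1/\tau)$ tail. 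The remark that ``$\E_\PP[L_{<t}^2]$ stays controlled whenever the cumulative Hellinger divergence so far is controlled'' is also off: $L_{<t}^2$ is the likelihood ratio $d\QQ_{1:t-1}/d\PP_{1:t-1}$, so $\E_\PP[L_{<t}^2]\leq 1$ holds unconditionally and provides no extra leverage from a small Hellinger budget. Even reading the ``inductive hypothesis on the partial products'' charitably as $\E_\PP[(L_{<t}-1)^2]=D_{\mathrm H}^2(\PP_{1:t-1},\QQ_{1:t-1})\leq 7\,\E_\PP\bigl[\sum_{s<t}h_s^2\bigr]$ from the lemma applied to the $(t-1)$-round prefix, the recursion does not terminate: whatever sub-problem you pass to still carries the random weight $L_{<t}$ multiplying the remaining Hellinger sum, so the same uncontrolled factor reappears.

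Note also that the paper does not prove this lemma itself; it imports it from the cited references. Obtaining a constant independent of $T$ genuinely requires more than optimizing a single truncation threshold (for instance, a stopping-time argument on the Doob compensator of the supermartingale $L_t$, or a careful two-sided change of measure between $\PP$ and $\QQ$), and the value $7$ emerges from that specific bookkeeping rather than from absorbing slack into a convenient universal constant.
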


In particular, given a $T$-round algorithm $\alg$ and a model $M$, we can consider random variables $X_1=(\pi^1,o^1),\cdots,X_T=(\pi^T,o^T)$. Then, $\PP\sups{M,\alg}(X_t=\cdot\mid X_{1:t-1})$ is the distribution of $(\pi^t,o^t)$, where $\pi^t\sim p^t(\cdot\mid \pi^1,o^1,\cdots,\pi^{t-1},o^{t-1})$, and $o^t\sim M(\pi^t)$. Therefore, applying \cref{lem:Hellinger-chain} to $\DH{ \PP\sups{M,\alg}, \PP\sups{\oM,\alg} }$ gives the following corollary.
\begin{corollary}
For any $T$-round algorithm $\alg$, it holds that
\begin{align*}
    \frac{1}{2}\DTV{ \PP\sups{M,\alg}, \PP\sups{\oM,\alg} }^2
    \leq \DH{ \PP\sups{M,\alg}, \PP\sups{\oM,\alg} }
    \leq 
    7T\cdot\EE_{\pi\sim p_{\oM,\alg}}\brac{ \DH{ M(\pi), \oM(\pi) } }.
\end{align*}
\end{corollary}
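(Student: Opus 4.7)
The plan is to prove the two claimed inequalities separately, treating the first as a purely measure-theoretic fact and reserving the main work for the second, where the preceding Hellinger chain rule (\cref{lem:Hellinger-chain}) does essentially all the heavy lifting.

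For the first inequality $\tfrac12\DTV{\PP^{M,\alg},\PP^{\oM,\alg}}^2 \le \DH{\PP^{M,\alg},\PP^{\oM,\alg}}$, I would invoke the textbook comparison $\DTV{P,Q}^2 \le 2\,\DH{P,Q}$. A self-contained derivation writes $\DTV{P,Q} = \tfrac12\int|\sqrt{p}-\sqrt{q}|(\sqrt{p}+\sqrt{q})\,d\mu$ with respect to a common dominating measure and applies Cauchy--Schwarz, using $\int(\sqrt{p}+\sqrt{q})^2 d\mu \le 2\int(p+q)\,d\mu = 4$ together with $\int(\sqrt{p}-\sqrt{q})^2 d\mu = 2\,\DH{P,Q}$. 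No obstacles here.

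For the second inequality, the plan is to apply \cref{lem:Hellinger-chain} with $X_t = (\pi^{(t)}, o^{(t)}) \in \Pi \times \cO$, $\PP = \PP^{M,\alg}$, $\QQ = \PP^{\oM,\alg}$. The crucial structural observation is that under both laws the conditional kernel at round $t$ factors identically in its decision marginal: $\PP^{M,\alg}(X_t \mid X_{1:t-1}) = q^{(t)}(\cdot \mid \cH^{(t-1)}) \otimes M(\cdot \mid \pi^{(t)})$ and $\PP^{\oM,\alg}(X_t \mid X_{1:t-1}) = q^{(t)}(\cdot \mid \cH^{(t-1)}) \otimes \oM(\cdot \mid \pi^{(t)})$, because the algorithm's sampling kernel for $\pi^{(t)}$ depends on the observed history alone and not on the underlying model. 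Since the $\pi^{(t)}$-marginals agree, the per-step Hellinger distance of products reduces exactly to
\begin{align*}
    \DH{\PP^{M,\alg}(X_t\mid X_{1:t-1}),\,\PP^{\oM,\alg}(X_t\mid X_{1:t-1})}
    \;=\; \EE_{\pi \sim q^{(t)}(\cdot\mid \cH^{(t-1)})}\!\brac{\DH{M(\pi), \oM(\pi)}}.
\end{align*}

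Plugging this into the chain rule and using the symmetry of $\DH{\cdot,\cdot}$ (so that the outer expectation may be taken under $\PP^{\oM,\alg}$ rather than $\PP^{M,\alg}$), I would swap the sum and expectation by Fubini to obtain
\begin{align*}
    \DH{\PP^{M,\alg},\PP^{\oM,\alg}} \;\le\; 7\sum_{t=1}^T \EE_{\pi \sim \EE^{\oM,\alg}[q^{(t)}(\cdot\mid \cH^{(t-1)})]}\!\brac{\DH{M(\pi),\oM(\pi)}},
\end{align*}
and then recognize the rescaled average $\tfrac{1}{T}\sum_{t=1}^T \EE^{\oM,\alg}[q^{(t)}(\cdot\mid\cH^{(t-1)})]$ as precisely $q_{\oM,\alg}$ from the definition in Section~3.2, producing the factor $7T$ in front of $\EE_{\pi \sim q_{\oM,\alg}}[\DH{M(\pi),\oM(\pi)}]$. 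The only nontrivial step is checking the product-kernel identity and the associated tensorization of $\DH{\cdot,\cdot}$; since this hinges on the algorithm-is-oblivious-to-the-model observation, it is conceptually simple, and I don't anticipate any real obstacle beyond bookkeeping.
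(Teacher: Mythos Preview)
Your proposal is correct and follows essentially the same route as the paper: set $X_t=(\pi^{(t)},o^{(t)})$, apply \cref{lem:Hellinger-chain} (with the roles of $\PP$ and $\QQ$ swapped by symmetry so the outer expectation is under $\oM$), and use that the shared algorithm kernel makes the per-step Hellinger equal to $\EE_{\pi\sim q^{(t)}(\cdot\mid\cH^{(t-1)})}\DH{M(\pi),\oM(\pi)}$. Note that the quantity you arrive at is indeed the average exploration distribution $q_{\oM,\alg}$ from Section~3.2; the $p_{\oM,\alg}$ appearing in the corollary statement reflects the paper's local switch (in the appendix paragraph preceding the corollary) to using $p^t$ for the per-round sampling kernels, so the objects agree.
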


\begin{lemma}[{\citet[Lemma A.4]{foster2021statistical}}]\label{lemma:concen}
For any sequence of real-valued random variables $\left(X_t\right)_{t \leq T}$ adapted to a filtration $\left(\cF_t\right)_{t \leq T}$, it holds that with probability at least $1-\delta$, for all $t \leq T$,
$$
\sum_{s=1}^{t} -\log \cond{\exp(-X_s)}{\cF_{s-1}} \leq \sum_{s=1}^{t} X_s +\log \left(1/\delta\right).
$$
\end{lemma}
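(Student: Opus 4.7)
The plan is to reduce the statement to an application of Ville's inequality for a non-negative martingale, which is the standard route for time-uniform exponential tail bounds of this shape.

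First, I would exponentiate and rearrange the claimed inequality. The desired event ``for all $t \leq T$, $\sum_{s=1}^{t}-\log\cond{\exp(-X_s)}{\cF_{s-1}} \leq \sum_{s=1}^{t}X_s + \log(1/\delta)$'' is equivalent to
\[
M_t \leq 1/\delta \quad \text{for all } t\leq T,
\]
where I define the process
\[
M_t \defeq \prod_{s=1}^{t}\frac{\exp(-X_s)}{\cond{\exp(-X_s)}{\cF_{s-1}}},\qquad M_0 \defeq 1.
\]
So the task becomes to control $\max_{t\leq T} M_t$.

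Next, I would verify that $(M_t)_{0\leq t\leq T}$ is a non-negative $(\cF_t)$-martingale with $\E[M_t]=1$. The key computation is
\[
\cond{M_t}{\cF_{t-1}} \;=\; M_{t-1}\cdot \cond{\frac{\exp(-X_t)}{\cond{\exp(-X_t)}{\cF_{t-1}}}}{\cF_{t-1}} \;=\; M_{t-1},
\]
using that the denominator is $\cF_{t-1}$-measurable and non-zero (since $\exp(-X_t)>0$ a.s., the conditional expectation is a.s.\ positive). If $\cond{\exp(-X_s)}{\cF_{s-1}} = +\infty$ on some event, the corresponding summand $-\log\cond{\exp(-X_s)}{\cF_{s-1}}$ is $-\infty$ and the inequality is trivial there, so we may restrict attention to the complementary event where the martingale is well-defined.

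Finally, I would invoke Ville's inequality (equivalently, Doob's maximal inequality applied to the non-negative supermartingale $M_t$): for any $\lambda>0$,
\[
\PP\!\left(\max_{t\leq T} M_t \geq \lambda\right) \leq \frac{\E[M_0]}{\lambda} = \frac{1}{\lambda}.
\]
Taking $\lambda = 1/\delta$ gives that with probability at least $1-\delta$, $M_t \leq 1/\delta$ for every $t\leq T$ simultaneously; taking logarithms recovers the stated bound uniformly in $t$. There is no real obstacle here — the only care needed is the measurability/positivity check for the denominator so that $M_t$ is a genuine martingale, which I handle by the trivial-case remark above.
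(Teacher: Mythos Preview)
Your proposal is correct and is the standard martingale/Ville's inequality argument for this kind of time-uniform exponential bound. Note that the paper itself does not supply a proof of this lemma --- it is quoted as a technical tool from \citet[Lemma A.4]{foster2021statistical} --- so there is no paper proof to compare against; your argument is exactly the one that underlies the cited result.
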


\begin{lemma}\label{lemma:Hellinger-cond}
    For any pair of random variable $(X,Y)$, it holds that
    \begin{align*}
        \EE_{X\sim\PP_X}\brac{\DH{\PP_{Y|X}, \QQ_{Y|X}}}\leq 2\DH{\PP_{X,Y}, \QQ_{X,Y}}.
    \end{align*}
\end{lemma}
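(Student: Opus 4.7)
The plan is to work with the Bhattacharyya coefficient form of the squared Hellinger distance. Under the paper's normalization $D_f(P,Q) = \int f(dP/dQ)\, dQ$ with $f(x) = \frac{1}{2}(\sqrt{x}-1)^2$, we have $\DH{P,Q} = 1 - \int \sqrt{dP\, dQ}$. Writing the joint measures in product form $dP_{X,Y}(x,y) = dP_X(x)\, dP_{Y\mid X}(y\mid x)$ (and similarly for $Q$), and abbreviating $h(x) := \DH{P_{Y\mid X=x}, Q_{Y\mid X=x}}$, the first step is to derive the conditioning identity
\begin{align*}
    \DH{P_{X,Y},\, Q_{X,Y}}
    \;=\; \DH{P_X, Q_X} + \int \sqrt{dP_X\, dQ_X}\,\cdot\, h(x),
\end{align*}
which follows from Fubini and the factorization $\sqrt{dP_{X,Y}\,dQ_{X,Y}} = \sqrt{dP_X\,dQ_X}\cdot\sqrt{dP_{Y\mid X}\,dQ_{Y\mid X}}$ applied inside the integral representation of the Bhattacharyya coefficient.

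The second step is to relate the left-hand side of the lemma, $\int h\, dP_X$, to the right-hand side of the identity above. Using the algebraic identity $dP_X = (\sqrt{dP_X} - \sqrt{dQ_X})^2 + 2\sqrt{dP_X\, dQ_X} - dQ_X$, we decompose
\begin{align*}
    \int h(x)\, dP_X(x)
    \;=\; \int h\,\bigl(\sqrt{dP_X}-\sqrt{dQ_X}\bigr)^2 \;+\; 2\int h\,\sqrt{dP_X\, dQ_X} \;-\; \int h\, dQ_X.
\end{align*}
Because $h(x) \in [0,1]$, the first integral is at most $\int(\sqrt{dP_X}-\sqrt{dQ_X})^2 = 2\DH{P_X, Q_X}$, and the third integral is nonnegative and can be dropped. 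Substituting the conditioning identity then gives
\begin{align*}
    \int h\, dP_X
    \;\leq\; 2\DH{P_X, Q_X} + 2\int \sqrt{dP_X\, dQ_X}\, h
    \;=\; 2\DH{P_{X,Y},\, Q_{X,Y}},
\end{align*}
which is exactly the claim.

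The only nontrivial step is spotting the algebraic identity $a = (\sqrt{a}-\sqrt{b})^2 + 2\sqrt{ab} - b$, which pairs perfectly with the conditioning identity to yield a factor of exactly $2$ rather than something worse; the rest is routine manipulation of the Bhattacharyya form and Fubini. No measure-theoretic subtleties arise beyond assuming a common dominating measure for $P_{X,Y}$ and $Q_{X,Y}$, which is standard and can be introduced by $(P_{X,Y}+Q_{X,Y})/2$ if needed.
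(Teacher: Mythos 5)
Your proof is correct. The paper states \cref{lemma:Hellinger-cond} in the technical-tools appendix without supplying a proof, so there is no argument in the paper to compare against. Your route is the natural one and is tight: writing $h(x)\defeq\DH{\PP_{Y\mid X=x},\QQ_{Y\mid X=x}}\in[0,1]$, the Bhattacharyya-coefficient conditioning identity
\begin{align*}
\DH{\PP_{X,Y},\QQ_{X,Y}} \;=\; \DH{\PP_X,\QQ_X} \;+\; \int \sqrt{d\PP_X\,d\QQ_X}\;h(x),
\end{align*}
which follows from Fubini and the factorization $\sqrt{d\PP_{X,Y}\,d\QQ_{X,Y}}=\sqrt{d\PP_X\,d\QQ_X}\cdot\sqrt{d\PP_{Y\mid X}\,d\QQ_{Y\mid X}}$, combines with the pointwise identity $d\PP_X=(\sqrt{d\PP_X}-\sqrt{d\QQ_X})^2+2\sqrt{d\PP_X\,d\QQ_X}-d\QQ_X$ and the bounds $0\le h\le 1$ exactly as you argue: the $(\sqrt{d\PP_X}-\sqrt{d\QQ_X})^2$ integral contributes at most $2\DH{\PP_X,\QQ_X}$, the $-\int h\,d\QQ_X$ term is dropped by nonnegativity, and substituting the conditioning identity gives $\int h\,d\PP_X\le 2\DH{\PP_{X,Y},\QQ_{X,Y}}$. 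The argument is complete and I see no gaps.
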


\begin{lemma}\label{lemma:diff-mean}
    Suppose that for a random variable $X$, its mean and variance under $\PP$ is $\mu_\PP$ and $\sigma_\PP^2$, and its mean and variance under $\QQ$ is $\mu_\QQ$ and $\sigma_\QQ^2$. Then it holds that
    \begin{align*}
        \abs{\mu_\PP-\mu_\QQ}^2\leq 4\paren{\sigma_\PP^2+\sigma_\QQ^2+\frac{1}{2}\abs{\mu_\PP-\mu_\QQ}^2}\DH{\PP,\QQ}.
    \end{align*}
    In particular, when $\mu_\PP,\mu_\QQ,\sigma_\PP,\sigma_\QQ\in[0,1]$, we have $\DH{\PP,\QQ}\geq \frac{1}{10}\abs{\mu_\PP-\mu_\QQ}^2$.

    On the other hand, when $\PP=\normal{\mu_\PP,1}, \QQ=\normal{\mu_\QQ,1}$, then $\DH{\PP,\QQ}\leq \frac{1}{8}\abs{\mu_\PP-\mu_\QQ}^2$.
\end{lemma}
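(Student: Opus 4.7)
The plan is to prove the three claims separately, using only Cauchy–Schwarz and the definition $\DH{\PP,\QQ}=\frac{1}{2}\int(\sqrt{p}-\sqrt{q})^2\,d\mu$ (equivalently, $1-\int\sqrt{pq}\,d\mu$) with respect to a common dominating measure.

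\textbf{Main inequality.} The key trick is that, because $\int p = \int q = 1$, we have $\mu_\PP - \mu_\QQ = \int (x-c)(p-q)\,d\mu$ for \emph{any} constant $c\in\R$. Writing $p-q = (\sqrt{p}-\sqrt{q})(\sqrt{p}+\sqrt{q})$ and applying Cauchy–Schwarz yields
\begin{align*}
|\mu_\PP-\mu_\QQ|^2 \le \Bigl(\int (x-c)^2(\sqrt{p}+\sqrt{q})^2\,d\mu\Bigr)\Bigl(\int(\sqrt{p}-\sqrt{q})^2\,d\mu\Bigr).
\end{align*}
The second factor equals $2\DH{\PP,\QQ}$. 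For the first, the inequality $(\sqrt{p}+\sqrt{q})^2 \le 2(p+q)$ (from $2\sqrt{pq}\le p+q$) gives
\begin{align*}
\int(x-c)^2(\sqrt{p}+\sqrt{q})^2\,d\mu \le 2\bigl(\En_\PP[(X-c)^2]+\En_\QQ[(X-c)^2]\bigr).
\end{align*}
I would then choose $c = (\mu_\PP+\mu_\QQ)/2$, which is the optimal centering: a direct calculation gives $\En_\PP[(X-c)^2] + \En_\QQ[(X-c)^2] = \sigma_\PP^2 + \sigma_\QQ^2 + \tfrac{1}{2}|\mu_\PP-\mu_\QQ|^2$. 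Combining these pieces gives exactly the stated bound $|\mu_\PP-\mu_\QQ|^2 \le 4(\sigma_\PP^2+\sigma_\QQ^2+\tfrac{1}{2}|\mu_\PP-\mu_\QQ|^2)\DH{\PP,\QQ}$.

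\textbf{Corollary for $[0,1]$-bounded means and variances.} Here $\sigma_\PP^2+\sigma_\QQ^2 \le 2$ and $|\mu_\PP-\mu_\QQ|^2\le 1$, so the parenthetical factor is at most $2+\tfrac{1}{2}=\tfrac{5}{2}$, which plugs into the main inequality to give $|\mu_\PP-\mu_\QQ|^2 \le 10\,\DH{\PP,\QQ}$, i.e.\ $\DH{\PP,\QQ}\geq\tfrac{1}{10}|\mu_\PP-\mu_\QQ|^2$ as claimed.

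\textbf{Gaussian upper bound.} For $\PP=\normal{\mu_\PP,1}$ and $\QQ=\normal{\mu_\QQ,1}$, I would compute the Bhattacharyya coefficient $\int\sqrt{pq}\,d\mu$ in closed form by completing the square in the exponent: one obtains $\int\sqrt{pq}\,d\mu = \exp(-|\mu_\PP-\mu_\QQ|^2/8)$. Hence $\DH{\PP,\QQ} = 1 - \exp(-|\mu_\PP-\mu_\QQ|^2/8) \le |\mu_\PP-\mu_\QQ|^2/8$, using $1-e^{-x}\le x$ for $x\ge 0$.

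\textbf{Main obstacle.} The only subtle step is getting the stated form $\sigma_\PP^2+\sigma_\QQ^2+\tfrac{1}{2}|\mu_\PP-\mu_\QQ|^2$ rather than the weaker $\En_\PP[X^2]+\En_\QQ[X^2]$ that a naive Cauchy–Schwarz would yield; this requires the centering trick $c = (\mu_\PP+\mu_\QQ)/2$, which is the sole genuinely non-routine observation in the proof. The rest is algebra and a standard Gaussian integral.
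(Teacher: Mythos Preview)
Your proof is correct and follows essentially the same approach as the paper: the centering at $c=(\mu_\PP+\mu_\QQ)/2$, the factorization $p-q=(\sqrt{p}-\sqrt{q})(\sqrt{p}+\sqrt{q})$, Cauchy--Schwarz, and the bound $(\sqrt{p}+\sqrt{q})^2\le 2(p+q)$ are exactly the steps the paper uses. The paper's written proof stops after the main inequality, leaving the $[0,1]$ corollary and the Gaussian upper bound implicit; your explicit treatment of those two parts is a minor addition but is correct and routine.
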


\begin{proof}
Let $\nu=\frac{\PP+\QQ}{2}$ be the common base measure and set $\mu=\frac{\mu_\PP+\mu_\QQ}{2}$. Then
\begin{align*}
    \abs{\mu_\PP-\mu_\QQ}^2
    =&~
    \abs{\EE_{\PP}[X-\mu]-\EE_\QQ[X-\mu]}^2 \\
    =&~
    \sabs{\EE_{\nu}\brac{\paren{\frac{d\PP}{d\nu}-\frac{d\PP}{d\nu}}(X-\mu)}}^2 \\
    \leq&~ \EE_{\nu}\brac{\paren{\sqrt\frac{d\PP}{d\nu}-\sqrt\frac{d\PP}{d\nu}}^2} \EE_{\nu}\brac{\paren{\sqrt\frac{d\PP}{d\nu}+\sqrt\frac{d\PP}{d\nu}}^2(X-\mu)^2} \\
    \leq&~ 2\DH{\PP,\QQ} \cdot 2\paren{ \EE_\PP(X-\mu)^2+\EE_\QQ(X-\mu)^2 } \\
    =&~
    4\paren{\sigma_\PP^2+\sigma_\QQ^2+\frac{1}{2}\abs{\mu_\PP-\mu_\QQ}^2}\DH{\PP,\QQ}.
\end{align*}
\end{proof}

\section{Proofs from \cref{sec:general-lower-bounds}}
\label{appdx:proof-general-lower}
In this section, we present proofs for the results in \cref{sec:general-lower-bounds}, except \cref{sec:recover}.

\subsection{Proof of \cref{thm:alg-lower}}\label{appdx:proof-alg-lower}

\newcommand{\orho}{\Bar{\rho}_{\Delta}}
In the following, we fix a prior $\mu\in\DM$, parameter $\Delta>0$, $f$-divergence $\Div$, and an algorithm $\alg$. For simplicity, we denote $\Divv{x,y}=\Divv{\Bern{x},\Bern{y}}$ for $x, y\in[0,1]$.

We only need to prove the following claim.

\textbf{Claim.} Suppose that there exists a reference distribution $\QQ$ such that
\begin{align*}
    \divcd{\pdel{\QQ}}  > \E_{M\sim \mu} \Divv{ \P\sups{M,\alg}, \QQ },
\end{align*}
then $\PP_{M\sim \mu,X\sim \PP\sups{M,\alg}}(L(M,X)\geq \Delta)\geq \delta$.

We denote $\orho= \PP_{M\sim \mu,X\sim \PP\sups{M,\alg}}(L(M,X)<\Delta)$, and recall that we define $\pdel{\QQ} = \PP_{M\sim \mu, X\sim \QQ}(L(M,X)< \Delta)$.
We then consider the following two distributions over $\cM\times\cX$:
\begin{align*}
    P_0: M\sim \mu, X\sim \PP\sups{M,\alg}, \qquad
    P_1: M\sim \mu, X\sim \QQ.
\end{align*}
By the data processing inequality of $f$-divergence, we have%
\begin{align*}
    \Divv{ \orho, \pdel{\QQ} }\leq \Divv{ P_0, P_1 }=\E_{M\sim \mu} \Divv{ \P\sups{M,\alg}, \QQ }.
\end{align*}
Therefore, using $\divcd{\pdel{\QQ}}  > \E_{M\sim \mu} \Divv{ \P\sups{M,\alg}, \QQ }$,
we know that $\divcd{\pdel{\QQ}} > \Divv{ \orho, \pdel{\QQ} }$. In particular, this implies $\pdel{\QQ}<1-\delta$, and
\begin{align*}
    \Divv{ \orho, \pdel{\QQ} } < \Divv{ 1-\delta, \pdel{\QQ} }
\end{align*}
Hence, we consider two cases: (1) $\orho\leq \pdel{\QQ}$, and (2) $\orho>\pdel{\QQ}$. For case (1), we have $\orho\leq \pdel{\QQ}<1-\delta$. For case (2), we can use the monotone property of $\Div$ (\cref{lem:Df-monotone}), which also implies $\orho< 1-\delta$. 

Therefore, it holds that $\orho< 1-\delta$, and
\begin{align*}
    \PP_{M\sim \mu,X\sim \PP\sups{M,\alg}}(L(M,X)\geq \Delta)=1-\orho > \delta.
\end{align*}
Hence, the proof of \eqref{eqn:alg-lower-quantile} is completed. The in-expectation lower bounds then follows from the fact that
\begin{align*}
    \E_{M\sim \mu}\E_{X\sim \P\sups{M,\alg}}[L(M,X)]\geq &\Delta\cdot \PP_{M\sim \mu,X\sim \PP\sups{M,\alg}}(L(M,X)\geq \Delta).
\end{align*}
\qed

\begin{lemma}\label{lem:Df-monotone}
For $x, y\in[0,1]$, the quantity $\Divv{x,y}$ is increasing with respect to $x$ when $x\geq y$.
\end{lemma}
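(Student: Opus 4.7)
From the context of the previous proof, the notation $D_f(x,y)$ for scalars $x,y\in(0,1)$ refers to the $f$-divergence $D_f(\mathrm{Bern}(x),\mathrm{Bern}(y))$ between the two Bernoulli distributions; this is consistent with the definition $\divcd{p}=D_f(\mathrm{Bern}(1-\delta),\mathrm{Bern}(p))$ introduced in \cref{thm:alg-lower} and with the application to $D_f(\bar\rho,\rho_{\Delta,\QQ})$ obtained via the two-point data-processing inequality. Writing out the definition, the quantity to analyze is
\begin{align*}
h(x) \;\defeq\; y\,f\!\prn[\big]{x/y} \,+\, (1-y)\,f\!\prn[\big]{(1-x)/(1-y)}, \qquad x\in(0,1).
\end{align*}

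The plan is to show that $h$ is non-decreasing on $[y,1)$ by differentiating. Since $f$ is convex on $[0,\infty)$, the one-sided derivatives of $f$ exist everywhere on $(0,\infty)$ and are non-decreasing, which is all we need. A direct computation yields
\begin{align*}
h'(x) \;=\; f'\!\prn[\big]{x/y} \,-\, f'\!\prn[\big]{(1-x)/(1-y)},
\end{align*}
interpreted in the one-sided sense if $f$ is not differentiable at the relevant points. When $x\geq y$ we have $x/y \geq 1$ and $(1-x)/(1-y)\leq 1$, so monotonicity of $f'$ gives $f'(x/y)\geq f'(1)\geq f'((1-x)/(1-y))$. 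Hence $h'(x)\geq 0$ on $[y,1)$, so $h$ is non-decreasing there, which is exactly the claim.

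The only subtlety is handling non-smoothness of $f$ (the definition only requires $f$ to be convex and satisfy $f(1)=0$, not differentiability). This is a minor technical point rather than a real obstacle: one can replace $f'$ by the right derivative $f'_+$ (which exists and is non-decreasing by convexity), note that $h$ is convex in $x$ as a sum of convex functions composed with affine maps, so it is absolutely continuous on compact subintervals of $(0,1)$, and then conclude from $h'_+(x)\geq 0$ on $[y,1)$ that $h$ is non-decreasing via the fundamental theorem of calculus for absolutely continuous functions. Alternatively, one can first establish the inequality assuming $f\in C^1$ and then extend to general convex $f$ by a pointwise limiting/smoothing argument. Either route is routine, so the proof is a short computation plus a convexity observation.
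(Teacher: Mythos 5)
Your proof is correct, but it takes a genuinely different route from the paper. The paper argues directly from convexity without any differentiation: for $x>z\geq y$, it sets $a_x=x/y$, $b_x=(1-x)/(1-y)$, $a_z=z/y$, $b_z=(1-z)/(1-y)$, observes that both $a_z$ and $b_z$ lie in the interval $[b_x,a_x]$, applies the two-point convexity bound to express $f(a_z)$ and $f(b_z)$ as being at most the corresponding chord values, and then notes that the resulting coefficients of $f(a_x)$ and $f(b_x)$ sum to $y$ and $1-y$ respectively because $ya_z+(1-y)b_z=1$. This gives $yf(a_z)+(1-y)f(b_z)\leq yf(a_x)+(1-y)f(b_x)$ in one stroke, with no regularity issues to address since only convexity is used pointwise. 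Your approach differentiates $h(x)=yf(x/y)+(1-y)f((1-x)/(1-y))$, gets $h'(x)=f'(x/y)-f'((1-x)/(1-y))$, and uses monotonicity of the derivative of a convex function together with $x/y\geq 1\geq (1-x)/(1-y)$. This is conceptually cleaner when $f$ is smooth, and you correctly note that the nonsmooth case requires replacing $f'$ by one-sided derivatives and invoking absolute continuity (or noting that $h$ is convex, so $h'_+$ is nondecreasing, and $h'_+(y)=f'_+(1)-f'_-(1)\geq 0$). The paper's three-point argument sidesteps that technical overhead entirely; yours is more mechanical and perhaps easier to discover, at the cost of a paragraph handling differentiability. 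Both are sound.
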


\paragraph{Proof of \cref{lem:Df-monotone}}
Fix any $1\geq x>z\geq y\geq 0$, we define
\begin{align*}
    p=y\cdot \frac{x-z}{x-y}\in[0,1], \qquad
    q=1-(1-y)\cdot\frac{x-z}{x-y}\in[0,1].
\end{align*}
Then, by definition,
\begin{align*}
    p(1-x)+qx=z, \qquad
    p(1-y)+qy=y,
\end{align*}
and hence for the channel $P$ from $\set{0,1}$ to itself given by $P(\cdot|0)=\Bern{p}, P(\cdot|1)=\Bern{q}$, it holds that
\begin{align*}
    P\circ \Bern{x}=\Bern{z}, \qquad
    P\circ \Bern{y}=\Bern{y}.
\end{align*}
Therefore, by data-processing inequality, we have
\begin{align*}
    \Divv{ \Bern{z}, \Bern{y} }\leq  \Divv{ \Bern{x}, \Bern{y} }.
\end{align*}
This is the desired result.
\qed

\subsection{Proof of \cref{cor:linear-bandit-lower}}\label{appdx:proof-linear-bandit}

\newcommand{\iprods}[1]{\<#1\>}
\newcommand{\Br}[1]{\mathbb{B}^d(#1)}
\newcommand{\Sd}{\mathbb{S}^{d-1}}

Consider the following setup of linear bandits: let $\ths\in \R^d$ be an unknown parameter. At time $t$, the learner chooses an action $\pi\ind{t}\in \{\pi\in\R^d: \|\pi\|_2\le 1\}$ and receives a Gaussian reward $r\ind{t} \sim \normal{ \iprods{\ths,\pi\ind{t}} , 1}$. For $T\in \NN$, let $\cH\ind{T} = (\pi\ind{1},r\ind{1},\cdots,\pi\ind{T},r\ind{T})$ be the observed history up to time $T$. The central claim of this section is the following upper bound on the mutual information. 

\begin{theorem}\label{thm:mutual_info_upper_bound}
For any $r>0$, we define the prior $\mu_r$ over $\Br{r}$ by
\begin{align*}
    \mu_r: \ths\sim \normal{0, \frac{r^2}{4d}\id_d } ~|~ \nrm{\ths}\leq r.
\end{align*}
Then for any algorithm $\alg$, we have
\begin{align*}
	I_{\mu_r,\alg}(\ths; \cH\ind{T}) \le d\log\prn*{ 1 + \frac{r^2T}{4d^2} }. 
\end{align*}
\end{theorem}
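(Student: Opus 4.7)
The plan is to reduce to the unconditioned Gaussian prior $\mu_r^0=\normal{0,\tfrac{r^2}{4d}\id_d}$, where Gaussian conjugacy delivers a clean log-determinant bound on the mutual information, and to pay only a small constant factor for the truncation to $\{\|\ths\|_2\leq r\}$.

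\textbf{Step 1 (reduction to the unrestricted prior).} Let $E=\{\|\ths\|_2\leq r\}$ and $p=\PP_{\mu_r^0}(E)$, so $\mu_r$ is exactly $\mu_r^0$ conditioned on $E$. Since $\mathbf{1}_E$ is a deterministic function of $\ths$, the chain rule of mutual information gives $I_{\mu_r^0}(\ths;\cH\ind{T})=I_{\mu_r^0}(\mathbf{1}_E;\cH\ind{T})+I_{\mu_r^0}(\ths;\cH\ind{T}\mid\mathbf{1}_E)$, and the definition of conditional mutual information together with nonnegativity of the first term yields
\[ I_{\mu_r^0}(\ths;\cH\ind{T}) \;\geq\; p\cdot I_{\mu_r^0}(\ths;\cH\ind{T}\mid E) \;=\; p\cdot I_{\mu_r}(\ths;\cH\ind{T}). \]
Under $\mu_r^0$, $\EE\|\ths\|_2^2=r^2/4$, so Markov's inequality gives $p\geq 3/4$; in particular $1/p\leq 2$.

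\textbf{Step 2 (Gaussian chain rule and log-determinant).} Because $\pi\ind{t}$ is measurable with respect to $\cH\ind{t-1}$ and the algorithm's internal randomness is independent of $\ths$, the chain rule gives $I_{\mu_r^0}(\ths;\cH\ind{T})=\sum_{t=1}^T I_{\mu_r^0}(\ths;r\ind{t}\mid\pi\ind{t},\cH\ind{t-1})$. Gaussian conjugacy under $\mu_r^0$ implies that $\ths\mid\cH\ind{t-1}\sim\normal{\hat\mu_{t-1},\Sigma_{t-1}}$ with $\Sigma_{t-1}=(\Sigma_0^{-1}+V_{t-1})^{-1}$, where $\Sigma_0=\tfrac{r^2}{4d}\id_d$ and $V_{t-1}=\sum_{s<t}\pi\ind{s}(\pi\ind{s})^\top$, so $r\ind{t}\mid\pi\ind{t},\cH\ind{t-1}$ is Gaussian with variance $1+(\pi\ind{t})^\top\Sigma_{t-1}\pi\ind{t}$, while $r\ind{t}\mid\ths,\pi\ind{t}$ has variance $1$. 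The standard Gaussian-channel identity then gives $I_{\mu_r^0}(\ths;r\ind{t}\mid\pi\ind{t},\cH\ind{t-1})=\tfrac{1}{2}\EE[\log(1+(\pi\ind{t})^\top\Sigma_{t-1}\pi\ind{t})]$. The matrix-determinant lemma, $\det(\Sigma_t^{-1})=\det(\Sigma_{t-1}^{-1})(1+(\pi\ind{t})^\top\Sigma_{t-1}\pi\ind{t})$, makes the sum telescope:
\[ \sum_{t=1}^T \log\!\prn*{1+(\pi\ind{t})^\top\Sigma_{t-1}\pi\ind{t}} \;=\; \log\det\!\prn*{\id_d+\Sigma_0 V_T}. \]
Since $\|\pi\ind{t}\|_2\leq 1$ gives $\tr(V_T)\leq T$ pathwise, Jensen's inequality (AM--GM on the eigenvalues) yields the deterministic bound $\log\det(\id_d+\tfrac{r^2}{4d}V_T)\leq d\log(1+\tfrac{r^2T}{4d^2})$, and hence $I_{\mu_r^0}(\ths;\cH\ind{T})\leq \tfrac{d}{2}\log(1+\tfrac{r^2T}{4d^2})$.

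\textbf{Step 3 (conclude) and main obstacle.} Combining Steps~1 and 2 with $1/p\leq 2$ yields the claim $I_{\mu_r,\alg}(\ths;\cH\ind{T})\leq d\log(1+\tfrac{r^2T}{4d^2})$. The main conceptual obstacle is that the truncated prior $\mu_r$ is not Gaussian, so the posterior of $\ths$ given $\cH\ind{t-1}$ is not Gaussian either, and neither the deterministic posterior-covariance formula nor the telescoping log-determinant identity is directly available; the $1/p$ inflation trick in Step~1 sidesteps this cleanly, and the resulting factor of $2$ is absorbed by the slack between $\tfrac{d}{2}$ and $d$ in the stated bound.
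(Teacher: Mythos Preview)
Your proof is correct and follows essentially the same approach as the paper: both establish the bound $\tfrac{d}{2}\log(1+\tfrac{r^2T}{4d^2})$ for the unrestricted Gaussian prior via the chain rule, Gaussian conjugacy, and the log-determinant telescoping/AM--GM argument, and then pass to the truncated prior by conditioning on $\mathbf{1}_E$ and paying a factor of $1/\PP(E)\leq 2$. The only cosmetic difference is that you bound $\PP(E)\geq 3/4$ via Markov's inequality on $\|\ths\|_2^2$, whereas the paper cites $\chi^2_d$ concentration to get $\PP(E)\geq 1/2$; either suffices.
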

\begin{proof}
Denote $\lambda=\frac{r^2}{4}$. We first prove that if $\ths\sim\mu=\normal{0, \lambda I_d/d}$, then
\begin{align}\label{eq:mutual_info_Gaussian}
I_{\mu,\alg}(\ths; \cH\ind{T}) \le \frac{d}{2}\log\prn*{ 1 + \frac{\lambda T}{d^2} }.  
\end{align}
By the Bayes rule, the posterior distribution of $\ths$ conditioned on $(\cH\ind{t-1}, \pi\ind{t})$ is 
\begin{align*}
	p(\ths\mid \cH\ind{t-1}, \pi\ind{t}) \propto \exp\prn*{ - \frac{d\|\ths\|_2^2}{2\lambda} - \frac{1}{2}\sum_{s<t} (r\ind{s} - \iprods{\ths,\pi\ind{s}})^2 }, 
\end{align*}
which is a Gaussian distribution with covariance $(\Sigma\ind{t-1})^{-1}$, where
\begin{align*}
	\Sigma\ind{t-1} = \frac{d}{\lambda}I_d + \sum_{s<t} \pi\ind{s} (\pi\ind{s})^\top. 
\end{align*}
Therefore, by the chain rule of mutual information, we have
\begin{align*}
	I_{\mu,\alg}(\ths; \cH\ind{T}) &= \sum_{t=1}^T I_{\mu,\alg}(\ths; r\ind{t} \mid H\ind{t-1}, \pi\ind{t}) \\
	&= \sum_{t=1}^T \En^{\mu,\alg}\brk*{\frac{1}{2}\log\prn*{1 + (\pi\ind{t})^\top (\Sigma\ind{t-1})^{-1} \pi\ind{t} } } \\
	&= \En^{\mu,\alg}\brk*{ \frac{1}{2}\sum_{t=1}^T \log \frac{\det(\Sigma\ind{t})}{\det(\Sigma\ind{t-1})}  } \\
	&= \En^{\mu,\alg}\brk*{ \frac{1}{2}\log \frac{\det(\Sigma\ind{T})}{(d/\lambda)^d} } \\
	&\le \En^{\mu,\alg}\brk*{ \frac{d}{2}\log \frac{\Tr(\Sigma\ind{T})/ d }{d/\lambda} } \\
	&\le \frac{d}{2}\log\prn*{1+\frac{\lambda T}{d^2}}, 
\end{align*}
which is exactly \eqref{eq:mutual_info_Gaussian}. 

Next we deduce the claimed result from \eqref{eq:mutual_info_Gaussian}. Consider the random variable $Z=\indic{\|\ths\|_2\leq r}\in\set{0,1}$, and then
\begin{align*}
\frac{d}{2}\log\left(1+\frac{\lambda T}{d^2}\right) &\ge I_{\mu,\alg}(\ths; \cH\ind{T}) \\
	&\ge I_{\mu,\alg}( \ths; \cH\ind{T} \mid  Z) \\
        &\ge \PP\prn*{Z=1} \cdot I_{\mu_r,\alg}(\ths;\cH\ind{T}|Z=1) \\
	&= \PP_{\mu}\prn*{\|\ths\|_2 \leq r} \cdot I_{\mu_r,\alg}(\ths;\cH\ind{T}). 
\end{align*}
Here the first inequality is \eqref{eq:mutual_info_Gaussian}, the second inequality follows from $I(X;Y) - I(X;Y\mid f(X)) = I(f(X); Y) - I(f(X); Y\mid X)\ge 0$, the third identity follows from the definition of conditional mutual information. Finally, noticing that $\PP_{\mu}\prn*{\|\ths\|_2 \leq r}\geq \frac12$ by concentration of $\chi^2_d$ random variable, we arrive at the desired statement. 
\end{proof}

Next we show how to translate the mutual information upper bound in \Cref{thm:mutual_info_upper_bound} to lower bounds of estimation and regret. 

\begin{theorem}\label{thm:linear_bandit_lower_bound}
Let $T\ge 1$, $r=\min\sset{ \frac{c_0d}{\sqrt{T}}, 1 }$ for a small absolute constant $c_0$, and consider the prior $\mu=\mu_r$. For any $T$-round algorithm with output $\hpi$, \cref{prop:Fano-DMSO} implies that
\begin{align*}
    \EE\sups{\mu,\alg}\brac{ \nrm{ \hpi-\frac{\ths}{\nrm{\ths}} }^2 }\geq \frac{1}{4}.
\end{align*}
Therefore, we may deduce that
\begin{align*}
    \sup_{\Ms\in\cM} \EE\sups{\Ms,\alg}\brac{ \risk(T) }\geqsim \min\sset{ \frac{d}{\sqrt{T}}, 1 }.
\end{align*}
\end{theorem}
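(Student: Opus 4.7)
I would apply the algorithmic Fano inequality (Corollary~\ref{coro generalized Fano}) twice: first to the estimation loss $L(\theta,\hpi)=\nrm{\hpi-\theta/\nrm{\theta}}^2$ to establish $\EE^{\mu_r,\alg}[L]\geq 1/10$, then to the regret loss $g^{M_\theta}(\hpi)=\nrm{\ths}(1-\langle\hpi,u\rangle)$ to obtain the final $\Omega(r)$ bound. In both applications, Theorem~\ref{thm:mutual_info_upper_bound} supplies $I_{\mu_r,\alg}(\ths;\cH\ind{T})\leq d\log(1+c_0^2/4)$ (since $r\leq c_0 d/\sqrt T$ gives $r^2 T/(4d^2)\leq c_0^2/4$), and choosing the absolute constant $c_0$ sufficiently small makes this term as small as needed. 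What remains is the separation bound $\sup_{\pi\in\Pi}\mu_r(L<\Delta)\leq \tfrac14 e^{-2I}$ in each case.

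For the estimation claim, I exploit that spherical symmetry makes $u:=\ths/\nrm{\ths}$ uniform on $S^{d-1}$ and independent of $\nrm{\ths}$ under $\mu_r$. For any $\pi\in\Pi=\Br{1}$ with $c=\nrm{\pi}\in[0,1]$, the event $\nrm{\pi-u}^2<\Delta$ is equivalent to $\langle \pi/c,u\rangle > (c^2+1-\Delta)/(2c)$; a direct minimization shows that the right-hand threshold is smallest at $c=\sqrt{1-\Delta}$, with value $\sqrt{1-\Delta}$. Hence $\sup_{\pi\in\Pi}\mu_r(\nrm{\pi-u}^2<\Delta)\leq \PP_{u\sim \Unif(S^{d-1})}(u_1\geq \sqrt{1-\Delta})$, which is at most $\exp(-(d-1)(1-\Delta)/2)$ by the standard spherical-cap bound (and equals $\arccos(\sqrt{1-\Delta})/\pi$ exactly in the worst case $d=2$). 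Taking $\Delta=1/5$ and $c_0$ small enough for all $d\geq 2$ yields $\EE^{\mu_r,\alg}[L]\geq \Delta/2=1/10$.

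For the regret, one cannot simply plug the estimation bound into $\risk\geq \tfrac12\nrm{\ths}\nrm{\hpi-u}^2$, because $\nrm{\ths}$ and $\nrm{\hpi-u}^2$ may be anti-correlated (an algorithm with more signal estimates better). Instead, I apply Fano directly to $g^{M_\theta}(\hpi)$ with target $\Delta=\alpha r$ for a small absolute $\alpha$. Writing $g=\nrm{\ths}(1-cu_1)$ and splitting on $\{u_1\geq 1-\beta\}$ vs.~$\{u_1<1-\beta\}$ for $\beta\asymp\sqrt\alpha$, the separation is bounded by $\PP_u(u_1\geq 1-\beta)+\PP_{\nrm{\ths}}(\nrm{\ths}<\alpha r/\beta)$, the first being a spherical-cap volume and the second being a $\chi^2_d$-tail (controlled under the truncated-Gaussian prior $\mu_r$). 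Balancing $\alpha,\beta,c_0$ makes this sum less than $\tfrac14 e^{-2I}$ uniformly in $d\geq 2$, so Fano yields $\EE^{\mu_r,\alg}[g]\geq \alpha r/2=\Omega(\min\{d/\sqrt T,1\})$, which gives the stated bound $\sup_{M\in\cM}\EE^{M,\alg}[\risk(T)]\geqsim \min\{d/\sqrt T,1\}$ after taking the supremum over $M$.

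The main obstacle is the dimension-uniform choice of constants, which is tightest at $d=2$: there the $\chi^2_d$-concentration of $\nrm{\ths}$ is only constant-probability and spherical-cap volumes scale like $\sqrt\beta$ rather than exponentially in $d$, so $\alpha,\beta,c_0$ must be tuned together. This is bookkeeping rather than a new idea; for large $d$ all terms are exponentially small and the argument works with considerable slack.
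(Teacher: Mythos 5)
Your treatment of the estimation bound is essentially the paper's: apply the algorithmic Fano inequality to the surrogate loss $L(\theta,\hpi)=\nrm{\hpi-\theta/\nrm{\theta}}^2$, exploit spherical symmetry, and bound the worst-case separation probability by a cap volume. (The paper parametrizes the cap via the density of $\theta_1$ under the uniform ball, you via the sphere, which are equivalent computations.) The route to the regret bound, however, genuinely diverges. The paper does \emph{not} re-apply Fano to $g^{M_\theta}$; it simply plugs the estimation result into $g^{M_\theta}(\hpi)\geq\tfrac12\nrm{\theta}\cdot\nrm{\hpi-\theta/\nrm{\theta}}^2$ (the displayed inequality in the paper is missing the factor $\tfrac12$, but this is immaterial). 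Your objection that this plug-in fails because $\nrm{\ths}$ and $\nrm{\hpi-u}^2$ can be anti-correlated is not correct here: the estimation loss is uniformly bounded ($\nrm{\hpi-u}^2\leq 4$), so one writes
\begin{align*}
\EE^{\mu}\!\left[\tfrac12\nrm{\ths}\,\nrm{\hpi-u}^2\right]
\geq \tfrac{c_1 r}{2}\Bigl(\EE^{\mu}[\nrm{\hpi-u}^2] - 4\,\PP_{\ths\sim\mu}(\nrm{\ths}<c_1 r)\Bigr)
\geq \tfrac{c_1 r}{2}\Bigl(\tfrac14-\tfrac{4}{100}\Bigr),
\end{align*}
and the small-probability tail truncation on $\nrm{\ths}$ (not any independence) is what makes the argument go through. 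Anti-correlation is simply irrelevant once the loss is bounded and the bad set has measure $o(1)$.

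Your alternative --- applying Fano a second time directly to $g^{M_\theta}$, splitting the separation event on $\{u_1\geq 1-\beta\}$ versus $\{u_1<1-\beta\}$ and balancing a spherical-cap probability against a $\chi^2_d$ lower-tail --- is correct and self-contained, but it is strictly more work and re-introduces the dimension-uniform constant-tuning headache that you yourself flag as the main obstacle at $d=2$. The paper's truncation observation sidesteps all of that: the only delicate constant is $c_1$ in $\PP_{\mu_r}(\nrm{\ths}\leq c_1 r)\leq 1/100$, a one-dimensional $\chi^2_d$ lower-tail estimate that is already uniformly controllable over $d\geq 2$. Net: same first step, correct but over-engineered second step based on a misdiagnosed obstruction.
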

\begin{proof}
\newcommand{\tgM}[1]{\Tilde{L}(M_{\theta},#1)}
\newcommand{\gM}[1]{\Lm[M_\theta]{#1}}
\newcommand{\nml}{\mathsf{normalize}}
We first prove the first inequality by applying \cref{prop:Fano-DMSO} to the following risk function
\begin{align*}
    \tgM{\pi} = \|\pi-\nml(\theta)\|_2^2,
\end{align*}
where we denote $\nml(\theta)=\frac{\theta}{\nrm{\theta}}\in\Br{1}$. Notice that for $\theta\in\Theta$, we have
\begin{align*}
    \gM{\pi}=\nrm{\theta}-\iprods{ \theta,\pi }\geq \nrm{\theta}\cdot \nrm{ \pi-\frac{\theta}{\nrm{\theta}} }^2
    =\nrm{\theta}\cdot \tgM{\pi}.
\end{align*}

For $\Delta\in (0,1)$, we first claim that
\begin{align}\label{eqn:linear-bandit-rho}
\rho_\Delta\defeq \sup_{\pi}\mu\paren{ \theta: \tgM{\pi}\leq\Delta  } = O\prn*{\sqrt{d}\Delta^{(d-1)/2}}. 
\end{align}
To see so, by symmetry of Gaussian distribution, we know for fixed any $\pi\in\R^d$,
\begin{align*}
    \mu\paren{ \theta: \tgM{\pi}\leq\Delta  }
    =\PP_{\theta\sim \unif(\Sd)}\paren{ \theta: \nrm{\theta-\pi}^2\leq \Delta },
\end{align*}
and hence we can instead consider the uniform distribution over the sphere $\Sd$.
By rotational invariance, we may assume that $\pi=(x,0,\cdots,0)$, with $x\ge 0$. Then
\begin{align*}
	\crl*{ \theta\in \Sd: \|\theta-\pi\|_2^2 \le \Delta } = \crl*{\theta\in \Sd: \theta_1 \ge \frac{x^2 + 1 - \Delta}{2x}} \subseteq \crl*{\theta\in \Sd: \theta_1 \ge \sqrt{1-\Delta} }. 
\end{align*}
By \citet[Section 2]{bubeck2016testing}, for $\theta\sim \unif(\Sd)$, the density of $\theta_1\in [-1,1]$ is given by 
\begin{align*}
	f(\theta_1) = \frac{\Gamma(d/2)}{\Gamma((d-1)/2)\sqrt{\pi}}(1-\theta_1^2)^{(d-3)/2}.
\end{align*}
Therefore, 
\begin{align*}
\rho_\Delta \le \int_{\sqrt{1-\Delta}}^1 f(\theta_1)d\theta_1 = O(\sqrt{d})\cdot (1-\sqrt{1-\Delta})\Delta^{(d-3)/2}=O\prn*{\sqrt{d}\Delta^{(d-1)/2}}. 
\end{align*}

With the upper bound \cref{eqn:linear-bandit-rho} of $\rho_\Delta$, we know that for $\Delta=\frac{1}{2}$, it holds
\begin{align*}
    \log(1/\rho_\Delta)\geq 2I_{\mu}(T),
\end{align*}
as long as $c_0$ is a sufficiently small constant.
Therefore, \cref{prop:Fano-DMSO} gives that
\begin{align*}
\EE\sups{\mu,\alg}\brac{ \nrm{ \hpi-\nml(\theta) }^2 } = \EE\sups{\mu,\alg}\brac{ \tgM{\hpi} }\geq \frac{1}{4}.
\end{align*}
This completes the proof of the first inequality.

Finally, using the fact that $\PP_{\ths\sim \mu}(\nrm{\ths}\leq c_1r)\leq \frac{1}{100}$ for a small absolute constant $c_1$, we can conclude that
\begin{align*}
    \sup_{\Ms\in\cM} \EE\sups{\Ms,\alg}\brac{ \risk(T) }
    \geq \EE\sups{\mu,\alg}\brac{ \gM{\pi} }
    \geq \frac{c_1r}{8}
    =\Omega\paren{ \min\sset{ \frac{d}{\sqrt{T}}, 1 } }.
\end{align*}
This is the desired result.
\end{proof}

\section{Additional Results from \cref{sec:recover}}\label{appdx:recover-more}
In addition to the reward-maximization setting (\cref{example:rew-obs}), we also introduce a slightly more general setting. In this setting, we assume that for each model $M\in\cM$, the risk function is $\gm(\pi)=\fm(\pim)-\fm(\pi)$, but $\fm$ is not assumed to be the expected reward function (\cref{example:rew-obs}). Instead, we only require $\fm$ satisfying the following assumption, where $\cM^+\subseteq (\Pi\to\Delta(\cO))$ is a pre-specified model class of reference models that contains $\coM$ (following \citet{foster2023tight}). The lower bound we prove can be stronger by allowing $\cM^+$ to be a larger model class. %

\begin{assumption}\label{asmp:obs-rew}
Let $\cM^+ \subseteq (\Pi\to\Delta(\cO))$ be a given class of reference models, such that $\coM\subseteq \cM^+$.
For any $M\in\cM$, the risk function takes form $\gm(\pi)=\fm(\pim)-\fm(\pi)$ for some functional $\fm:\Pi\to\R$, so that $\fm$ can be extended to $\cM^+$, such that for any model $M\in\cM$ and reference model $\oM\in\cM^+$ we have
\begin{align}\label{eqn:obs-rew}
    \abs{ \fm(\pi)-\fm[\oM](\pi) }\leq \Lipr\dH\paren{M(\pi),\oM(\pi)},\qquad \forall \pi\in\Pi.
\end{align}
\end{assumption}

In some cases, considering a larger reference model class $\cM^+$ can be convenient for proving lower bounds, see e.g., \cref{appdx:CKL-examples} and \cref{appdx:proof-bandit-co}.

\subsection{Recovering DEC-based regret lower bounds}\label{appdx:proof-r-dec-lower}

In this section, we demonstrate how our general lower bound approach recovers the regret lower bounds of \citet{foster2023tight,glasgow2023tight}. We first state our lower bound in terms of constrained DEC in the following theorem.
\begin{theorem}\label{thm:r-dec-lower}
Under \tstd~(\cref{example:rew-obs}), for any $T$-round algorithm $\alg$, there exists $M^\star\in\cM$ such that
\begin{align*}
    \regdm \geq&~ \frac{T}{2}\cdot \paren{ \rdecc_{\ueps(T)}(\cM)-6\ueps(T)}-1
\end{align*}
with probability at least $0.01$ under $\PP\sups{M^\star,\alg}$, where $\ueps(T)=\frac{1}{100\sqrt{T}}$.
\end{theorem}
\cref{thm:r-dec-lower} immediately yields an in-expectation regret lower bound in terms of constrained DEC. It also shaves off the unnecessary logarithmic factors in the lower bound of \citet[Theorem 2.2]{foster2023tight}.

For the remainder of this section, we sketch how we prove \cref{thm:r-dec-lower} in a slightly more general setting (\cref{asmp:obs-rew}), following \cref{appdx:proof-quantiel-to-expect}.
Before providing our regret lower bounds, we first present several important definitions.

\paragraph{Definition of quantile regret-DEC}
We note that it is possible to directly modify the definition of quantile \pDEC~\cref{def:quantile-pdec}, and then apply \cref{prop:quantile-dec-lower} to obtain an analogous regret lower bound immediately. However, as \citet{foster2023tight} noted, the ``correct'' notion of \rDEC~(cf. Eq.  \cref{eqn:def-r-dec-c}) turns out to be more sophisticated. Therefore, we define the quantile version of \rDEC~similarly, as follows.

\newcommand{\Pit}{\Pi_T}
Throughout the remainder of this section, we fix the integer $T$. Define %
\begin{align*}
    \Pit=\sset{ \hpi: \hpi=\frac{1}{T}\sum_{t=1}^T \delta_{\pi_t},\text{ where } \pi_1,\cdots,\pi_T\in\Pi }\subseteq \Delta(\Pi),
\end{align*}
i.e., $\Pi_T$ is the class of all $T$-round mixture decision. We introduce the mixture decision space $\Pit$ here to handle the average of $T$-round profile $(\pi_1,\cdots,\pi_T)$ of the algorithm. In particular, when $\Pi$ is convex, we may regard $\Pit=\Pi$.

Next, we define the quantile regret-DEC as
\begin{align}\label{eqn:r-dec-q}
    \rdecq_{\eps,\delta}(\cM,\oM)\defeq \inf_{p\in\Delta(\Pit)}\sup_{M\in\cM}\constrs{ \hLm{p} \vee \EE_{\pi\sim p} [\gm[\oM](\pi)] }{ \EE_{\pi\sim p} \DH{ M(\pi), \oM(\pi) }\leq \eps^2 },
\end{align}
and define $\rdecq_{\eps,\delta}(\cM) \defeq \sup_{\oM\in\cM^+} \rdecq_{\eps,\delta}(\cM,\oM)$.

The following proposition relates our quantile regret-DEC to the constrained regret-DEC (proof in \cref{appdx:proof-rdecq-to-rdecc}).
\begin{proposition}\label{prop:rdecq-to-rdecc}
Suppose that \cref{asmp:obs-rew} holds for $\cM$. Then, for any $\oM\in\cM^+$, it holds that
\begin{align*}
    \rdecc_{\eps}(\cMp,\oM) \leq 2\cdot\rdecq_{\eps,\delta}(\cM,\oM) + c_{\delta}\Lipr\eps,
\end{align*}
where we denote $c_{\delta}=\max\sset{ \frac{\delta}{1-\delta},1 }$. In particular, it holds that
\begin{align*}
    \rdecq_{\eps,1/2}(\cM)\geq \frac{1}{2}\paren{ \max_{\oM\in\cM^+}\rdecc_{\eps}(\cMp,\oM)-\Lipr\eps }.
\end{align*}
\end{proposition}

\paragraph{Lower bound with quantile regret-DEC} Now, we prove the following lower bound for the regret of any $T$-round algorithm, via our general \alglower~(\cref{thm:alg-lower-Hellinger}). The proof is presented in \cref{appdx:proof-rdecq-lower}.
\begin{theorem}\label{thm:rdecq-lower}
Suppose that \cref{asmp:obs-rew} holds for $\cM$. Then, for any $T$-round algorithm $\alg$, parameters $\eps,\delta,C>0$, there exists $M\in\cM$ such that
\begin{align*}
    \PP\sups{M,\alg}\paren{ \regdm(T) \geq T\cdot (\rdecq_{\eps,\delta}(\cM)-C\Lipr\eps)-1 }\geq \delta-\frac{1}{C^2}-\sqrt{14T\eps^2}.
\end{align*}
As a corollary, there exists $\Ms\in\cM$ such that
\begin{align*}
    \regdm(T) \geq&~ \frac{T}{2}\cdot \paren{ \max_{\oM\in\cM^+}\rdecc_{\ueps(T)}(\cMp,\oM)-4\Lipr\ueps(T)}-1 \\
    \geq&~ \frac{T}{2}\cdot \paren{ \rdecc_{\ueps(T)}(\cM)-4\Lipr\ueps(T)}-1
\end{align*}
with probability at least $0.01$ under $\PP\sups{M^\star,\alg}$, where $\ueps(T)=\frac{1}{100\sqrt{T}}$.
\end{theorem}
\cref{thm:r-dec-lower} is now an immediate corollary, because for reward-maximization setting, we always have $\Lipr=\sqrt{2}$ in \cref{asmp:obs-rew}.

\subsection{Results for interactive estimation}\label{sec:interactive-est}
More generally, we show that for a fairly different task of interactive estimation (\cref{example:interactive-est}), we also have an equivalence between quantile PAC-DEC with constrained PAC-DEC.

Recall that in this setting, each model $M\in\cM$ is assigned with a parameter $\theta\subs{M}\in\Theta$, which is the parameter that the agent want to estimate. The decision space $\Pi=\Pi_0\times\Theta$, where each decision $\pi\in\Pi$ consists of $\pi=(\pi_0,\theta)$, where $\pi_0$ is the \emph{explorative} decision to interact with the model, and $\theta$ is the estimator of the model parameter. The risk function is then defined as $\gm(\pi)=\rho(\theta\subs{M},\theta)$, for certain distance $\rho(\cdot,\cdot)$. 

In interactive estimation, we can show that the quantile DEC is in fact lower bounded the constrained DEC, as follows (proof in \cref{appdx:proof-quantile-est}).
\begin{proposition}\label{prop:quantile-to-expect-est}
Consider the setting of \cref{example:interactive-est}. Then as long as $\delta<\frac12$, it holds that
\begin{align*}
    \pdecc_{\eps}(\cM)\leq 2\cdot \pdecq_{\eps,\delta}(\cM).
\end{align*}
\end{proposition}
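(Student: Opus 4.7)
The plan is to exploit the special structure of interactive estimation (\cref{example:interactive-est}): decisions factor as $\pi = (\pi_0, \theta) \in \Pi_0 \times \Theta$, with the observation $M(\pi)$ depending only on $\pi_0$ and the loss $g^M(\pi) = \Dist(\theta_M, \theta)$ depending only on the estimator coordinate $\theta$. Since both sides of the claimed inequality share the same $\sup_{\oM \in \co(\cM)}$, it suffices to prove the per-$\oM$ bound $\pdecc_\eps(\cM, \oM) \leq 2\,\pdecq_{\eps,\delta}(\cM, \oM)$ for every fixed reference $\oM$, and then take the supremum.

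First, pick an arbitrary $\Delta_0 > \pdecq_{\eps,\delta}(\cM, \oM)$ and distributions $\pb, \qb \in \Delta(\Pi)$ achieving $\sup_{M \in \cM_{\qb, \eps}(\oM)} \Loss{M}{\delta}(\pb) < \Delta_0$. Writing $\pb_\Theta$ for the $\theta$-marginal of $\pb$, the definition of $\Loss{M}{\delta}$ combined with this quantile guarantee yields
\begin{align*}
    \PP_{\theta \sim \pb_\Theta}\paren{ \Dist(\theta_M, \theta) \leq \Delta_0 } \geq 1 - \delta > \tfrac{1}{2}, \qquad \forall M \in \cM_{\qb, \eps}(\oM).
\end{align*}

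Next, I use this to show that any two models in the localized class have parameters within $2\Delta_0$ of each other in the metric $\Dist$. Fix any reference $\tM \in \cM_{\qb, \eps}(\oM)$; if this class is empty, the conclusion is vacuous. For any other $M$ in the class, the union bound applied to the two events $\{\Dist(\theta_{\tM}, \theta) \leq \Delta_0\}$ and $\{\Dist(\theta_M, \theta) \leq \Delta_0\}$, each of $\pb_\Theta$-probability at least $1 - \delta > 1/2$, gives
\begin{align*}
    \PP_{\theta \sim \pb_\Theta}\paren{ \Dist(\theta_{\tM}, \theta) \leq \Delta_0 \text{ and } \Dist(\theta_M, \theta) \leq \Delta_0 } \geq 1 - 2\delta > 0,
\end{align*}
so some $\theta^\star$ in the support of $\pb_\Theta$ lies within $\Delta_0$ of both $\theta_{\tM}$ and $\theta_M$. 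The triangle inequality for $\Dist$ then forces $\Dist(\theta_M, \theta_{\tM}) \leq 2\Delta_0$.

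Finally, I construct a feasible pair for the constrained DEC: take $p'$ to be the Dirac mass at $(\pi_0^\circ, \theta_{\tM})$ for any fixed $\pi_0^\circ \in \Pi_0$, and keep $q' = \qb$. Because $M(\cdot)$ depends only on $\pi_0$, the constraint set $\cM_{q', \eps}(\oM)$ equals $\cM_{\qb, \eps}(\oM)$, and for every $M$ in this class, $\EE_{\pi \sim p'}[g^M(\pi)] = \Dist(\theta_M, \theta_{\tM}) \leq 2\Delta_0$. Hence $\pdecc_\eps(\cM, \oM) \leq 2\Delta_0$; letting $\Delta_0 \downarrow \pdecq_{\eps,\delta}(\cM, \oM)$ and taking $\sup_{\oM \in \co(\cM)}$ completes the proof. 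The main subtlety is that the argument crucially uses both the strict inequality $\delta < 1/2$ and the triangle inequality of $\Dist$; each is needed separately, the former to force two sub-optimal regions under $\pb_\Theta$ to intersect at a common witness, and the latter to convert that witness into a pairwise bound on $\Dist(\theta_M, \theta_{\tM})$. Without these assumptions, the factor of $2$ (or any finite factor) cannot be recovered for general losses $\Dist$.
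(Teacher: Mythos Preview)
Your proof is correct and follows essentially the same approach as the paper's: fix a model $\tM$ in the localized class $\cM_{\qb,\eps}(\oM)$, use the quantile guarantee and the union bound (crucially requiring $\delta<\tfrac12$) to find a common witness $\theta$ within $\Delta_0$ of both $\theta_{\tM}$ and $\theta_M$, apply the triangle inequality of $\Dist$ to get $\Dist(\theta_M,\theta_{\tM})\le 2\Delta_0$, and then plug the deterministic estimator $(\pi_0^\circ,\theta_{\tM})$ into the constrained DEC. One minor remark: since you set $q'=\qb$, the equality $\cM_{q',\eps}(\oM)=\cM_{\qb,\eps}(\oM)$ is tautological and does not need the observation that $M(\cdot)$ depends only on $\pi_0$.
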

In particular, for such a setting (which encompasses the model estimation task considered in \citet{chen2022unified}), \cref{prop:quantile-dec-lower} provides a lower bound of estimation error in terms of constrained PAC-DEC. This is significant because the constrained PAC-DEC upper bound in \cref{thm:decc-lower-demo} is actually not restricted to \cref{example:rew-obs}, and we have hence shown that
\begin{align*}
    \pdecc_{\ueps(T)}(\cM)
    \leqsim \inf_{\alg}\sup_{\Ms\in\cM} \EE\sups{\Ms,\alg}[\risk(T)]
    \leqsim \pdecc_{\oeps(T)}(\cM),
\end{align*}
where $\ueps(T)\asymp \sqrt{1/T}$ and $\oeps(T)\asymp \sqrt{\logM/T}$. Therefore, for interactive estimation, constrained PAC-DEC is also a \emph{nearly tight} complexity measure.

\begin{remark}\label{remark:convex-est}
The $\log|\cM|$-gap between the lower and upper bound can further be closed for convex model class, utilizing the upper bounds in \cref{appdx:ExO}. More specifically, we consider a convex model class $\cM$, where $M\mapsto \theta\subs{M}$ is a convex function on $\cM$. Then, a suitable instantiation of \ExOp~(\cref{alg:ExO}) achieves
\begin{align*}
    \risk(T)
    \leqsim&~ \Delta+\inf_{\gamma>0} \paren{ \pdeco_{\gamma}(\cM) + \frac{\log N(\Theta,\Delta)+\log(1/\delta)}{T} },
\end{align*}
where $N(\Theta,\Delta)$ is the $\Delta$-covering number of $\Theta$,
because we have $\log\DV{\cM}\leq \log N(\Theta,\Delta)$ by considering the prior $q=\Unif(\Theta_0)$ for a minimal $\Delta$-covering of $\Theta$. Similar to \cref{thm:rdeco-to-rdecc}, we can upper bound $\pdeco_{\gamma}(\cM)$ by $\pdecc_{\eps}(\cM)$. Taking these pieces together, we can show that under the assumption that $\pdecc_{\eps}(\cM)$ is of moderate decay, \ExOp~achieves
\begin{align*}
    \risk(T)\leqsim \pdecc_{\eps(T)}(\cM),
\end{align*}
where $\eps(T)\asymp \sqrt{\log N(\Theta,1/T)/T}$.

In particular, for the (non-interactive) \emph{functional estimation} problem (see e.g. \citet{polyanskiy2019dualizing}), the parameter space $\Theta\subset\R$, and hence by considering covering number, we have $\log|\Theta|=\tbO{1}$. Therefore, for convex $\cM$, under mild assumption that the DEC is of moderate decaying (\cref{asmp:reg-rdec}), the minimax risk is then characterized by (up to logarithmic factors)
\begin{align*}
    \inf_{\alg}\sup_{\Ms\in\cM} \EE\sups{\Ms,\alg}[\risk(T)] \asymp \pdecc_{\sqrt{1/T}}(\cM).
\end{align*}
This result can be regarded as a generalization of \citet{polyanskiy2019dualizing} to the interactive estimation setting. 
\end{remark}

\section{Proofs from \cref{sec:recover} and \cref{appdx:recover-more}}
\paragraph{Additional notations}
For notational simplicity, for any distribution $q\in\DPi$ and reference model $\oM$, we denote the localized model class around $\oM$ as
\begin{align*}
    \cM_{q,\eps}(\oM)\defeq \sset{M\in\cM: \EE_{\pi\sim q}\DH{ M(\pi),\oM(\pi) }\leq \eps^2}.
\end{align*}

\subsection{Proof of \cref{prop:quantile-to-expect-demo}}
\label{appdx:proof-quantiel-to-expect}

In this section, we prove \cref{prop:quantile-to-expect-demo} under the slightly more general setting of \cref{asmp:obs-rew}.

\begin{proposition}\label{prop:quantile-to-expect}
Under \cref{asmp:obs-rew}, for any reference model $\oM\in\cM^+$ and $\eps>0, \delta\in[0,1)$, it holds that
\begin{align*}
    \pdecc_{\eps/\sqrt{2}}(\cM,\oM)\leq \pdecq_{\eps,\delta}(\cM,\oM)+\frac{2\eps \Lipr}{1-\delta}.
\end{align*}
\end{proposition}

For \cref{example:rew-obs}, we always have $\Lipr \leq \sqrt{2}$, and hence \cref{prop:quantile-to-expect-demo} follows immediately from \cref{prop:quantile-to-expect}.

\paragraph{Proof of \cref{prop:quantile-to-expect}.}
Fix a reference model $\oM$ and a $\Delta_0>\pdecq_{\eps,\delta}(\cM,\oM)$. Then, we pick a pair $(\pb,\qb)$ such that
\begin{align*}
    \Delta_0>\sup_{M\in\cM}\constrs{ \hLm{\pb} }{ \EE_{\pi\sim \qb}\DH{ M(\pi),\oM(\pi) }\leq \eps^2 },
\end{align*}
whose existence is guaranteed by the definition of $\pdecq_{\eps,\delta}(\cM,\oM)$ in \cref{def:quantile-pdec}.
In other words, we have
\begin{align*}
    \PP_{\pi\sim \pb}(\gm(\pi)\leq \Delta_0)\geq 1-\delta, \qquad \forall M\in\cM_{\qb,\eps}(\oM)
\end{align*}
Consider $q=\frac{\pb+\qb}{2}$ and $\eps'=\frac{\eps}{\sqrt{2}}$.
Also let
\begin{align*}
    \tM\defeq \argmax_{M\in\cM_{q,\eps'}(\oM)} \fm(\pim).
\end{align*}
Now, consider $p\in\Delta(\Pi)$ given by
\begin{align*}
    p(\cdot)=\pb\paren{ \cdot| \gm[\tM](\pi)\leq \Delta_0 }.
\end{align*}
By definition, for $\pi\sim p$ we have $f^{\tM}(\pi)\geq f^{\tM}(\pi_{\tM})-\Delta_0$, and hence
\begin{align*}
    \EE_{\pi\sim p}\brac{ \gm(\pi) }
    =&~ \fm(\pim)-\EE_{\pi\sim p}\brac{\fm(\pi)} \\
    \leq &~ \fm(\pim)-\EE_{\pi\sim p}\brac{f^{\tM}(\pi)}+\Lipr\cdot\EE_{\pi\sim p} \dH\paren{M(\pi),\tM(\pi)} \\
    \leq&~ \fm(\pim)-f^{\tM}(\pi_{\tM})+\Delta_0+\Lipr\cdot\EE_{\pi\sim p} \dH\paren{M(\pi),\tM(\pi)}.
\end{align*}
Notice that for any $M\in\cM_{q,\eps'}(\oM)$, we have $\fm(\pim)\leq f^{\tM}(\pi_{\tM})$ and also 
\begin{align*}
    \EE_{\pi\sim p} \dH\paren{M(\pi),\tM(\pi)}
    \leq&~ \frac{1}{\pb\paren{\gm[\tM](\pi)\leq \Delta_0}}\EE_{\pi\sim \pb} \dH\paren{M(\pi),\tM(\pi)} \\
    \leq&~ \frac{1}{1-\delta}\paren{ \EE_{\pi\sim \pb} \dH\paren{M(\pi),\oM(\pi)} + \EE_{\pi\sim \pb} \dH\paren{\tM(\pi),\oM(\pi)}  } \\
    \leq&~ \frac{2\eps}{1-\delta}.
\end{align*}
Combining these inequalities gives
\begin{align*}
    \pdecc_{\eps'}(\cM,\oM)
    \leq
    \sup_{M\in\cM}\constrs{ \EE_{\pi\sim p}\brac{ \gm(\pi) } }{ \EE_{\pi\sim q}\DH{ M(\pi),\oM(\pi) } \leq \frac{\eps^2}{2} } 
    \leq
    \Delta_0+\frac{2\eps\Lipr}{1-\delta}.
\end{align*}
Letting $\Delta_0\to \pdecq_{\eps,\delta}(\cM,\oM)$ completes the proof.
\qed

\subsection{Proof of \cref{thm:rdecq-lower}}\label{appdx:proof-rdecq-lower}

Our proof adopts the analysis strategy originally proposed by \citet{glasgow2023tight}.

Fix a $0<\Delta<\rdecq_{\eps,\delta}(\cM)$ and a parameter $c\in(0,1)$. Then there exists $\oM\in\cM^+$ such that $\rdecq_{\eps,\delta}(\cM,\oM)>\Delta$.

Fix a $T$-round algorithm $\alg$ with rules $p_1,\cdots,p_T$, 
we consider a modified algorithm $\alg':$ for $t=1, \cdots, T$, and history $\cH^{(t-1)}$, we set $p_t'(\cdot|\cH^{(t-1)})=p_t(\cdot|\cH^{(t-1)})$ if $\sum_{s=1}^{t-1} \gm[\oM](\pi^s)<T\Delta-1$, and set $p_t'(\cdot|\cH^{(t-1)})=1_{\pim[\oM]}$ if otherwise. By our construction, it holds that under $\alg'$, we have $\sum_{t=1}^T \Lm[\oM]{\pi^t}<T\Delta$ almost surely. Furthermore, we can define the stopping time
\begin{align*}
    \tau=\inf\sset{ t: \sum_{s=1}^{t} \gm[\oM](\pi^s)\geq T\Delta-1 \text{ or } t=T+1 } .
\end{align*}
If $\tau\leq T$, then it holds that $\sum_{t=1}^{\tau} \Lm[\oM]{\pi^t}\geq T\Delta-1$.

Now, we consider $p=\PP\sups{\oM,\alg'}(\frac{1}{T}\sum_{t=1}^T \pi^t=\cdot)\in\Delta(\Pit)$. Using our definition of $\rdecq$, we know that $\EE_{\pi\sim p} \gm[\oM](\pi)< \Delta$ by our construction, and hence there exists $M\in\cM$ such that
\begin{align*}
    \PP_{\hpi\sim p}(\gm(\hpi)\geq \Delta)>\delta, \qquad \EE_{\hpi\sim p} \DH{ M(\hpi), \oM(\hpi) }\leq \eps^2.
\end{align*}
By definition of $p$ and \cref{lem:Hellinger-chain}, we have
\begin{align}
    \PP\sups{\oM,\alg'}\paren{ \sum_{t=1}^T \Lm{\pi^t} \geq T\Delta} > \delta, \qquad \DH{ \PP\sups{M,\alg'}, \PP\sups{\oM,\alg'} }\leq 7T\eps^2.
\end{align}
We also know
\begin{align*}
    \EE\sups{\oM,\alg'}\brac{ \frac1T\sum_{t=1}^T \abs{\fm(\pi^t)-\fm[\oM](\pi^t)}^2 }
    \leq&~ \EE\sups{\oM,\alg'}\brac{ \frac1T\sum_{t=1}^T \Lipr^2\DH{ M(\pi^t), \oM(\pi^t) } } \\
    =&~ \Lipr^2\EE_{\hpi\sim p} \DH{M(\hpi),\oM(\hpi)} \leq \Lipr^2\eps^2,
\end{align*}
and hence by Markov inequality,
\begin{align*}
    \PP\sups{\oM,\alg'}\paren{ \frac1T\sum_{t=1}^T \abs{\fm(\pi^t)-\fm[\oM](\pi^t)} \geq C\Lipr\eps} \leq \frac{1}{C^2}.
\end{align*}

In the following, we consider events
\begin{align*}
    \cE_1\defeq \sset{ \sum_{t=1}^T \Lm{\pi^t} \geq T\Delta }, 
\end{align*}
and the random variable $X\defeq \sum_{t=1}^T \abs{\fm(\pi^t)-\fm[\oM](\pi^t)}$.
By definition, $\PP\sups{\oM,\alg'}(\cE_1)>\delta$, $\PP\sups{\oM,\alg'}(X\geq CT\Lipr\eps)\leq \frac{1}{C^2}$. We have the following claim.

\textbf{Claim:} Under the event $\cE_1\cap \set{ \tau\leq T }$, we have
\begin{align*}
    \sum_{t=1}^{\tau} \Lm{\pi^t} \geq T\Delta-X-1.
\end{align*}

To prove the claim, we bound
\begin{align*}
    \sum_{t=1}^{\tau} \Lm{\pi^t}
    =&~
    \sum_{t=1}^{T} \Lm{\pi^t} - \sum_{t=\tau+1}^{T} \Lm{\pi^t} \\
    \geq&~ T\Delta - \sum_{t=\tau+1}^{T} [\fm(\pim)-\fm(\pi^t)] \\
    \geq&~ T\Delta - (T-\tau) \fm(\pim) + \sum_{t=\tau+1}^{T} \fm[\oM](\pi^t) - X\\
    =&~ T\Delta - (T-\tau) \cdot \paren{ \fm(\pim)-\fm[\oM](\pim[\oM]) } - X,
\end{align*}
where the first inequality follows from $\cE_1$, and the second inequality follows from $\sum_{t=\tau+1}^T \abs{\fm(\pi^t)-\fm[\oM](\pi^t)}\leq X$.
On the other hand, we can also bound
\begin{align*}
    \sum_{t=1}^{\tau} \Lm{\pi^t}
    =&~
    \sum_{t=1}^{\tau} [\fm(\pim)-\fm(\pi^t)]\\
    \geq&~
    \tau \fm(\pim) - \sum_{t=1}^{\tau} \fm[\oM](\pi^t) - X \\
    =&~ \tau\cdot \paren{\fm(\pim)-\fm[\oM](\pim[\oM]) } + \sum_{t=1}^{\tau} \Lm[\oM]{\pi^t} - X \\
    \geq&~
     \tau\cdot \paren{\fm(\pim)-\fm[\oM](\pim[\oM]) } + T\Delta - 1 - X,
\end{align*}
where the first inequality follows from $\sum_{t=1}^{\tau} \abs{\fm(\pi^t)-\fm[\oM](\pi^t)}\leq X$, and the second inequality is because $\sum_{t=1}^{\tau} \Lm[\oM]{\pi^t}\geq T\Delta-1$ given $\tau\leq T$, which follows from the definition of the stopping time $\tau$.
Therefore, taking maximum over the above two inequalities proves our claim.

Now, using the claim, we know
\begin{align*}
    \PP\sups{\oM,\alg'}\paren{ \sum_{t=1}^{\tau\wedge T} \Lm{\pi^t} \geq T(\Delta-C\eps)-1 }\geq \PP\sups{\oM,\alg'}(\cE_1\cap \set{X\leq CT\eps})\geq \delta-\frac{1}{C^2}.
\end{align*}
Notice that $\DH{ \PP\sups{M,\alg'}, \PP\sups{\oM,\alg'} }\leq 7T\eps^2$, and hence for any event $\cE$, it holds $\PP\sups{M,\alg'}(\cE)\geq \PP\sups{\oM,\alg'}(\cE)-\sqrt{14T\eps^2}$. In particular, we have
\begin{align*}
    \PP\sups{M,\alg'}\paren{ \sum_{t=1}^{\tau\wedge T} \Lm{\pi^t} \geq T(\Delta-C\Lipr\eps)-1 }\geq \delta-\frac{1}{C^2}-\sqrt{14T\eps^2}.
\end{align*}
Finally, we note that $\alg$ and $\alg'$ agree on the first $\tau\wedge T$ rounds (formally, $\alg$ and $\alg'$ induce the same distribution of $(\pi^1,\cdots,\pi^{\tau\wedge T})$), and hence
\begin{align*}
    \PP\sups{M,\alg}\paren{ \sum_{t=1}^{\tau\wedge T} \Lm{\pi^t} \geq T(\Delta-C\Lipr\eps)-1 }\geq \delta-\frac{1}{C^2}-\sqrt{14T\eps^2}.
\end{align*}
The proof is hence complete by noticing that $\sum_{t=1}^{\tau\wedge T} \Lm{\pi^t}\leq \sum_{t=1}^{T} \Lm{\pi^t}=\regdm(T)$ and taking $\Delta\to \rdecq_{\eps,\delta}(\cM)$.

\subsection{Proof of \cref{prop:rdecq-to-rdecc}}\label{appdx:proof-rdecq-to-rdecc}

Fix a $\oM\in\cM^+$, and $\Delta>\rdecq_{\eps,\delta}(\cM,\oM)$. Choose $p\in\Delta(\Pit)$ such that
\begin{align*}
    \hLm{p} \vee \EE_{\pi\sim p} [\Lm[\oM]{\pi}]\leq \Delta, \qquad \forall M\in\cM_{p,\eps}(\oM).
\end{align*}
The existence of $p$ is guaranteed by the definition \cref{eqn:r-dec-q}. In other words, we have $\EE_{\pi\sim p} [\Lm[\oM]{\pi}]\leq \Delta$ and
\begin{align*}
    \PP_{\pi\sim p}\paren{ \gm(\pi)\geq \Delta }\leq\delta, \qquad\forall M\in\cM_{p,\eps}(\oM).
\end{align*}
We then has the following claim.

\textbf{Claim.} Suppose that $M\in\cM_{p,\eps}(\oM)$. Then it holds that
\begin{align}\label{eqn:proof-rdecq-to-rdecc}
    \EE_{\pi\sim p}[\gm(\pi)] \leq \EE_{\pi\sim p}[\gm[\oM](\pi)] + \Delta+ c_{\delta}\Lipr\EE_{\pi\sim p} \DHr{ M(\pi),\oM(\pi) }.
\end{align}
Fix any $M\in\cM_{p,\eps}(\oM)$, we prove \cref{eqn:proof-rdecq-to-rdecc} as follows. Consider the event $\cE=\set{ \pi: \gm(\pi)\leq\Delta }$. Then,
\begin{align*}
    p(\cE)\paren{ \fm(\pim)-\fm[\oM](\pim[\oM]) }
    =&~\EE_{\pi\sim p} \indic{\cE} \paren{ \gm(\pi)-\gm[\oM](\pi)+\fm[\oM](\pi)-f^{M}(\pi) } \\
    \leq &~ p(\cE)\Delta+\Lipr\EE_{\pi\sim p} \indic{\cE}\DHr{ M(\pi),\oM(\pi) },
\end{align*}
where the inequality uses $\gm(\pi)\leq\Delta $ for $\pi\in\cE$ and \cref{asmp:obs-rew}. Therefore,
\begin{align*}
    &\EE_{\pi\sim p}\gm(\pi)
    =
    \EE_{\pi\sim p} \indic{\cE} \gm(\pi)+\EE_{\pi\sim p} \indic{\cE^c} \gm(\pi) \\
    \leq&~ p(\cE)\Delta+ \EE_{\pi\sim p} \indic{\cE^c} \paren{ \fm(\pim)-\fm[\oM](\pim[\oM])+\fm[\oM](\pi)-f^{M}(\pi)+\gm[\oM](\pi) } \\
    \leq&~ 2\Delta+\frac{p(\cE^c)\Lipr}{p(\cE)}\EE_{\pi\sim p} \indic{\cE}\DHr{ M(\pi),\oM(\pi) }+\Lipr\EE_{\pi\sim p} \indic{\cE^c}\DHr{ M(\pi),\oM(\pi) } \\
    \leq&~2\Delta+\max\sset{\frac{p(\cE^c)}{p(\cE)},1}\Lipr\EE_{\pi\sim p} \DHr{ M(\pi),\oM(\pi)}.
\end{align*}
This completes the proof of our claim.

Therefore, using \cref{eqn:proof-rdecq-to-rdecc} with $\EE_{\pi\sim p} [\gm[\oM](\pi)]\leq \Delta$ yields
\begin{align*}
    \EE_{\pi\sim p}[\gm(\pi)] \leq 2\Delta+c_{\delta}\Lipr\eps, \qquad \forall M\in\cM_{p,\eps}(\oM).
\end{align*}
This immediately implies
\begin{align*}
    \rdecc_{\eps}(\cMp,\oM) \leq 2\Delta + c_{\delta}\Lipr\eps.
\end{align*}
Finally, taking $\Delta\to \rdecq_{\eps,\delta}(\cM,\oM)$ completes the proof.
\qed

\subsection{Proof of \cref{prop:quantile-to-expect-est}}\label{appdx:proof-quantile-est}

Fix a reference model $\oM$ and let $\Delta_0>\pdecq_{\eps,\delta}(\cM,\oM)$. Then there exists $p, q\in\DPi$ such that
\begin{align*}
    \sup_{M\in\cM}\constrs{ \hLm{p} }{ \EE_{\pi\sim q}\DH{ M(\pi),\oM(\pi) }\leq \eps^2 }< \Delta_0.
\end{align*}
Therefore, it holds that
\begin{align*}
    \PP_{\pi\sim p}(\Lm{\pi}\leq \Delta_0)\geq 1-\delta, \qquad \forall M\in\cM_{q,\eps}(\oM).
\end{align*}
If the constrained set $\cM_{q,\eps}(\oM)$ is empty, then we immediately have $\pdecc_{\eps}(\cM,\oM)=0$, and the proof is completed. Therefore, in the following we may assume $\cM_{q,\eps}(\oM)$ is non-empty, and $\hM\in \cM_{q,\eps}(\oM)$.

\newcommand{\htheta}{\widehat{\theta}}
\textbf{Claim.} Let $\htheta=\theta\subs{\hM}$ and $\hpi=(\pi_0,\htheta)$ for an arbitrary $\pi_0$, it holds that
\begin{align*}
    \gm(\hpi)\leq \Delta_0, \qquad \forall M\in\cM_{q,\eps}(\oM).
\end{align*}

This is because for any $M\in\cM_{q,\eps}(\oM)$, it holds that
\begin{align*}
    \PP_{\pi\sim p}(\gm(\pi)\leq \Delta_0, \Lm[\hM]{\pi}\leq \Delta_0)\geq 1-2\delta>0.
\end{align*}
Hence, there exists $\theta\in\Theta$ such that $\rho(\theta\subs{M},\theta)\leq \Delta_0$ and $\rho(\theta\subs{\hM},\theta)\leq \Delta_0$ holds. Therefore, it must hold that $\rho(\theta\subs{M},\htheta)\leq 2\Delta_0$ for any $M\in\cM_{q,\eps}(\oM)$.

The above claim immediately implies that 
\begin{align*}
    \pdecc_{\eps}(\cM,\oM)\leq \sup_{M\in\cM}\constrs{ \gm(\hpi) }{ \EE_{\pi\sim q}\DH{ M(\pi),\oM(\pi) } \leq \eps^2 } \leq 2\Delta_0.
\end{align*}
Letting $\Delta_0\to\pdecq_{\eps,\delta}(\cM,\oM)$ yields $\pdecc_{\eps}(\cM,\oM)\leq 2\pdecq_{\eps,\delta}(\cM,\oM)$, which is the desired result.
\qed

\section{Additional discussion and results from \cref{sec:logp}}\label{appdx:logp-more}
\subsection{Exploration-by-Optimization Algorithm}
\label{appdx:ExO}

In this section, we present a slightly modified version of the \ExO~Algorithm (\ExOp) developed by~\citet{foster2022complexity}, built upon~\citet{lattimore2020exploration,lattimore2021mirror}. The original \ExOp~algorithm has an \emph{adversarial} regret guarantee for any model class $\cM$, scaling with $\rdeco_\gamma(\coM)$, the offset DEC of the mode class $\coM$, and $\log|\Pi|$, the log-cardinality of the decision space. For our purpose, we adapt the original \ExOp~algorithm by using a prior $q\in\DPi$ not necessarily the uniform prior, and with a suitably chosen prior $q$, \ExOp~then achieves a regret guarantee scaling with $\estp{\Delta}$, instead of $\log|\Pi|$ (cf. \citet{foster2022complexity}), which is always an upper bound of $\estp{\Delta}$.

\paragraph{Offset DEC for regret.} We first recall the following (original) definition of DEC~\citep{foster2021statistical}:\loose
\begin{align}\label{eqn:offset-dec}
    \rdeco_{\gamma}(\cM,\oM)\defeq \inf_{p\in\Delta(\Pi)}\sup_{M\in\cM}\EE_{\pi\sim p}[\gm(\pi)] -\gamma \EE_{\pi\sim p} \DH{ M(\pi), \oM(\pi) },
\end{align}
and $\rdeco_\gamma(\cM)\defeq \sup_{\oM\in\coM} \rdeco_{\gamma}(\cM,\oM)$. Through the Estimation-to-Decision (E2D) algorithm~\citep{foster2021statistical}, offset \rDEC~provides an upper bound of $\regdm$ for any learning problem, and it is also closely related to the complexity of adversarial decision making.

As discussed in \citet{foster2023tight}, in \tstd~(\cref{example:rew-obs}), the constrained regret-DEC $\rdecc$ can always be upper bounded in terms of the offset DEC $\rdeco$. Conversely, in the same setting, we also show that the offset DEC can also be upper bounded in terms of the constrained DEC (\cref{thm:rdeco-to-rdecc}), and hence the two concepts can be regarded as equivalent under mild assumptions (e.g. moderate decaying, \cref{asmp:reg-rdec}).

\paragraph{\ExO~algorithm.}

The algorithm, \ExOp, is restated in \cref{alg:ExO}. At each round $t$, the algorithm maintains a reference distribution $q^t\in\DPi$, and use it to obtain a decision distribution $p^t\in\DPi$ and an estimation function $\ell^t\in\cL\defeq (\Pi\times\Pi\times\cO\to\R)$, by solving a joint minimax optimization problem based on the \emph{exploration-by-optimization} objective: Defining
\begin{align}
\begin{aligned}
    \Gamma_{q,\gamma}(p,\ell;M,\pis)
    =&~ \EE_{\pi\sim p}\brac{\fm(\pis)-\fm(\pi)} \\
    &~\qquad-\gamma \EE_{\pi\sim p}\EE_{o\sim M(\pi)}\EE_{\pi'\sim q}\brac{ 1-\exp\paren{ \ell(\pi';\pi,o)-\ell(\pis;\pi,o) } },
\end{aligned}
\end{align}
and
\begin{align}
    \Gamma_{q,\gamma}(p,\ell)
    =\sup_{M\in\cM,\pis\in\Pi}\Gamma_{q,\gamma}(p,\ell;M,\pis),
\end{align}
the algorithm solve $(p^t,\ell^t) \leftarrow \argmin_{p\in\Delta(\Pi),\ell\in\cL} \Gamma_{q^t,\gamma}(p,\ell)$. The algorithm then samples $\pi^t\sim p^t$, executes $\pi^t$ and observes $o^t$ from the environment. Finally, the algorithm updates the reference distribution by performing the exponential weight update with weight function $\ell^t(\cdot;\pi^t,o^t)$.

\begin{algorithm}[t]
\caption{\ExO~(\ExOp)}\label{alg:ExO}
\begin{algorithmic}[1]
\REQUIRE Problem $(\cM,\Pi)$, prior $q\in\Delta(\Pi)$, parameter $T\geq 1$, $\gamma>0$.
\STATE Set $q^1=q$.
\FOR{$t=1,\cdots,T$}
    \STATE Solve the \emph{exploration-by-optimization} objective
    \begin{align*}
        (p^t,\ell^t) \leftarrow \argmin_{p\in\Delta(\Pi),\ell\in\cL} \Gamma_{q^t,\gamma}(p,\ell)
    \end{align*}
    \STATE Sample $\pi^t\sim p^t$, execute $\pi^t$ and observe $o^t$
    \STATE Update
    \begin{align*}
        q^{t+1}(\pi)~\propto_\pi~ q^t(\pi)\exp(\ell^t(\pi;\pi^t,o^t))
    \end{align*}
\ENDFOR
\end{algorithmic}
\end{algorithm}

\paragraph{Guarantee of \ExOp.}
Following \citet{foster2022complexity}, we define 
\begin{align}
    \exo{\gamma}(\cM,q)\defeq \inf_{p\in\Delta(\Pi),\ell\in\cL}\Gamma_{q,\gamma}(p,\ell),
\end{align}
and $\exo{\gamma}(\cM)=\sup_{q\in\Delta(\Pi)} \exo{\gamma}(\cM,q)$. The following theorem is deduced from \citet[Theorem 3.1 and 3.2]{foster2022complexity}.
\begin{theorem}\label{thm:exo-to-dec}
Under \tstd\footnote{We remark that their proof applies as long as $\fm$ can be linearly extended to $\coM$. %
}(\cref{asmp:obs-rew}), it holds that
\begin{align*}
    \rdeco_{\gamma/4}(\coM))
    \leq \exo{\gamma}(\cM)
    \leq \rdeco_{\gamma/8}(\coM)), \qquad \forall \gamma>0.
\end{align*}
\end{theorem}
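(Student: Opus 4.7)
The plan is to adapt the proof of \citet[Theorems 3.1 and 3.2]{foster2022complexity}, which establishes exactly this equivalence in the special case of reward maximization. Their argument uses only (i) the decomposition $g^M(\pi) = f^M(\pi_M) - f^M(\pi)$ of the risk function and (ii) the fact that the DEC penalty is the squared Hellinger divergence; both features are preserved under Assumption~\ref{asmp:obs-rew}, so the proof transfers essentially verbatim modulo bookkeeping. I would prove the two inequalities separately.

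\textbf{Upper bound} ($\exo{\gamma}(\cM, q) \leq \rdeco_{\gamma/8}(\coM)$ for every $q$.) Fix an arbitrary prior $q \in \DPi$, and let $\oM \in \coM$ with corresponding minimizer $p^* \in \DPi$ nearly realize $\rdeco_{\gamma/8}(\coM, \oM)$. Construct the log-likelihood estimation function
\begin{align*}
  \ell^*(\pi'; \pi, o) \;=\; -\tfrac{1}{2}\log \tfrac{dM_{\pi'}(\cdot\mid\pi)}{d\oM(\cdot\mid\pi)}(o),
\end{align*}
where $\{M_{\pi'}\}_{\pi' \in \Pi} \subseteq \cM$ is a fixed indexing whose members have $\pi'$ as a near-optimal decision. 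The key computation, analogous to the Hellinger-affinity bound in the referenced proofs, shows that for every $M \in \cM$ and $\pis \in \Pi$, the penalty term of $\Gamma_{q,\gamma}(p^*,\ell^*;M,\pis)$ is lower bounded by a constant multiple of $\EE_{\pi \sim p^*}\DH{M(\pi), \oM(\pi)}$. The regret term reduces to $\EE_{\pi \sim p^*}[g^M(\pi)]$ upon maximizing over $\pis$ with $\pis = \pi_M$, so plugging back in and taking $\sup_{M,\pis}$ gives the claimed bound by definition of $\rdeco_{\gamma/8}(\coM, \oM)$, with the factor of $8$ absorbing the conversion constants.

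\textbf{Lower bound} ($\rdeco_{\gamma/4}(\coM) \leq \exo{\gamma}(\cM)$.) I would apply Sion's minimax theorem to exchange $\inf_{p, \ell}$ with $\sup_{M, \pis}$ inside $\Gamma_{q, \gamma}$. This requires compactifying the loss space $\cL$, which is valid without loss because the exponential in the penalty saturates at $1$, so truncating losses to a bounded interval suffices. Take $q$ to be the marginal on $\pis$ of the minimax distribution over $(M, \pis)$ obtained from the exchanged game, and let $\oM \in \coM$ be the corresponding $q$-mixture of adversarial models. The elementary inequality $1 - e^{-x} \geq \tfrac{x}{1+x/2}$, combined with Jensen's inequality applied to the average over $\pi' \sim q$, converts the exponential penalty back into $\EE_{\pi \sim p}\DH{M(\pi), \oM(\pi)}$ at a factor of $2$ loss. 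The regret contribution is preserved exactly, so the resulting expression is bounded below by $\rdeco_{\gamma/4}(\coM, \oM)$, and hence by $\rdeco_{\gamma/4}(\coM)$ after taking supremum over $\oM$.

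The main technical obstacle will be justifying the minimax exchange in the lower bound, since $\cL$ is infinite-dimensional and not a priori compact. The standard remedy of restricting $\ell$ to $[-B,B]$-valued functions with $B \to \infty$ works here because the penalty depends on $\ell$ only through the bounded nonlinearity $1 - e^{-(\cdot)}$, which yields joint continuity and the required concavity in $\ell$. A secondary subtlety in the upper bound is that the indexed family $\{M_{\pi'}\}_{\pi' \in \Pi}$ may not lie exactly in $\cM$ when $\pi'$ is not the optimizer of any $M \in \cM$; this is resolved either by passing to $\coM$ (which is where the right-hand side is evaluated anyway) or via an $\varepsilon$-covering argument on $\Pi$ whose contribution is absorbed into the factor-of-two gap between $\gamma/4$ and $\gamma/8$.
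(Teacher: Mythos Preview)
The paper does not give a proof of this statement at all; it simply records that the result ``is deduced from \citet[Theorem 3.1 and 3.2]{foster2022complexity}.'' Your plan to adapt that proof is therefore exactly the paper's approach, and your remark that the argument uses only the decomposition $g^M(\pi)=f^M(\pi_M)-f^M(\pi)$ together with the squared-Hellinger penalty (both preserved under Assumption~\ref{asmp:obs-rew}) is precisely the content the footnote is pointing at.

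That said, your sketch of the two directions garbles the coupling between $q$ and $\oM$, which is the crux of the argument. In the upper bound you fix $q$ and then \emph{independently} pick $\oM$ to realize $\sup_{\oM}\rdeco_{\gamma/8}(\coM,\oM)$; but the penalty in $\Gamma_{q,\gamma}$ is an expectation over $\pi'\sim q$, and it can only collapse to a Hellinger distance to a single $\oM$ if that $\oM$ is built from $q$. In the actual proof one applies the minimax theorem to the exo objective to obtain a worst-case prior $\nu\in\Delta(\cM\times\Pi)$, sets $\oM$ to be the $\nu$-mixture of models, and takes $\ell$ to be the log-likelihood ratio relative to that $\oM$. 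Your lower-bound sketch has the mirror-image circularity: you propose to take $q$ as the $\pis$-marginal of the minimax distribution for $\Gamma_{q,\gamma}$, but that distribution already depends on $q$. The correct route fixes $\oM\in\coM$, writes it as a mixture $\oM=\EE_{M\sim\mu}[M]$, and chooses $q$ as the law of $\pi_M$ under $M\sim\mu$; then for any $(p,\ell)$ one lower-bounds $\sup_{M,\pis}\Gamma_{q,\gamma}$ by averaging over $\mu$ and relating the exponential penalty to the Hellinger affinity with $\oM$. Once this dependency structure is straightened out, the rest of your outline (log-likelihood construction, Sion, the exponential-to-Hellinger conversion) is correct.
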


Now, we present the main guarantee of \cref{alg:ExO}, which has the desired dependence on the prior $q\in\DPi$.

\begin{theorem}\label{thm:ExO-upper}
It holds that with probability at least $1-\delta$,
\begin{align*}
    \regdm\leq T\paren{\Delta+\rdeco_{\gamma/8}(\coM)}+\gamma\log\paren{\frac{1}{\delta\cdot q({\pi: \fMss-\fMs(\pi)\leq \Delta})}}
\end{align*}
\end{theorem}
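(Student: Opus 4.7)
The plan is to combine the exploration-by-optimization guarantee with a \emph{localized} exponential-weights analysis, where the comparator is taken to be the prior $q$ restricted to the near-optimal set. First, introduce the abbreviation $A_\Delta \defeq \{\pi\in\Pi:\fMss-\fMs(\pi)\leq \Delta\}$ and the localized distribution $\tilde q \defeq q(\cdot\mid A_\Delta)$. Two key observations about $\tilde q$ drive the whole proof: by linearity, $\EE_{\pi^*\sim\tilde q}[\fMs(\pi^*)]\geq \fMss-\Delta$, and a direct calculation gives $\mathrm{KL}(\tilde q\|q)=\log(1/q(A_\Delta))$. These will ultimately produce the $T\Delta$ and $\gamma\log(1/q(A_\Delta))$ terms in the target bound.

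Next, by construction $(p^t,\ell^t)$ minimizes $\Gamma_{q^t,\gamma}$, so applying Theorem~\ref{thm:exo-to-dec} to bound $\exo{\gamma}(\cM)\leq \rdeco_{\gamma/8}(\coM)$ and then specializing the minimax inequality to $M=\Ms$ and averaging over $\pi^*\sim\tilde q$ yields, for every $t$,
\begin{align*}
\cond{\fMss-\fMs(\pi^t)}{\cF_{t-1}}
\;\leq\; \Delta + \rdeco_{\gamma/8}(\coM) + \gamma\, T_t,
\end{align*}
where $T_t = \EE_{\pi\sim p^t}\EE_{o\sim \Ms(\pi)}\EE_{\pi'\sim q^t,\,\pi^*\sim\tilde q}[1-\exp(\ell^t(\pi';\pi,o)-\ell^t(\pi^*;\pi,o))]$. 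The task reduces to controlling $\sum_t T_t$ and to passing from conditional expectation to the realized regret.

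For the first task, use $1-x\leq -\log x$ together with Jensen's inequality to get $T_t\leq -\log\cond{W_t(\pi^t,o^t)\cdot \EE_{\tilde q}[\exp(-\ell^t(\pi^*;\pi^t,o^t))]}{\cF_{t-1}}$, where $W_t(\pi,o)\defeq \EE_{\pi'\sim q^t}[\exp(\ell^t(\pi';\pi,o))]$. The exponential-weights update $q^{t+1}(\pi)\propto q^t(\pi)\exp(\ell^t(\pi;\pi^t,o^t))$ telescopes against $\tilde q$ in the standard way, producing
\begin{align*}
\sum_{t=1}^T\bigl(-\log W_t(\pi^t,o^t)+\EE_{\tilde q}[\ell^t(\pi^*;\pi^t,o^t)]\bigr)\;\leq\;\mathrm{KL}(\tilde q\|q)\;=\;\log(1/q(A_\Delta)).
\end{align*}
Now invoke Lemma~\ref{lemma:concen} with $\exp(-X_t)$ equal to the conditional expectation appearing in the bound on $T_t$: this simultaneously (i) converts the conditional expectations $T_t$ into the realized log-likelihood quantities above and (ii) produces a single $\log(1/\delta)$ slack. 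Combining with the telescoping bound yields $\sum_t T_t\leq \log(1/q(A_\Delta))+\log(1/\delta)$ with probability at least $1-\delta$, and a parallel concentration step (handled in the same Lemma~\ref{lemma:concen} argument, since the per-round regret $\fMss-\fMs(\pi^t)$ is controlled by exactly the same $\exp$-tilted log-likelihood quantities through the ExO inequality) converts $\sum_t\cond{\fMss-\fMs(\pi^t)}{\cF_{t-1}}$ into the actual $\regdm(T)$ without an extra $\sqrt T$ penalty. Plugging back into the per-round bound and summing over $t$ gives the stated inequality.

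The main obstacle is the last point in the previous paragraph: one must arrange the concentration so that the martingale difference $\fMss-\fMs(\pi^t)-\cond{\fMss-\fMs(\pi^t)}{\cF_{t-1}}$ is bundled together with the log-MGF of $W_t$ inside a single application of Lemma~\ref{lemma:concen}, rather than applying Azuma-Hoeffding separately (which would introduce a spurious $\sqrt{T\log(1/\delta)}$ term absent from the target bound). This bundling works because the ExO bound is itself an exponential-tilt inequality, so regret and estimation error can be viewed as two components of a single log-MGF functional to which Ville/Lemma~\ref{lemma:concen} applies cleanly.
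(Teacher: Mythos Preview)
Your core argument is correct and follows the paper almost exactly: restrict the comparator to $\tilde q=q(\cdot\mid A_\Delta)$, use the FTRL/exponential-weights telescoping identity (the paper's \cref{prop:FTRL}) to get $\sum_t(-\log W_t+\EE_{\tilde q}[\ell^t])\leq \KL(\tilde q\Vert q)=\log(1/q(A_\Delta))$, apply \cref{lemma:concen} to the sequence $X_t=\EE_{\tilde q}[\ell^t(\pi;\pi^t,o^t)]-\log W_t(\pi^t,o^t)$, and finish with $1-x\le-\log x$ plus Jensen to land on $\sum_t T_t\le \log(1/q(A_\Delta))+\log(1/\delta)$. The only cosmetic difference is where you place the Jensen step on $\pi^*\sim\tilde q$; the paper applies it after computing $\EE_{t-1}[\exp(-X_t)]$, you apply it a line earlier. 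Both are fine.

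The last paragraph, however, is chasing a non-issue and in doing so makes an unsupported claim. In the paper's proof the regret is simply taken to be the pseudo-regret $\regdm=\sum_t\EE_{\pi\sim p^t}[\fMss-\fMs(\pi)]$; this is exactly $\sum_t\cond{\fMss-\fMs(\pi^t)}{\cF_{t-1}}$, so no conversion step is required at all. Your claim that the martingale difference $\fMss-\fMs(\pi^t)-\cond{\fMss-\fMs(\pi^t)}{\cF_{t-1}}$ can be ``bundled'' into the same \cref{lemma:concen} application is not substantiated: the ExO inequality $\Gamma_{q^t,\gamma}(p^t,\ell^t;\Ms,\pi^*)\le\rdeco_{\gamma/8}(\coM)$ is a bound on the \emph{conditional expectation} of the per-round regret, not an exponential-tilt inequality on the realized $\fMs(\pi^t)$, so there is no natural log-MGF into which the realized regret can be folded. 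If one truly wanted the bound for the realized regret, a separate Azuma step would indeed add $O(\sqrt{T\log(1/\delta)})$. Drop that paragraph and your proof matches the paper's.
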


\begin{proof}
Consider the set $\Pi^\star\defeq \set{\pi: \fMss-\fMs(\pi)\leq \Delta}$ and the distribution $\qs=q(\cdot|\Pi^\star)$. 

Following \cref{prop:FTRL}, we consider
\begin{align*}
    X_t(\pi^t,o^t)\defeq \EE_{\pi\sim \qs}\brac{ \ell^t(\pi;\pi^t,o^t) }-\log \EE_{\pi\sim q^t}\brac{ \exp\paren{\ell^t(\pi;\pi^t,o^t)} },
\end{align*}
and \cref{prop:FTRL} implies that
\begin{align*}
    \sumt X_t(\pi^t,o^t) \leq \log(1/q(\Pis)).
\end{align*}
Applying \cref{lemma:concen}, we have with probability at least $1-\delta$,
\begin{align*}
    \sumt -\log \EE_{t-1}\brac{ \exp\paren{ -X_t(\pi^t,o^t) } } \leq \sumt X_t(\pi^t,o^t) + \log(1/\delta).
\end{align*}
Notice that
\begin{align*}
    \EE_{t-1}\brac{ \exp\paren{ -X_t(\pi^t,o^t) } } = 
    \EE_{\pi\sim p^t}\EE_{o\sim \Ms(\pi)}\EE_{\pi'\sim q^t}\brac{ \exp\paren{\ell^t(\pi';\pi,o)-\EE_{\pis\sim \qs}\ell^t(\pis;\pi,o)} }.
\end{align*}
Using the fact that $1-x\leq -\log x$ and Jensen's inequality, we have
\begin{align*}
    \sumt \EE_{\pis\sim\qs}\Err(p^t,\ell^t;q^t,\Ms,\pis) 
    \leq \log(1/q(\Pis))+\log(1/\delta),
\end{align*}
where we denote
\begin{align*}
    \Err(p,\ell;q,\Ms,\pis)\defeq \EE_{\pi\sim p}\EE_{o\sim \Ms(\pi)}\EE_{\pi'\sim q}\brac{ 1-\exp\paren{\ell(\pi';\pi,o)-\ell(\pis;\pi,o)} }.
\end{align*}
Therefore, it holds that
\begin{align*}
    \regdm 
    =&~ \sumt \EE_{\pi\sim p^t}\brac{\fMss-\fMs(\pi)} \\
    \leq&~ \sumt \Delta+\EE_{\pis\sim\qs}\EE_{\pi^t\sim p^t}\brac{ \fMs(\pis)-\fMs(\pi^t) } \\
    =&~ T\Delta
    + \gamma \sumt \EE_{\pis\sim\qs}\Err(p^t,\ell^t;q^t,\Ms,\pis) \\
    &~ +\sumt \EE_{\pis\sim\qs} \underbrace{ \brac{ \EE_{\pi^t\sim p^t}\brac{ \fMs(\pis)-\fMs(\pi^t) } - \gamma\Err(p^t,\ell^t;q^t,\Ms,\pis) } }_{=\Gamma_{q^t,\gamma}(p^t,\ell^t;\Ms,\pis)} \\
    \leq&~ T\Delta+\gamma\paren{\log(1/q(\Pis))+\log(1/\delta)} + \sumt \Gamma_{q^t,\gamma}(p^t,\ell^t) \\
    \leq&~ T\paren{ \Delta+\exo{\gamma}(\cM) }+\gamma\paren{\log(1/q(\Pis))+\log(1/\delta)}.
\end{align*}
Applying \cref{thm:exo-to-dec} completes the proof.
\end{proof}

\begin{proposition}\label{prop:FTRL}
For any $q'\in\Delta(\Pi)$, it holds that
\begin{align*}
    \sum_{t=1}^T \EE_{\pi\sim q'}[\ell^t(\pi;\pi^t,o^t)]-\log \EE_{\pi\sim q^t}\brac{ \exp\paren{\ell^t(\pi;\pi^t,o^t)} } \leq \KLd{q'}{q}.
\end{align*}
\end{proposition}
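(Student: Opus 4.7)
The plan is to recognize Proposition~\ref{prop:FTRL} as the standard ``regret-of-experts'' guarantee for the exponential weights update, provable by a one-line telescoping identity in KL divergence. Let $Z^t \defeq \EE_{\pi\sim q^t}\brac{\exp(\ell^t(\pi;\pi^t,o^t))}$ denote the per-round normalization constant, so that the update in \cref{alg:ExO} reads $q^{t+1}(\pi) = q^t(\pi)\exp(\ell^t(\pi;\pi^t,o^t))/Z^t$. Then the per-round summand in the claim is precisely $\EE_{\pi\sim q'}[\ell^t(\pi;\pi^t,o^t)] - \log Z^t$, which I will identify with the drop in KL divergence from $q'$ to the iterates.

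Concretely, the key identity to verify by direct substitution of the exponential weights update is
\begin{align*}
\KLd{q'}{q^t}-\KLd{q'}{q^{t+1}}
= \EE_{\pi\sim q'}\brac{\log\frac{q^{t+1}(\pi)}{q^t(\pi)}}
= \EE_{\pi\sim q'}[\ell^t(\pi;\pi^t,o^t)] - \log Z^t.
\end{align*}
Summing this identity over $t = 1, \ldots, T$ yields a telescoping sum
\begin{align*}
\sum_{t=1}^T \paren{\EE_{\pi\sim q'}[\ell^t(\pi;\pi^t,o^t)] - \log Z^t}
= \KLd{q'}{q^1} - \KLd{q'}{q^{T+1}}
\leq \KLd{q'}{q},
\end{align*}
where the last step uses $q^1 = q$ together with the nonnegativity of the KL divergence $\KLd{q'}{q^{T+1}} \geq 0$. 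This is exactly the bound stated in the proposition.

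There is no substantive obstacle: the argument is a routine instance of the potential-function technique for exponential weights. The only minor subtlety is the edge case where $q'$ is not absolutely continuous with respect to $q$, in which case $\KLd{q'}{q} = +\infty$ and the inequality holds trivially; when $q' \ll q$, the multiplicative update preserves absolute continuity at every round, so the telescoping computation is well-defined throughout. Note also that the identity holds pointwise in the realized $(\pi^t, o^t)$, so no expectation over the algorithm's randomness is needed for the bound.
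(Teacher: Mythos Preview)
Your proof is correct and follows essentially the same standard exponential-weights potential argument as the paper. The only cosmetic difference is that the paper telescopes the log-partition function $\log\EE_{\pi\sim q}\exp\bigl(\sum_{s\le t}\ell^s(\pi;\pi^s,o^s)\bigr)$ and then applies the Donsker--Varadhan inequality $\EE_{q'}[h]\le \log\EE_q e^{h}+\KLd{q'}{q}$ once at the end, whereas you telescope $\KLd{q'}{q^t}$ directly; the two are equivalent rewritings of the same computation.
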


\begin{proof}
This is essentially the standard guarantee of exponential weight updates. For simplicity, we assume $\Pi$ is discrete. Then, by definition,
\begin{align*}
    q^t(\pi)=\frac{q(\pi)\exp\paren{\sum_{s=1}^t  \ell^s(\pi;\pi^s,o^s)}}{\sum_{\pi'\in\Pi} q(\pi')\exp\paren{\sum_{s=1}^{t-1}  \ell^s(\pi';\pi^s,o^s)}},
\end{align*}
and hence
\begin{align*}
    \log \EE_{\pi\sim q^t}\brac{ \exp\paren{\ell^t(\pi;\pi^t,o^t)} }=&~\log \EE_{\pi\sim q} \exp\paren{\sum_{s=1}^t  \ell^s(\pi;\pi^s,o^s)} \\
    &~-\log \EE_{\pi\sim q} \exp\paren{\sum_{s=1}^{t-1}  \ell^s(\pi;\pi^s,o^s)}.
\end{align*}
Therefore, taking summation over $t=1,\cdots,T$, we have
\begin{align*}
    -\sum_{t=1}^T \log \EE_{\pi\sim q^t}\brac{ \exp\paren{\ell^t(\pi;\pi^t,o^t)} } = -\log\EE_{\pi\sim q}\brac{ \exp\paren{\sum_{t=1}^T \ell^t(\pi;\pi^t,o^t)} }.
\end{align*}
The proof is then completed by the following basic fact of KL divergence: for any function $h:\Pi\to\R$,
\begin{align*}
    \EE_{\pi\sim q'}[h(\pi)]\leq \log \EE_{\pi\sim q}\exp(h(\pi))+\KLd{q'}{q}.
\end{align*}
\end{proof}

\neurips{

}
\arxiv{
\neurips{\subsection{Application: Contextual bandits with general function approximation}\label{sec:CB}}
\arxiv{\subsection{Application: Contextual bandits with general function approximation}\label{sec:CB}}

\newcommand{\nus}{\nu_\star}

Next, we instantiate our general results for stochastic contextual bandits with general function approximation, generalizing the structured bandit problem. We consider the stochastic contextual bandit problem with context space $\Cxt$, action space $\cA$, and a reward function class $\Val \subseteq (\Cxt\times\cA\to[0,1])$. This problem is a special case of the DMSO setting with decision space $\Pi=(\cC\to\cA)$, and the environment is specified by a tuple $(\vals\in\Val, \nus\in\DX)$. The
protocol is as follows: For each round $t$, the environment draws $\cxt^t\sim \nu$, and the learner takes action $a^t=\pi^t(\cxt^t)$ based on the decision $\pi^t:\Cxt\to\cA$, and receives a reward $r^t\sim \normal{ \vals(\cxt^t,a^t), 1 }$.

We can formulate the model class as follows. For a reward function $h\in\cH$ and context distribution $\nu\in\Delta(\cC)$, the corresponding model $M_{h,\nu}$ is specified as
\begin{align*}
    (c,a,r)\sim M_{h,\nu}(\pi):\quad c\sim \nu, a=\pi(a), r\sim \normal{h(c,a),1}.
\end{align*}
Let $\cMFG=\set{M_{h,\nu}: h\in\cH, \nu\in\Delta(\cC)}$ be the induced model class of contextual bandits. Following \cref{secc:bandits-co}, we instantiate \cref{thm:logp-upper-lower} to provide characterization of learning $\cMFG$.

\textbf{DEC for contextual bandits.}
For any context $\cxt\in\Cxt$, the value function class $\cH$ induces a restricted value function class $\cH|_c=\set{h(c,\cdot): h\in\cH}$, which corresponds to a (non-contextual) bandit function class.
We define the following variant of the DEC
\begin{align*}
    \rdecc_{\eps}(\Val)\defeq \sup_{\cxt\in\Cxt} \rdecc_\eps(\Val|_\cxt),
\end{align*}
which corresponds to the maximum of the \emph{per-context} DEC over all contexts. We also define $\Tdec{\Val}{\Delta}=\inf_{\eps\in(0,1)}\set{ \eps^{-2}: \rdecc_{\eps}(\Val)\leq \Delta }$, following \eqref{eqn:def-TDEC}.

\paragraph{\Dct~for contextual bandits.}
Specializing the \ddim to contextual bandits, we define %
\begin{align}\label{eqn:def-logp-CB}
    \DV{\Val}\defeq\inf_{p\in\Delta(\Pi)}\sup_{\substack{\val\in\Val,\nu\in\DX}} ~~\frac1{ p(\pi: \EE_{\cxt\sim \nu}\brac{ \val(\cxt,\pi_h(c))-\val(\cxt,\pi(\cxt)) }\leq \Delta) },
\end{align}
where $\pi_h\in\Pi$ is defined via $\pi_h(c)\ldef\argmax_{a\in\cA} h(c,a)$ for $c\in\cC$.

Intuitively, the value of the \ddim $\estf{\Delta}$ for contextual bandits captures the difficulty of estimating optimal actions, but also the difficulty of generalizing across contexts. For example, when we consider the \emph{unstructured} contextual bandit problems (i.e., $\Val=(\Cxt\times\cA\to[0,1])$), it holds that $\estf{\Delta}= \abs{\Cxt}\log|\cA|$, but in general we can have $\estf{\Delta}\ll \log|\Pi|=\abs{\Cxt}\log|\cA|$.

As a corollary of \cref{thm:logp-upper-lower}, we derive the following upper and lower bounds on the complexity of contextual bandit learning with $\cH$.
\begin{theorem}\label{thm:logp-upper-lower-CB}
  Let $\cH$ be given. Suppose that both the context space $\cC$ and the action space $\cA$ are finite, and that  $\veps\mapsto{}\rdecc_\eps(\coH)$ satisfies moderate decay as a function of $\eps$ (\cref{asmp:reg-rdec}) with constant $\creg$. Let $\epsT\asymp \sqrt{\estf{\Delta}/T}$. Then \cref{alg:ExO} ensures that with high probability,
;
\begin{align*}
\nstyle
    \regdm \leq T\cdot\Delta+O(\creg T\sqrt{\log T})\cdot \rdecc_{\epsT}(\coH).
\end{align*}
As a corollary, the complexity of learning $\cMFG$ is bounded by 
\begin{align}\label{eqn:lower-upper-CB}
    \max\sset{ \Tdec{\Val}{\Delta} , \frac{\estf{2\Delta}}{\log|\cC|} } \leqsim \Tsf{\Delta} \leqsim \Tdec{\co(\Val)}{\Delta} \cdot \estf{\Delta/2},
\end{align}
omitting dependence on $\creg$ and logarithmic factors.
\end{theorem}

By definition, we have $\rdecc_{\eps}(\coH)=\rdecc_{\eps}(\cH)$ if the \emph{per-context} value function class $\cH|_c$ is convex for every context $c\in\cC$.
Natural settings in which $\cH|_c$ is convex include contextual linear bandits~\citep{chu2011contextual}, contextual non-parametric bandits~\citep{cesa2017algorithmic}, contextual concave bandits~\citep{lattimore2020improved}, etc. For these problem classes, the complexity of no-regret learning is completely characterized by the DEC of $\Val$ and the newly proposed $\DV{\Val}$ (up to a quadratic factor and a factor of $\log|\cC|$).

As a concrete example. we can derive upper bounds based on the \ddim for finite-action contextual bandits as follows.
\begin{corollary}\label{cor:CB-DC}
For any value function class $\cH$, \cref{alg:ExO} ensures the following regret bound with high probability.
\begin{align*}
    \regdm(T) \leq T\cdot \Delta + O\paren{\sqrt{T|\cA|\cdot \estf{\Delta}}}.
\end{align*}
\end{corollary}
Compared to the well-known regret bound of $O(\sqrt{T|\cA|\cdot \log|\cH|})$ for learning any with any finite contextual bandit class $\cH$~\citep{foster2020beyond,simchi2020bypassing}, this result above always provides a tighter upper bound, as $\estf{\Delta}\leq \log|\cH|$. For certain (very simple) function classes $\cH$, the quantity $\estf{\Delta}$ can be much smaller than $\log|\cH|$ (for details, see \cref{example:DC-ll-logH}).
More importantly, $\estf{\Delta}$ leads to lower bounds for \emph{any} contextual bandit function class (\cref{thm:logp-upper-lower-CB}).  By contrast, lower bounds for structured contextual bandits in prior work have been proven in a case-by-case fashion (for specific value function classes $\cH$).  

}

\section{Proofs from \cref{sec:logp} and \cref{appdx:logp-more}}\label{appdx:proof-log}

In this section, we mainly focus on no-regret learning, and we present the regret upper and lower bounds in terms of DEC and $\estpd$. The results can be generalized immediately to PAC learning.

\subsection{Proof of \cref{prop:lower-log-p}}

Fix an arbitrary reference model $\oM\in(\Pi\to \Delta(\cO))$ such that \cref{asmp:CKL} holds. We remark that $\oM$ is not necessarily in $\cM$ or $\coM$.

We only need to prove the following fact.

\textbf{Fact.} If $T< \frac{\estp{\Delta}-2}{2\CKL}$, then for any $T$-round algorithm $\alg$, there exists a model $M\in\cM$ such that $\risk(T)\geq \Delta$ with probability at least $\frac12$ under $\PP^{M,\alg}$.

\begin{proof}
By the definition \cref{eqn:def-logp} of $\DV{\cM}$, we know
\begin{align*}
    \frac1{\DV{\cM}}\defeq \sup_{p\in\Delta(\Pi)}\inf_{M\in\cM} ~~p(\pi: \gm(\pi)\leq \Delta).
\end{align*}
Therefore, we have
\begin{align*}
    \inf_{M\in\cM} p_{\oM,\alg}( \pi: \gm(\pi)\leq \Delta )\leq \frac1{\DV{\cM}},
\end{align*}
and hence there exists $M\in\cM$ such that
\begin{align*}
    T<\frac{\log\paren{1/p_{\oM,\alg}( \pi: \gm(\pi)\leq \Delta )}-2}{2\CKL}.
\end{align*}
Notice that by the chain rule of KL divergence, we have
\begin{align*}
     \KLd{ \PP^{M,\alg} }{ \PP^{\oM,\alg} }=\EE^{M,\alg}\brac{ \sum_{t=1}^T \KLd{ M(\pi^t) }{\oM(\pi^t)} }\leq T\CKL.
\end{align*}
Hence, using data-processing inequality,
\begin{align*}
    \KLd{p_{M,\alg}}{p_{\oM,\alg}}<&~ \frac{\log\paren{1/p_{\oM,\alg}( \pi: \gm(\pi)\leq \Delta )}-2}{2} \\
    \leq&~ \KLd{1/2}{p_{\oM,\alg}( \pi: \gm(\pi)\leq \Delta )}.
\end{align*}
This immediately implies $p_{M,\alg}( \pi: \gm(\pi)\leq \Delta )<\frac{1}{2}$ by the monotonicity of KL divergence.
\end{proof}

\subsection{Proof of \cref{thm:logp-upper}}\label{appdx:proof-logp-upper}

In this section, we present an algorithm based on reduction to multi-arm bandits (\cref{alg:reduce}) that achieves the desired upper bound. For the application to bandits with Gaussian rewards, we relax the assumption $R:\cO\to [0,1]$ as follows.
\begin{assumption}
For any $M\in\cM$ and $\pi\in\Pi$, the random variable $R(o)$ is 1-sub-Gaussian under $o\sim M(\pi)$.
\end{assumption}

\newcommand{\Pisub}{\Pi_{\sf sub}}
\newcommand{\balg}{\mathsf{BanditALG}}

Suppose that $\Delta>0$ is given,
and fix a distribution $\pds$ that attains the infimum of \cref{eqn:def-logp}. Based on $\pds$, we consider a reduced decision space $\Pisub \subset \Pi$, generated as
\begin{align*}
    \Pisub=\set{\pi^{(1)},\cdots,\pi^{(N)}}, \qquad \pi^{(1)}, \cdots, \pi^{(N)} \sim \pds~\text{independently,}
\end{align*}
where we set $N=\DV{\cM}\log(1/\delta)$. Then the space $\Pisub$ is guaranteed to contain a near-optimal decision, as follows.
\begin{lemma}\label{lem:pi-sub}
With probability at least $1-\delta$, there exists $\pi\in\Pisub$ such that $\Lm[\Mstar]{\pi}\leq \Delta$.
\end{lemma}

\newcommand{\Msb}{\Ms_{\sf Bandit}}
Therefore, we can then regard $\Ms$ as a $N$-arm bandit instance with action space $\cA=\Pisub$, and for each pull of an arm $\pi\in\cA$, the stochastic reward $r$ is generated as $r=R(o), o\sim \Ms(\pi)$. Then, we pick a standard bandit algorithm $\balg$, e.g. the UCB algorithm (see e.g. \citet{lattimore2020bandit}), and apply it to the multi-arm bandit instance $\Msb$, and the guarantee of $\balg$ yields
\begin{align*}
    \sum_{t=1}^T \max_{\pi'\in\Pisub} f\sups{\Ms}(\pi')-f\sups{\Ms}(\pi^t) \leq O\paren{\sqrt{TN\log(T/\delta)}}.
\end{align*}
with probability at least $1-\delta$. Therefore, we have
\begin{align*}
    \regdm(T)\leq&~ T\cdot(\fMss-\max_{\pi'\in\Pisub} f\sups{\Ms}(\pi'))+O\paren{\sqrt{TN\log(T/\delta)}} \\
    \leq&~ T\cdot \Delta+   O\paren{\sqrt{TN\log(T/\delta)}}, 
\end{align*}
with probability at least $1-2\delta$. This gives the desired upper bound, and we summarize the full algorithm in \cref{alg:reduce}. \qed

\paragraph{Proof of \cref{lem:pi-sub}.} By definition,
\begin{align*}
    \PP\paren{ \forall i\in[N], \Lm[\Ms]{\pi^{(i)}}> \Delta }
    \leq&~ \pds\paren{ \pi: \gm[\Ms](\pi)>\Delta }^N \\
    \leq&~ \paren{1-\frac{1}{\DV{\cM}}}^N \\
    \leq &~ \exp\paren{ -\frac{N}{\DV{\cM}} }\leq \delta.
\end{align*}
\qed

\begin{algorithm}[t]
\caption{A reduction algorithm based on the \dct}\label{alg:reduce}
\begin{algorithmic}[1]
\REQUIRE Problem $(\cM,\Pi)$, parameter $\Delta,\delta>0$, $T\geq 1$, Algorithm $\balg$ for multi-arm bandits.
\STATE Set
\begin{align}
    \pds=\arg\inf_{p\in\Delta(\Pi)}\sup_{M\in\cM} ~~\frac{1}{ p(\pi: \gm(\pi)\leq \Delta)}.
\end{align}
\STATE Set $N=\DV{\cM}\log(1/\delta)$ and sample the decision subspace $\Pisub=\set{\pi^{(1)},\cdots,\pi^{(N)}}\subset \Pi$ as
\begin{align*}
    \pi^{(1)}, \cdots, \pi^{(N)} \sim \pds~\text{independently.}
\end{align*}
\STATE Run the bandit algorithm $\balg$ on the instance $\Msb$ for $T$ rounds.
\end{algorithmic}
\end{algorithm}

\subsection{Proof of \cref{lem:frac-cov-to-cov}}

\paragraph{Proof of the upper bound.}
Take a minimal $\Delta$-covering of $\PiM$, i.e., a set $\set{\pi^1,\cdots,\pi^n}\subseteq \Pi$ such that for all $M\in\cM$, there exists $i\in[n]$ such that $\rho(\pim,\pi^i)\leq \Delta$. Therefore, we may consider the distribution $p=\Unif(\set{\pi^1,\cdots,\pi^n})$, which guarantee
\begin{align*}
    \DV{\cM}\leq \sup_{M\in\cM} \frac{1}{p(\pi: \rho(\pim,\pi)\leq \Delta)}\leq n=\Ncov(\PiM,\Delta).
\end{align*}

\paragraph{Proof of the lower bound.}
Consider the maximal $2\Delta$-packing of $\PiM$, i.e., let $\set{\pi^1,\cdots,\pi^m}\subseteq \PiM$ be a maximal set such that $\rho(\pi^i,\pi^j)>2\Delta$ for any $i\neq j$. Then, by the duality between packing and covering, the set $\set{\pi^1,\cdots,\pi^m}$ form a $2\Delta$-covering of $\PiM$, and hence we have $m\geq \Ncov(\PiM,2\Delta)$. On the other hand, the sets $\Pi^i\defeq \set{ \pi: \rho(\pi,\pi^j)\leq \Delta }$ are pairwise disjoint, and hence for any $p\in\DPi$, we have
\begin{align*}
    m\cdot \inf_{M\in\cM} p( \pi: \rho(\pim,\pi)\leq \Delta )\leq \sum_{i=1}^m p( \pi: \rho(\pi^i,\pi)\leq \Delta )\leq 1.
\end{align*}
Therefore, it holds that $\DV{\cM}\geq m\geq \Ncov(\PiM,2\Delta)$.
\qed

\subsection{Proof of \cref{example:Yang-Barron}}\label{appdx:Yang-Barron}

It remains to prove \eqref{eqn:DV-add}. More generally, we prove the following lemma.
\begin{lemma}\label{lem:DV-add}
For model class $\cM=\bigcup_{i=1}^n \cM_i$, it holds that
\begin{align*}
    \DV{\cM}\leq \sum_{i=1}^n \DV{\cM_i}.
\end{align*}
\end{lemma}

\paragraph{Proof of \cref{lem:DV-add}}
For each $i\in[n]$, we define $\lambda_i=\frac{\DV{\cM_i}}{\sum_{j=1}^n \DV{\cM_j}}$.

Fix $p_1,\cdots,p_n\in\DPi$.
Then, let us consider the distribution $p=\sum_{i=1}^n \lambda_ip_i\in\DPi$. For any model $M\in\cM$, there exists $i\in\cM_i$, and hence $p(\pi: \gm(\pi)\leq \Delta) \geq \lambda_i \min_{M_i\in\cM_i} p_i(\pi: \gm[M_i](\pi)\leq \Delta)$. Therefore, it holds that
\begin{align*}
    \inf_{M\in\cM} p(\pi: \gm(\pi)\leq \Delta)
    \geq \min_{i\in[n]} \lambda_i \inf_{M\in\cM_i} p_i(\pi: \gm(\pi)\leq \Delta).
\end{align*}
In other words,
\begin{align*}
    \sup_{M\in\cM}~~\frac{1}{ p(\pi: \gm(\pi) \leq \Delta)}
    \geq \max_{i\in[n]} \frac{1}{\lambda_i} \sup_{M\in\cM_i}~~\frac{1}{ p_i(\pi: \gm(\pi)\leq \Delta) }.
\end{align*}
Taking infimum over $p_1,\cdots,p_n\in\DPi$ gives
\begin{align*}
    \DV{\cM}=&~ \inf_{p\in\DPi}\sup_{M\in\cM}~~\frac{1}{ p(\pi: \gm(\pi) \leq \Delta)}
    \geq \max_{i\in[n]} \frac{1}{\lambda_i} \sup_{M\in\cM_i}~~\frac{1}{ p_i(\pi: \gm(\pi)\leq \Delta) } \\
    \leq&~\inf_{p_1,\cdots,p_n\in\DPi} \max_{i\in[n]} \frac{1}{\lambda_i} \sup_{M\in\cM_i}~~\frac{1}{ p_i(\pi: \gm(\pi)\leq \Delta) } \\
    =&~ \max_{i\in[n]} \frac{1}{\lambda_i} \inf_{p_i\in\DPi}\sup_{M\in\cM_i}~~\frac{1}{ p_i(\pi: \gm(\pi)\leq \Delta) } \\
    =&~ \max_{i\in[n]} \frac{1}{\lambda_i}\cdot \DV{\cM_i}
    =\sum_{i=1}^n \DV{\cM_i},
\end{align*}
where the last line follows from the definition of $\lambda_1,\cdots,\lambda_n$.
This is the desired result.
\qed

\subsection{Proof of \cref{thm:ExO-demo}}\label{appdx:proof-ExO-demo}

\newcommand{\linrdecc}[1]{\overline{\normalfont{\textsf{r-dec}}}^{\rm c}_{#1}}

We first state the following more general result, and \cref{thm:ExO-demo} is then a direct corollary (under \cref{asmp:reg-rdec}).
Analoguous guarantees also hold for PAC learning.

\begin{theorem}\label{thm:ExO-full}
Let $T\geq 1, \delta\in(0,1)$. With suitably chosen prior $q\in\DPi$, \ExOp~(\cref{alg:ExO}) achieves with probability at least $1-\delta$:
\begin{align}
    \frac1T \regdm
    \leq&~ \Delta+\rdeco_{\gamma/8}(\coM) + \gamma\frac{\estp{\Delta}+\log(1/\delta)}{T} \label{eqn:ExO-full-offset}.
\end{align}

In particular, when $\cM$ is a reward-maximization problem class (\cref{example:rew-obs}), \ExOp~achieves (with a suitable parameter $\gamma$) that with probability at least $1-\delta$:
\begin{align}
    \frac1T \regdm
    \leq&~ \Delta+C\sqrt{\log(T)}\cdot \linrdecc{\oeps(T)}(\coM), \label{eqn:ExO-full-constr}
\end{align}
where $C$ is an absolute constant, $\oeps(T)=\sqrt{\frac{\estp{\Delta}+\log(1/\delta)}{T}}$, and the modified version of constrained regret-DEC is defined as
\begin{align}\label{eqn:def-lin-rdecc}
    \linrdecc{\eps}(\coM)\defeq \eps\cdot \sup_{\eps'\in[\eps,1]} \frac{\rdecc_{\eps'}(\coM)}{\eps'}.
\end{align}
\end{theorem}

\begin{remark}[Upper bound without regularity condition]\label{remark:reg-cond-full}
In \cref{thm:ExO-demo} (and Eq. \cref{eqn:ExO-full-constr}), we assume that (1) $\cM$ is a reward-maximization problem, and (2) the constrained regret-DEC of $\coM$ satisfies certain regularity condition (\cref{asmp:reg-rdec}). As we have noted in \cref{remark:reg-cond}, we can relax these two assumptions and obtain a weaker upper bound.

Specifically, we may only assume that $\fm:\Pi\to [0,1]$ is affine with respect to $M\in\coM$ (cf. \cref{thm:exo-to-dec}). In this case, we can still bound the regret of \ExOp~as
\begin{align}
    \frac1T \regdm
    \leq&~ \Delta+\rdecc_{\sqrt{\gamma/8}}(\coM) + \gamma\frac{\estp{\Delta}+\log(1/\delta)}{T} \label{eqn:ExO-full-constr-weak},
\end{align}
which follows from \eqref{eqn:ExO-full-offset} and the fact that (by definition)
\begin{align*}
    \rdeco_\gamma(\coM)\leq \rdecc_{\sqrt{\gamma}}(\coM), \qquad \forall \gamma>0. 
\end{align*}
In particular, using \eqref{eqn:ExO-full-constr-weak} above, we can show that (omitting poly-logarithmic factors)
\begin{align*}
    \Ts{\Delta} \leqsim \frac{1}{\Delta}\cdot \Tdec{\coM}{\Delta/3} \cdot \estp{\Delta/2}.
\end{align*}
This is worse than the upper bound of \cref{thm:logp-upper-lower} by (roughly) a factor of $\Delta^{-1}$. 
\end{remark}

\paragraph{Proof of \cref{thm:ExO-full}.}
By the definition \cref{eqn:def-logp} of $\DV{\cM}$, we know
\begin{align*}
    \frac1{\DV{\cM}}\defeq \sup_{p\in\Delta(\Pi)}\inf_{M\in\cM} ~~p(\pi: \gm(\pi)\leq \Delta).
\end{align*}
Therefore, there exists $q\in\DPi$ such that %
\begin{align*}
    \inf_{M\in\cM} q( \pi: \gm(\pi)\leq \Delta )\geq \frac1{\DV{\cM}},
\end{align*}
We then instantiate \cref{alg:ExO} with such a prior $q$, and \eqref{eqn:ExO-full-offset} follows immediately from \cref{thm:ExO-upper}. To prove \eqref{eqn:ExO-full-constr}, we invoke the following structural result that relates offset DEC to constrained DEC.
\begin{theorem}\label{thm:rdeco-to-rdecc}
Suppose that \cref{asmp:obs-rew} holds for the model class $\cM$. 
Then for any $\eps\in(0,1]$, it holds that
\begin{align*}
    \inf_{\gamma>0}\paren{ \rdeco_{\gamma}(\cM)+\gamma\eps^2 }\leq \paren{3\sqrt{\floor{\log_2(2/\eps)}}+2}\cdot \paren{ \linrdecc{\eps}(\cM) +\Lipr\eps}.
\end{align*}
\end{theorem}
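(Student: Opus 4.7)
\textbf{Plan for \cref{thm:rdeco-to-rdecc}.}
We prove the conversion via a dyadic peeling argument across Hellinger scales. Fix any reference model $\oM \in \co(\cM)$ nearly attaining $\rdeco_\gamma(\cM)$, and define the scales $\eps_k = 2^k\eps$ for $k = 0, 1, \ldots, K$ with $K = \lfloor\log_2(2/\eps)\rfloor$, chosen so that $\eps_K^2 > 1$ upper-bounds any feasible expected squared Hellinger radius. For each $k$, let $p_k \in \Delta(\Pi)$ be a near-optimizer of the constrained regret DEC at scale $\eps_k$: every $M \in \cM$ with $\EE_{\pi \sim p_k}\DH{M(\pi), \oM(\pi)} \leq \eps_k^2$ satisfies $\EE_{p_k}[g^M] \leq \rdecc_{\eps_k}(\cM, \oM)$, which is at most $(\eps_k/\eps)\linrdecc{\eps}(\cM)$ by \cref{eqn:def-lin-rdecc} whenever $\eps_k \leq 1$.

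\textbf{Mixture and per-scale analysis.}
Take $p = \frac{1}{K+1}\sum_{k=0}^K p_k$ as the candidate distribution for $\rdeco_\gamma(\cM, \oM)$. Given any $M \in \cM$, define the \emph{critical scale} $k^\star(M) = \min\{k : \EE_{p_k}\DH{M, \oM} \leq \eps_k^2\}$; this minimum exists since $\eps_K^2 > 1$ bounds all squared Hellinger quantities. The minimality of $k^\star$ yields the key lower bound
\[
\EE_p\DH{M, \oM} \geq \frac{1}{K+1}\EE_{p_{k^\star-1}}\DH{M, \oM} > \frac{\eps_{k^\star}^2}{4(K+1)},
\]
so that $\eps_{k^\star} \lesssim \sqrt{(K+1)\EE_p\DH{M, \oM}}$ whenever $k^\star > 0$. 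For the off-critical scales $k \neq k^\star$, invoke \cref{asmp:obs-rew} via the decomposition $g^M(\pi) = g^{\oM}(\pi) + (g^M(\pi) - g^{\oM}(\pi))$: the pointwise bound $|f^M - f^{\oM}| \leq \Lipr\dH(M, \oM)$ combined with Jensen's inequality $\EE\dH \leq \sqrt{\EE\DH}$ yields
\[
\EE_{p_k}[g^M] \leq \EE_{p_k}[g^{\oM}] + 2\Lipr\sqrt{\EE_{p_k}\DH{M, \oM}},
\]
where the reference-model regret $\EE_{p_k}[g^{\oM}]$ is itself bounded by $\rdecc_{\eps_k}(\cM, \oM)$ when $\oM \in \cM$ (since $\oM$ is trivially admissible under $p_k$), and by a convexity argument for $\oM \in \co(\cM)$ more generally.

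\textbf{Aggregation and $\gamma$-optimization.}
Averaging over $k$ and invoking Cauchy--Schwarz via $\frac{1}{K+1}\sum_k \sqrt{\EE_{p_k}\DH{M,\oM}} \leq \sqrt{\EE_p\DH{M,\oM}}$ produces an estimate of the form
\[
\EE_p[g^M] \leq C_1\bigl(\linrdecc{\eps}(\cM) + \Lipr\eps_{k^\star}\bigr) + C_2\Lipr\sqrt{\EE_p\DH{M, \oM}},
\]
and the bound $\eps_{k^\star} \lesssim \sqrt{(K+1)\EE_p\DH{M,\oM}}$ folds $\Lipr\eps_{k^\star}$ into the Hellinger square-root term with an additional $\sqrt{K+1}$ factor. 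Subtracting $\gamma\EE_p\DH{M, \oM}$ and applying AM--GM of the form $a\sqrt{x} - \gamma x \leq a^2/(4\gamma)$ absorbs the Hellinger square-root into the offset penalty, leaving a residual of order $(K+1)(\linrdecc{\eps}(\cM)/\eps + \Lipr)^2/\gamma$. Finally, adding the $\gamma\eps^2$ penalty and optimizing $\gamma \asymp \sqrt{K+1}(\linrdecc{\eps}(\cM)/\eps + \Lipr)/\eps$ balances the two sides and produces the claimed factor $3\sqrt{K+1}+2$ multiplying $(\linrdecc{\eps}(\cM) + \Lipr\eps)$.

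\textbf{Main obstacle.}
The principal difficulty is controlling $\EE_{p_k}[g^M]$ at non-admissible scales $k \neq k^\star(M)$, where the constrained DEC provides no direct guarantee. Since \cref{asmp:obs-rew} only controls \emph{pointwise} differences $|f^M(\pi) - f^{\oM}(\pi)|$, one must re-express $g^M$ via $g^{\oM}$ plus Hellinger errors, then bound the residual $\EE_{p_k}[g^{\oM}]$ (which is free of $M$) using the constrained DEC at scale $\eps_k$ via the trivial admissibility of $\oM$. A second subtlety is that $g^M(\pi)$ involves $f^M(\pi_M)$ evaluated at the $M$-optimal policy, possibly not covered by $p_k$; this is absorbed in the $g^{\oM}$ term via the identity $f^M(\pi_M) - f^M(\pi) = (f^{\oM}(\pi_{\oM}) - f^{\oM}(\pi)) + \bigl[(f^M(\pi_M) - f^{\oM}(\pi_{\oM})) - (f^M(\pi) - f^{\oM}(\pi))\bigr]$, noting that $f^M(\pi_M) - f^{\oM}(\pi_{\oM}) \leq \Lipr\dH(M(\pi_M), \oM(\pi_M)) \leq \Lipr$. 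The Cauchy--Schwarz aggregation is what produces $\sqrt{K+1}$ rather than $K+1$; tracking constants through this step, the $\eps_{k^\star}$ bound, and the final $\gamma$ choice yields the explicit prefactor $3\sqrt{K+1}+2$.
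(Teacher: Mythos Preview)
Your peeling idea is right in spirit, but the plan has a quantitative gap that breaks the final bound.

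\textbf{The reference-regret term does not close under uniform weights.} With your uniform mixture $p=\frac{1}{K+1}\sum_k p_k$ and increasing scales $\eps_k=2^k\eps$, the contribution $\EE_p[g^{\oM}]=\frac{1}{K+1}\sum_k \EE_{p_k}[g^{\oM}]$ is bounded by $\frac{1}{K+1}\sum_k \rdecc_{\eps_k}(\cM,\oM)\leq \frac{1}{K+1}\sum_k \tfrac{\eps_k}{\eps}\linrdecc{\eps}(\cM)$. Since $\sum_k 2^k\asymp 2^{K}\asymp 1/\eps$, this is of order $\frac{\linrdecc{\eps}(\cM)}{\eps(K+1)}=\frac{D}{K+1}$ with $D=\linrdecc{\eps}(\cM)/\eps$. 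This term is a constant (independent of the Hellinger budget), so it is simply added to the final bound; for small $\eps$ it dwarfs the target $(3\sqrt{K+1}+2)(\linrdecc{\eps}(\cM)+\Lipr\eps)\asymp \sqrt{K}\,D\eps$. Concretely, you would need $\frac{D}{K+1}\lesssim \sqrt{K}\,D\eps$, i.e.\ $\eps\gtrsim (K+1)^{-3/2}$, which fails since $\eps\asymp 2^{-K}$. The paper avoids this by taking \emph{geometric} weights $\lambda_i\propto 1/\eps_i$ with a free scale $c$, so each $\lambda_i\rdecc_{\eps_i}\leq cD\eps$ is flat across levels and $\EE_p[g^{\oM}]\leq (cK+1)D\eps$; the $\sqrt{K}$ prefactor then arises by balancing $cK$ against $\gamma\eps^2\asymp D\eps/c$, not from your Cauchy--Schwarz step.

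\textbf{The off-critical decomposition is missing a term.} Your claimed bound $\EE_{p_k}[g^M]\leq \EE_{p_k}[g^{\oM}]+2\Lipr\sqrt{\EE_{p_k}\DH{M,\oM}}$ drops the quantity $A\defeq f^M(\pi_M)-f^{\oM}(\pi_{\oM})$, which is constant in $k$ and is exactly what carries the dependence on the critical scale. The fallback $A\leq \Lipr\dH(M(\pi_M),\oM(\pi_M))\leq \Lipr$ you mention in the obstacle paragraph is far too coarse (it injects a constant $\Lipr$ into the final bound). The correct control is $A\leq \rdecc_{\eps_{k^\star}}(\cM\cup\{\oM\},\oM)+\Lipr\eps_{k^\star}\leq (D+\Lipr)\eps_{k^\star}$, obtained by applying the constrained-DEC guarantee at the critical scale to both $g^M$ and $g^{\oM}$ and using \cref{asmp:obs-rew}; your aggregate estimate should therefore carry a $(D+\Lipr)\eps_{k^\star}$ term, not only $\Lipr\eps_{k^\star}$. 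This part is repairable along the lines above, but the uniform-weight issue is not: you will need to introduce the second free parameter in the weights to get the stated constant.
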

Under the assumption that $\eps\mapsto \rdecc_\eps(\coM)$ is of moderate decay with a constant $\creg$, we have
\begin{align*}
    \linrdecc{\eps}(\coM)\leq \creg\rdecc{\eps}(\cM), \qquad \forall \eps\in(0,1].
\end{align*}
Hence, \eqref{eqn:ExO-full-constr} follows from \cref{eqn:ExO-full-offset} as long as the parameter $\gamma$ is chosen according to \eqref{thm:rdeco-to-rdecc}.
\qed

\subsubsection{Proof of \cref{thm:rdeco-to-rdecc}}

Fix a $\eps\in(0,1]$ and $\oM\in\coM$. We only need to prove the following result:

\textbf{Claim.} Suppose that $\rdecc_{\eps'}(\cM,\oM)\leq D\eps'$ for all $\eps'\in[\eps,1]$. Then there exists $\gamma=\gamma(D,\eps)$ such that
\begin{align*}
    \rdeco_{\gamma}(\cM)+\gamma\eps^2\leq \paren{3\sqrt{\floor{\log_2(2/\eps)}}+2}\cdot (D+\Lipr)\eps.
\end{align*}

Set $K=\floor{\log_2(1/\eps)}+1$ and fix a parameter $c=c(\eps)\in(0, \frac12]$ to be specified later in proof.
Define $\eps_i\defeq 2^{-i}$ for $i=0,\cdots,K-1$ and $\eps_K=\eps$. We also define $\lambda_i\defeq c\eps\cdot 2^{i}$ for $i=0,\cdots,K-1$, and $\lambda_K=1-\sum_{i=0}^{K-1} \lambda_i\geq c$.

Define $\Delta_i=\rdecc_{\eps_i}(\cM\cup\set{\oM},\oM)$, and let $p_i$ attains the $\inf_{p}$. In the following, 
we choose $\gamma=\gamma(D, \eps)=\frac{9(D+\Lipr)}{8c\eps}$.

By definition of $p_i$, it holds that
\begin{align*}
    \EE_{\pi\sim p_i} [\Lm{\pi}]\leq \Delta_i, \qquad
    \forall M\in \cM\cup\set{\oM}: \EE_{\pi\sim p_i} \DH{ M(\pi), \oM(\pi) }\leq \eps_i^2.
\end{align*}
In particular, we may abbreviate $\cM_i\defeq \set{M\in\cM: \EE_{\pi\sim p_i} \DH{ M(\pi), \oM(\pi) }\leq \eps_i^2}$, and it holds
\begin{align*}
    \fm(\pim)\leq f\sups{\oM}(\pi\subs{\oM})+\Delta_i+\Lipr \eps_i, \qquad
    \forall M\in \cM_i.
\end{align*}

Next, we choose $p=\sum_{i=0}^K \lambda_i p_i\in\Delta(\Pi)$, and we know
\begin{align*}
    \EE_{\pi\sim p} [\Lm[\oM]{\pi}]
    \leq \sum_{i=0}^K \lambda_i\EE_{\pi\sim p} [\Lm[\oM]{\pi}]
    \leq \sum_{i=0}^K \lambda_i \Delta_i =:\Delta.
\end{align*}

Fix a $M\in\cM$. Let $j\in\set{0,\cdots,K}$ be the maximum index such that $M\in\cM_j$. Such a $j$ must exists because $\cM=\cM_0$. 
Now,
\begin{align*}
    \EE_{\pi\sim p} [\Lm{\pi}]
    =&~
    \fm(\pim) - f\sups{\oM}(\pi\subs{\oM}) + \EE_{\pi\sim p} [\Lm[\oM]{\pi}] + \EE_{\pi\sim p} [f\sups{\oM}(\pi)-\fm(\pi)] \\
    \leq&~ \Delta_j+\Lipr\eps_j+\Delta+\Lipr \EE_{\pi\sim p}\DHr{ M(\pi), \oM(\pi) }.
\end{align*}
Case 1: $j=K$. Then, using AM-GM inequality, we have
\begin{align*}
    \EE_{\pi\sim p} [\Lm{\pi}]-\gamma\EE_{\pi\sim p}\DH{ M(\pi), \oM(\pi) } \leq \Delta_K+\eps_K+\Delta+\frac{\Lipr^2}{4\gamma}.
\end{align*}
Case 2: $j<K$. Then for each $i>j$, it holds that $\EE_{\pi\sim p_j}\DH{ M(\pi), \oM(\pi) }>\eps_j^2$, and hence
\begin{align*}
    \EE_{\pi\sim p}\DH{ M(\pi), \oM(\pi) } 
    \geq \sum_{i=j+1}^K \lambda_j \EE_{\pi\sim p_j}\DH{ M(\pi), \oM(\pi) } 
    \geq \sum_{i=j+1}^K \lambda_j \eps_j^2 \geq
    \frac{c\eps\cdot\eps_j}{2}.
\end{align*}
Therefore, using AM-GM inequality,
\begin{align*}
    &\EE_{\pi\sim p} [\Lm{\pi}]-\gamma\EE_{\pi\sim p}\DH{ M(\pi), \oM(\pi) } \\
    \leq&~ \Delta_j+\Lipr\eps_j+\Delta+\frac{9\Lipr^2}{4\gamma} - \frac89 \gamma \EE_{\pi\sim p}\DH{ M(\pi), \oM(\pi) }  \\
    \leq&~
    \Delta_j+\Lipr\eps_j+\Delta+\frac{9\Lipr^2}{4\gamma} -\frac{8c\gamma \eps}{9}\eps_j.
\end{align*}
By our choice of $\gamma$, we have $\gamma\eps\geq \frac{9}{8 c}\paren{\frac{\Delta_j}{\eps_j}+\Lipr}$, and hence in both cases, we have
\begin{align*}
    \EE_{\pi\sim p} [\Lm{\pi}]-\gamma\EE_{\pi\sim p}\DH{ M(\pi), \oM(\pi) } \leq \Delta+(D+\Lipr)\eps+\frac{9\Lipr^2}{4\gamma}.
\end{align*}
Note that by definition, we have $\Delta\leq (cK+1)D\eps$ and $\gamma(\eps)\cdot \eps= \frac{9}{8 c}\paren{D+\Lipr}$, and hence
\begin{align*}
    \rdeco_{\gamma(\eps)}(\cM,\oM)\leq \paren{ 2D+\Lipr+cKD+2c\Lipr }\eps.
\end{align*}
Thus,
\begin{align*}
    \rdeco_{\gamma(\eps)}(\cM,\oM)+\gamma(\eps)\eps^2\leq \paren{ 2D+\Lipr+cK(D+\Lipr)+\frac{9(D+\Lipr)}{8c} }\eps_K.
\end{align*}
Balancing $c$ and re-arranging yields the desired result.
\qed

\subsection{Proof of \cref{thm:logp-upper-lower}}

Note that the minimax-optimal sample complexity $T^\star(\cM,\Delta)$ is just a way to better illustrate our minimax regret upper and lower bounds. By the definition of $T^\star(\cM,\Delta)$, we have
$$
\frac{1}{T}\regs{T}=\sup\{ \Delta: \Ts{\Delta}\leq T \}.
$$
Under \cref{asmp:reg-rdec},  the regret upper bound in Theorem~\ref{thm:ExO-demo}  implies (up to $\creg, \CKL$ and logarithmic factors)
\begin{align*}
    \frac{1}{T}\regs{T} \leqsim \rdecc_{\oeps(T)}(\cM).
\end{align*}
And the regret lower bound \cref{thm:r-dec-lower} implies (up to $\creg$ and logarithmic factors)
\begin{align*}
\rdecc_{\ueps(T)}(\cM) \leqsim \frac{1}{T}\regs{T}.
\end{align*}
By the definition of $\Ts{\Delta}$ and $\Tdec{\cM}{\Delta}$, we then have
\begin{align*}
\Tdec{\cM}{\Delta}\leqsim \Ts{\Delta} \leqsim \Tdec{\coM}{\Delta} \cdot \estp{\Delta/2}.
\end{align*}
Together with Theorem 10, we prove that
\begin{align*}
    \max\sset{ \Tdec{\cM}{\Delta} , \frac{\estp{\Delta}}{\CKL}} \leqsim \Ts{\Delta} \leqsim \Tdec{\coM}{\Delta} \cdot \estp{\Delta/2}.
\end{align*}

\qed

\subsection{Proof of \cref{thm:bandit-co-upper-lower}}\label{appdx:proof-bandit-co}

\newcommandx{\valm}[1][1=M]{h\sups{#1}}

For the upper bound, we work with more general noise structure (beyond Gaussian noises).
We define $\cMF$ to be the class of all bandits models with mean reward function in $\cH$ and variance bounded by 1. Specifically, for any $M\in\cMF$, it is associated with a value function $\valm\in\cH$, such that for any decision $\pi\in\Pi$, the distribution $M(\pi)$ of the random reward $r$ has mean $\valm(\pi)$ and variance at most 1. 

We also recall that the subclass $\cMFG\subseteq \cMF$ is the bandit problem class with the standard Gaussian noise.

\paragraph{Proof of \cref{thm:bandit-co-upper-lower}: lower bound of \cref{eqn:lower-upper-bandit-co}.}
The lower bound with $\estf{\Delta}$ is exactly \cref{cor:B-DC}. 

To prove the lower bound with $\Tdec{\cH}{\Delta}$, we need to lower bound the DEC of $\cMFG$ in terms of the DEC of $\Val$, as follows.
\begin{lemma}\label{lem:rdecc-g-to-value-bandit}
Consider $\cM^+=\cMF[\coH]$ as the class of all reference models (\cref{appdx:recover-more}). Then,
\begin{align}\label{eqn:cMFG-to-F}
    \max_{\oM\in\cM^+}\rdecc_{\eps}(\cMFG\cup\set{\oM},\oM)\geq \rdecc_{2\sqrt{2}\eps}(\Val).
\end{align}
\end{lemma}

Notice that for $\cM^+$, \cref{asmp:obs-rew} holds with $\Lipr=\sqrt{10}$ (by \cref{lemma:diff-mean}).
Therefore, as a corollary of \cref{thm:rdecq-lower}: for any $T$-round algorithm $\alg$, there exists $M^\star\in\cMFG$ such that
\begin{align}\label{eqn:bandit-lower-value}
    \regdm(T) \geq \frac{T}{2}\cdot (\rdecc_{\ueps(T)}(\Val)-5\ueps(T))-1
\end{align}
with probability at least $0.01$ under $\PP\sups{M^\star,\alg}$, where $\ueps(T)=\frac{1}{50\sqrt{T}}$. Therefore, the lower bound in terms of $\Tdec{\Val}{\Delta}$ follows immediately (using regularity condition \cref{asmp:reg-rdec}).

Combining both lower bounds completes the proof.
\qed

\paragraph{Proof of \cref{thm:bandit-co-upper-lower}: upper bound.}
We apply \cref{thm:ExO-full} similar to the proof of \cref{thm:logp-upper-lower} (in \cref{appdx:proof-logp-upper}). 

Using \cref{thm:ExO-full}, we know that \ExOp~can be suitably instantiated on the model class $\cMF$ so that with probability at least $1-\delta$,
\begin{align*}
    \frac1T \regdm
    \leq&~ \Delta+C\sqrt{\log(T)}\cdot \linrdecc{\oeps(T)}(\co(\cMF)),
\end{align*}
where $C$ is an absolute constant, $\oeps(T)=\sqrt{\frac{\estf{\Delta}+\log(1/\delta)}{T}}$. 
We only need to upper bound the $\linrdecc{\eps}(\co(\cMF))$ (defined in \cref{eqn:def-lin-rdecc}) in terms of the DEC of $\coH$.
\begin{lemma}
\label{prop:rdecc-model-to-value-bandit}
For any $\eps\geq 0$, it holds that
\begin{align*}
    \rdecc_{\eps}(\cMF)
    \leq \rdecc_{\sqrt{10}\eps}(\Val)
\end{align*}
\end{lemma}
We also note that $\co(\cMF)\subseteq \cMF[\coH]$ because the model class $\cMF[\coH]$ is convex and it contains $\cMF$. Therefore, we know
\begin{align*}
    \rdecc_{\eps}(\co(\cMF))
    \leq \rdecc_{\eps}(\cMF[\coH])
    \leq \rdecc_{\sqrt{10}\eps}(\coH).
\end{align*}
Using the regularity of $\eps\mapsto \rdecc_{\eps}(\coH)$, we know
\begin{align*}
    \linrdecc{\oeps(T)}(\co(\cMF))
    \leq \creg\cdot  \rdecc_{\sqrt{10}\eps}(\coH).
\end{align*}
This gives the desired upper bound.
\qed

\subsubsection{Proof of \cref{lem:rdecc-g-to-value-bandit}}\label{appdx:proof-rdecc-g-to-value-bandit}

Fix a $\eps\in[0,1]$, we denote $\eps_1=2\sqrt{2}\eps$ and take any $\Delta<\rdecc_{\eps_1}(\Val)$. We pick $\bval\in\co(\Val)$ such that $\rdecc_{\eps_1}(\Val, \bval)>\Delta$. Then, it holds that
\begin{align*}
    \inf_{p\in\DPi}\sup_{\val\in\Val\cup\set{\bval}}\constrs{ \EE_{\pi\sim p}\brac{ \val(\ah)-\val(a) } }{ \EE_{\pi\sim p}(\val(a)-\bval(a))^2\leq \eps_1^2 } \geq \Delta.
\end{align*}
Suppose that $\bval\in\coH$ is given by $\bval=\EE_{h\sim \mu}[h]$ with $\mu\in\Delta(\cH)$.
Then, consider the reference model $\oM\in\cM^+$ with mean reward function $\bval$ and Gaussian noise, i.e. $\oM(\pi)=\normal{\bval(\pi),1}$. Then, we know that for $\cM=\cMFG$,
\begin{align*}
    &~\rdecc_{\eps}(\cMp,\oM) \\
    =&~\inf_{p\in\DPi}\sup_{M\in\cMp}\constrs{ \EE_{\pi\sim p}[\gm(\pi)] }{ \EE_{\pi\sim p} \DH{ M(\pi), \oM(\pi) }\leq \eps^2 } \\
    =&~ \inf_{p\in\DPi}\sup_{\val\in\Val\cup\set{\bval}}\constrs{ \EE_{\pi\sim p}[\val(\ah)-\val(\pi)] }{ \EE_{\pi\sim p} \DH{ \normal{\val(\pi),1}, \normal{\bval(\pi), 1} }\leq \eps^2 } \\
    \geq&~ \inf_{p\in\DPi}\sup_{\val\in\Val\cup\set{\bval}}\constrs{ \EE_{\pi\sim p}\brac{ \val(\ah)-\val(\pi) } }{ \EE_{\pi\sim p}(\val(\pi)-\bval(\pi))^2\leq 8\eps^2 } \geq\Delta,
\end{align*}
where the last line follows from \cref{lemma:diff-mean}.
Taking $\Delta\to\rdecc_{\eps_1}(\Val)$ completes the proof of \cref{eqn:cMFG-to-F}.
\qed

\subsubsection{Proof of \cref{prop:rdecc-model-to-value-bandit}}\label{appdx:proof-rdecc-model-to-value-bandit}

Fix a reference model $\oM\in \co(\cMF)$. By definition, we know the mean reward function $\valm[\oM]$ of $\oM$ belongs to $\coH$, i.e. $\oM\in \cMF[\coH]$. Therefore, for any model $M\in\cMF$ and decision $\pi\in\Pi$, by \cref{lemma:diff-mean}, 
\begin{align*}
    \DH{M(\pi),\oM(\pi)}\geq \frac{1}{10}\abs{ \valm(\pi)-\valm[\oM](\pi) }^2.
\end{align*}
Therefore, for $\cM=\cMF$,
\begin{align*}
    &~\rdecc_{\eps}(\cMp,\oM) \\
    =&~\inf_{p\in\DPi}\sup_{M\in\cMp}\constrs{ \EE_{\pi\sim p}[\gm(\pi)] }{ \EE_{\pi\sim p} \DH{ M(\pi), \oM(\pi) }\leq \eps^2 } \\
    \geq &~\inf_{p\in\DPi}\sup_{M\in\cMp}\constrs{ \EE_{\pi\sim p}[\gm(\pi)] }{ \EE_{\pi\sim p} \abs{ \valm(\pi)-\valm[\oM](\pi) }^2 \leq 10\eps^2 } \\
    =&~ \inf_{p\in\DPi}\sup_{\val\in\Val\cup\set{\bval}}\constrs{ \EE_{\pi\sim p}\brac{ \val(\ah)-\val(\pi) } }{ \EE_{\pi\sim p}(\val(\pi)-\bval(\pi))^2\leq 8\eps^2 } \\
    =&~ \rdecc_{\sqrt{10}\eps}(\cH\cup\set{\bval}, \bval),
\end{align*}
where the second equality follows from the fact that when $\hm=h$, we have $\gm(\pi)=h(\pih)-h(\pi)$.
Taking supremum over $\oM$ completes the proof.
\qed

\subsection{Proof of \cref{thm:logp-upper-lower-CB}}\label{appdx:proof-logp-upper-lower-CB}
\newcommand{\coF}{\co(\Val)}
\newcommand{\num}{\nu\subs{M}}
\newcommandx{\Nm}[1][1=M]{\mathsf{R}\sups{#1}}
\newcommand{\noise}{\mathsf{n}}
\newcommand{\cHc}{\cH|_c}

Similar to \cref{appdx:proof-bandit-co}, we consider a larger model class $\cMF$ of models with general noise structure. A model $M\in\cMF$ is specified by a context distribution $\num\in\Delta(\cC)$, a reward function $\valm\in\cH$, and a reward distribution $\Nm(\cdot|\cdot,\cdot)$, such that for any $c\in\cC, a\in\cA$, $r\sim \Nm(\cdot|c,a)$ has mean $\valm(c,a)$ and variance at most 1. The model $M$ is then given by
\begin{align*}
    (c,a,r)\sim M(\pi):\quad 
    c\sim \num, a=\pi(c), r\sim \Nm(\cdot|c,a).
\end{align*}
The model class $\cMF$ is defined to be the set of all possible models described above.

\paragraph{Proof of \cref{thm:logp-upper-lower-CB}: lower bound.}
The lower bound with $\estf{\Delta}$ follows immediately by applying \cref{prop:lower-log-p} to the class $\cMFG$, which admits $\CKL=\bigO{\log|\Cxt|}$ in \cref{asmp:CKL} (as shown in \cref{example:CBs}). 

On the other hand, the lower bound with $\Tdec{\Val}{\Delta}$ follows from the reduction to the \emph{per-context} bandits problem. Specifically, for a fixed context $c\in\cC$, $\cHc$ corresponds to a structure bandits class $\cMFG[\cHc]$. Notice that we can naturally regard $\cMFG[\cHc]\subset \cMFG$ by viewing $\cMFG[\cHc]$ as a contextual bandits class with the fixed context $c$. Therefore, by \cref{thm:bandit-co-upper-lower} (specifically \cref{eqn:bandit-lower-value}):
\begin{align*}
    \frac1T\regs[\cMFG]{T}\geq \frac1T\regs[{\cMFG[\cHc]}]{T}\geqsim \rdecc_{\ueps(T)}(\cHc)-6\ueps(T), \qquad \ueps(T)=\frac{1}{50\sqrt{T}}.
\end{align*}
Taking maximum over $c\in\cC$ yields
\begin{align*}
    \frac1T\regs[\cMFG]{T}\geqsim \rdecc_{\ueps(T)}(\cH)-6\ueps(T).
\end{align*}
This gives the desired lower bound with $\Tdec{\cH}{\Delta}$.

Combining both lower bounds completes the proof.
\qed

\paragraph{Proof of \cref{thm:logp-upper-lower-CB}: upper bound.}
We follow the proof strategy of \cref{appdx:proof-bandit-co}. By \cref{thm:ExO-full}, \ExOp~can be suitably instantiated on the problem class $\cMF$ so that with probability at least $1-\delta$:
\begin{align*}
    \frac1T \regdm
    \leq&~ \Delta+C\inf_{\gamma>0} \paren{ \rdeco_{\gamma/8}(\co(\cMF)) + \gamma\frac{\estp{\Delta}+\log(1/\delta)}{T} }.
\end{align*}
We also note that $\co(\cMF)\subseteq \cMF[\coH]$. Therefore, it remains to upper bound the offset DEC of $\cMF[\coH]$.
\begin{lemma}\label{lem:cb-rdecc-to-value}
For $\gamma>0$, it holds that
\begin{align*}
    \rdeco_\gamma(\cMF)\leq \sup_{c\in\cC}\rdeco_{\gamma/2}(\cMF[\cHc]).
\end{align*}
\end{lemma}
Then, we can apply the result of \cref{thm:rdeco-to-rdecc}. From the proof of \cref{thm:rdeco-to-rdecc}, it is not hard to see that: for any $\eps>0$, there exists $\gamma=\gamma(\eps)$ such that for any $c\in\cC$,
\begin{align*}
\rdeco_{\gamma/2}(\cMF[\cHc])+\gamma\eps^2\leqsim \sqrt{\log(2/\eps)}\cdot\paren{ \creg\cdot  \rdecc_\eps(\coH)  +\eps},
\end{align*}
where we also use the regularity condition of $\eps\mapsto \rdecc_\eps(\coH)$.
This immediately gives
\begin{align*}
    \regdm\leq T\Delta+\cO(\creg T\sqrt{\log T})\cdot \rdecc_{\oeps(T)}(\coH),
\end{align*}
where $\oeps(T)=\sqrt{\frac{\estf{\Delta}+\log(1/\delta)}{T}}$.
This is the desired upper bound.
\qed

\subsubsection{Proof of \cref{lem:cb-rdecc-to-value}}\label{appdx:proof-cb-rdecc-to-value}

\newcommand{\cMFx}{\cMF[\cHc]}

Fix a reference model $\oM\in\co(\cMF)$, and then $\oM\in\cMF[\coH]$ by definition. In particular, $\oM$ has mean value function $\fom\in\cH$ and context distribution $\onu\in\DX$. We also know that for each $\cxt\in\Cxt$, $\fom(x,\cdot)\in\co(\cHc)$.

Then, by \cref{lemma:Hellinger-cond}, we also have
\begin{align*}
    2\DH{ M(\pi), \oM(\pi) }
    \geq&~ \EE_{\cxt\sim \num, a=\pi(\cxt)} \DH{ \Nm(r=\cdot|\cxt,a), \Nm[\oM](r=\cdot|\cxt,a) }.
\end{align*}
Thus, we adopt the following notations: For each $\cxt\in\Cxt$ and model $M\in\cMF$, we define $M_{\cxt}\in\cMF[\cHc]$ to be a bandit model such that for every action $a\in\cA$, $M_{\cxt}(a)=\Nm(r=\cdot|\cxt,a)$. 
Then by definition, it holds that
\begin{align*}
    2\DH{ M(\pi), \oM(\pi) }
    \geq&~ \EE_{\cxt\sim \num, a=\pi(\cxt)} \DH{ M_{\cxt}(a), \oM_{\cxt}(a) }.
\end{align*}
Now, combining the inequalities above, we have
\begin{align*}
    &~ \rdeco_\gamma(\cMF,\oM) \\
    =&~\inf_{p\in\DPi}\sup_{M\in\cMF}\EE_{\pi\sim p}[\gm(\pi)] -\gamma \EE_{\pi\sim p} \DH{ M(\pi), \oM(\pi) } \\
    \leq&~ \inf_{p\in\DPi}\sup_{M\in\cMF} \EE_{\pi\sim p}\EE_{\cxt\sim \num, a=\pi(c)}\brac{\valm(\cxt,\pim(c))-\valm(\cxt,a)-\frac{\gamma}{2}\DH{ M_c(a), \oM_c(a) } } \\
    \stackrel{(1)}{=}&~ \inf_{p=(p_{\cxt}), p_{\cxt}\in\DA}\sup_{M\in\cMF} \EE_{\cxt\sim \num, a\sim p_c}\brac{\valm(\cxt,\pim(c))-\valm(\cxt,a)-\frac{\gamma}{2}\DH{ M_c(a), \oM_c(a) } } \\
    \stackrel{(2)}{\leq }&~ \inf_{p=(p_{\cxt}), p_{\cxt}\in\DA}\sup_{M\in\cMF} \sup_{c\in\cC}\EE_{a\sim p_c}\brac{\valm(\cxt,\pim(c))-\valm(\cxt,a)-\frac{\gamma}{2}\DH{ M_c(a), \oM_c(a) } } \\
    \stackrel{(3)}{=}&~ \inf_{p=(p_{\cxt}), p_{\cxt}\in\DA}\sup_{c\in\cC}\sup_{M_c\in\cMF[\cHc]} \EE_{a\sim p_c}\brac{\valm[M_c](\pim[M_c])-\valm[M_c](a)-\frac{\gamma}{2}\DH{ M_c(a), \oM_c(a) } } \\
    \stackrel{(4)}{=}&~ \sup_{c\in\cC}\inf_{p_{\cxt}\in\DA} \sup_{M_c\in\cMF[\cHc]} \EE_{a\sim p_{\cxt}}\brac{ \valm[M_c](\pim[M_c])-\valm[M_c](a)-\frac{\gamma}{2}\DH{ M_{\cxt}(a), \oM_{\cxt}(a) } }\\
    =&~ \sup_{c\in\cC} \rdeco_{\gamma/2}(\cMFx,\oM_{\cxt})
    \leq \sup_{\cxt\in\Cxt} \rdeco_{\gamma/2}(\cMFx),
\end{align*}
where the equality (1) is because for a sequence $(p_{\cxt}\in\Delta(\cA))_{\cxt\in\Cxt}$, there is a corresponding $p\in\DPi$ such that for $\pi\sim p$, we have $\pi(\cxt)\sim p_{\cxt}$ independently; in inequality (2) we bound the expectation over $c\sim \num$ by the supremum $\sup_{c\in\cC}$; the equality (3) follows from the fact that $M_c\in\cMF[\cHc]$ is a bandit model with mean reward function $\valm[M_c](\cdot)=\valm(c,\cdot)$; and the equality (4) is because we can choose $p_c$ separately for every $c\in\cC$. By the arbitrariness of $\oM\in\coM$, we now have
\begin{align*}
    \rdeco_\gamma(\cMF) \leq \sup_{\cxt\in\Cxt} \deco_{\gamma/2}(\cMF[\cHc]).
\end{align*}
\qed

\subsection{Proof of \cref{cor:CB-DC}}\label{appdx:proof-CB-DC}

We follow the notations of \cref{appdx:proof-logp-upper-lower-CB}. By \cref{lem:cb-rdecc-to-value}, we have
\begin{align*}
    \rdeco_\gamma(\cMF) \leq  \sup_{\cxt\in\Cxt} \rdeco_{\gamma/2}(\cMF[\cHc]).
\end{align*}
Notice that for each $\cxt\in\Cxt$, $\cMF[\cHc]$ is a class of $|\cA|$-arm bandits, and hence by \citet[Proposition 5.1]{foster2021statistical} and \cref{lemma:diff-mean}, we have
\begin{align*}
    \rdeco_{\gamma}(\cMFx) \leq \frac{8|\cA|}{\gamma}.
\end{align*}
Therefore, \cref{thm:ExO-full} implies that \ExOp~achieves with probability at least $1-\delta$:
\begin{align*}
    \frac{1}{T}\regdm \leq \Delta+ \frac{16|\cA|}{\gamma}+\gamma\frac{\estf{\Delta}+\log(1/\delta)}{T} .
\end{align*}
Balancing $\gamma>0$ gives the desired upper bound.
\qed

As a remark, we provide an example of function class $\cH$ with $\estf{\Delta} \ll \log|\cH|$.

\begin{example}\label{example:DC-ll-logH}
Suppose that $\cA=\set{0,1}$, and the function class $\cH=\set{h_x}_{x\in\Cxt}$, where
\begin{align*}
    h_x(\cxt,0)=\frac{1}{2}, \qquad 
    h_x(\cxt,1)=\begin{cases}
        1, & \cxt=x, \\
        0, & \cxt\neq x.
    \end{cases}.
\end{align*}
Clearly, we have $\log|\cH|=\log|\cC|$. 

On the other hand, we consider a distribution $p$ over policies, such that $\pi\sim p$ is generated as $\pi(\cxt) \sim \Bern{ \eps }$, independently over all $\cxt\sim \cC$. Then, for any $h=h_x\in\cH$ and $\nu\in\DX$, we have
\begin{align*}
    \EE_{\cxt\sim \nu}\brac{ \val(\cxt,\pih(c))-\val(\cxt,\pi(\cxt)) }=\nu(x)\cdot \frac{1}{2}\indic{ \pi(x)=1 } + \frac12 \EE_{\cxt\sim \nu}\brac{ \indic{\cxt\neq x, \pi(\cxt)=1} }.
\end{align*}
Notice that $\pi(x)=1$ with probability $\Delta$, and conditional on the event $\set{ \pi(x)=1 }$,
\begin{align*}
    \EE_{\pi\sim p}\brac{ \EE_{\cxt\sim \nu}\brac{ \indic{\cxt\neq x, \pi(\cxt)=1} } | \pi(x)=1 }\leq \Delta.
\end{align*}
Hence,
\begin{align*}
    p\paren{ \pi: \EE_{\cxt\sim \nu}\brac{ \val(\cxt,\pih(c))-\val(\cxt,\pi(\cxt)) }\leq \Delta }\geq \frac{\Delta}{2},
\end{align*}
which implies $\estf{\Delta}\leq \log(2/\Delta)$. 

Therefore, for unbounded context space $\cC$, we have $\estf{\Delta} \ll \log|\cH|$ for the function class $\cH$ defined above.
\end{example}

\neurips{
\input{checklist}
}
\end{document}